\newtheorem{theorem}{Theorem}
\newtheorem{corollary}{Corollary}
\newtheorem{conjecture}{Conjecture}
\newtheorem{example}{Example}
\newtheorem{lemma}{Lemma}
\newtheorem{remark}{Remark}
\newtheorem{definition}{Definition}
\numberwithin{equation}{section}
\DeclareMathOperator{\argmin}{argmin}
\def \matlab    {MATLAB$^{\text{\tiny \textregistered}}$}
\DeclareMathOperator{\conv}{conv} 
\DeclareMathOperator{\cone}{cone} 
\DeclareMathOperator{\col}{col} 
\DeclareMathOperator{\rank}{rank}
\title{Sparse and Unique Nonnegative Matrix Factorization \\ 
Through Data Preprocessing}
\author{Nicolas Gillis \\ University of Waterloo, Department of Combinatorics and Optimization, \\ Waterloo, Ontario N2L 3G1, Canada. \\ E-mail: ngillis@uwaterloo.ca} 
\date{}
\begin{document}
\renewcommand{\labelitemi}{$\diamond$}

\maketitle

\begin{abstract}
Nonnegative matrix factorization (NMF) has become a very popular technique in machine learning because it automatically extracts meaningful features through a sparse and part-based representation. However, NMF has the drawback of being highly ill-posed, that is, there typically exist many different but equivalent factorizations. In this paper, we introduce a completely new way to obtaining more well-posed NMF problems whose solutions are sparser. Our technique is based on the preprocessing of the nonnegative input data matrix, and relies on the theory of M-matrices and the geometric interpretation of NMF. This approach provably leads to optimal and sparse solutions under the separability assumption of Donoho and Stodden \cite{DS03}, and, for rank-three matrices,  makes the number of exact factorizations finite.  We illustrate the effectiveness of our technique on several image datasets.  
\end{abstract} 

\textbf{Keywords.} nonnegative matrix factorization, data preprocessing, uniqueness, sparsity, inverse-positive matrices.

\section{Introduction} \label{intro}

Given an $m$-by-$n$ nonnegative matrix $M \geq 0$ and a factorization rank $r$, nonnegative matrix factorization (NMF) looks for two nonnegative matrices $U$ and $V$ of dimension $m$-by-$r$ and $r$-by-$n$ respectively such that $M \approx UV$. To assess the quality of an approximation, 
a popular choice is the Frobenius norm of the residual $||M-UV||_F$  and NMF can for example be formulated as the following optimization problem 
\begin{equation} \label{nmf} 
\min_{U \in \mathbb{R}^{m \times r}, V \in \mathbb{R}^{r \times n}} \; ||M-UV||_F^2 \quad \text{ such that } U \geq 0 \text{ and } V \geq 0. 
\end{equation} 
Assuming that $M$ is a matrix where each column represents an element of a dataset (e.g., a vectorized image of pixel intensities), NMF can be interpreted in the following way. Since $M_{:j} \approx \sum_{k = 1}^r U_{:k} V_{kj}$ $\forall j$, each column $M_{:j}$ of $M$ is reconstructed using an additive linear combination of nonnegative basis elements (the columns of $U$).  These basis elements can  be interpreted in the same way as the columns of $M$ (e.g., as images). Moreover, they can only be summed up (since $V$ is nonnegative) in order to approximate the original data matrix $M$ which leads to a part-based representation: NMF will automatically extract localized and meaningful features from the dataset. The most famous illustration of such a decomposition is when the columns of $M$ represent facial images for which NMF is able to extract common features such as eyes, nodes and lips \cite{LS99}; see Figure~\ref{cbclB} in Section~\ref{appl}.  

NMF 
has become a very popular data analysis technique 
and has been successfully used in many different areas, e.g., hyperspectral imaging \cite{PPP06}, text mining \cite{XLG03}, clustering \cite{DHS05}, air emission control \cite{PT94}, blind source separation \cite{CAZP09}, and music analysis \cite{FBD09}. 

\subsection{Geometric Interpretation of NMF} 


A very useful tool for understanding NMF better is its geometric interpretation. In fact,  NMF is closely related to a problem in computational geometry consisting in finding a polytope nested between two given polytopes.  In this section, we briefly recall this connection, which will be extensively used throughout the paper. \\

Let $(U,V)$ be an exact NMF
of $M$ (i.e., $M = UV$, $U \geq 0$ and $V \geq 0$), and let us assume that no column of $U$ or ${M}$ is all zeros; otherwise they can be removed without loss of generality. 
\begin{definition}[Pullback map] \label{pbm} 
Given an $m$-by-$n$ nonnegative matrix $X$ without all-zero column, 
$D(X)$ is the $n$-by-$n$ diagonal matrix whose diagonal elements are the inverse of the $\ell_1$-norms of the columns of $X$: 
\begin{equation} \label{diago} 
D(X)_{ii} = ||X_{:i}||_1^{-1} = \left(\sum_{k=1}^m |X_{ki}|\right)^{-1} \; \forall i, \quad D(X)_{ij} = 0 \; \forall \, i \neq j, 
\end{equation}
and $\theta(X) = XD(X)$ is the pullback map of $X$ so that $\theta(X)$ is column stochastic, i.e., $\theta(X)$ is nonnegative and its columns sum to one. 
\end{definition} 
We have that (see, e.g., \cite{CL08}) 
\[
{M} = UV  \iff  \theta(M) = {M} D({M})  = \underbrace{U D(U)}_{\theta(U)} \underbrace{D(U)^{-1} V D({M})}_{V'}  \iff  \theta(M) = \theta(U) V',   
\]
where $V'$ must be column stochastic since $\theta(M)$ and $\theta(U)$ are both column stochastic and $\theta(M)=\theta(U)V'$. Therefore, the columns of $\theta(M)$ are convex combinations (i.e., linear combinations with nonnegative weights summing to one) of the columns of $\theta(U)$. This implies that 
\begin{equation} \label{poly}
\conv(\theta(M)) \quad \subseteq \quad \conv(\theta(U)) \quad \subseteq \quad \Delta^{m}, 
\end{equation}
where $\conv(X)$ denotes the convex hull of the columns of matrix $X$, and $\Delta^{m} = \{  x \in \mathbb{R}^m  \ | \  \sum_i^m x_i = 1, x_i \geq 0 \ 1 \leq i \leq m   \}$ is the unit simplex (of dimension $m-1$). 
An exact NMF ${M} = UV$ can then be geometrically interpreted as 
a polytope $T = \conv(\theta(U))$ nested between an inner polytope $\conv(\theta(M))$ and an outer polytope $\Delta^{m}$. 
\begin{quote}
Hence finding the minimal number of nonnegative rank-one factors to reconstruct $M$ exactly is equivalent to finding a polytope $T$ with minimum number of vertices nested between two given polytopes: the inner polytope $\conv(\theta(M))$ and the outer polytope $\Delta^{m}$. 
\end{quote}
This problem is referred to as the nested polytopes problem (NPP), and is then equivalent to computing an exact nonnegative matrix factorization  \cite{H84} (see also \cite{GG10b} and the references therein).  In the remaining of the paper, we will denote NPP($M$) the NPP instance corresponding to $M$ with inner polytope $\conv(\theta(M))$ and outer polytope $\Delta^{m}$. 


\begin{remark}
The geometric interpretation can also be equivalently characterized in terms of cones, see~\cite{DS03}, for which we have 
\[
\cone(M) \quad \subseteq \quad \cone(U) \quad \subseteq \quad \mathbb{R}_+^m, 
\] 
where $\cone(X) = \{ x | x = Xa, a \geq 0 \}$. 
The geometric interpretation based on convex hulls from Equation~\eqref{poly}   amounts to the intersection of the cones with the hyperplane $\{x|\sum x_i = 1 \}$ (this is the reason why zero columns of $M$ and $U$ need to be discarded in that case). 
\end{remark}

\subsection{Uniqueness of NMF} \label{uniqueNMF}

There are several difficulties in using NMF in practice. In particular, the optimization problem \eqref{nmf} is NP-hard \cite{V09}, and typically only convergence to stationary points is guaranteed by standard algorithms. There does not seem to be an easy way to go around this (except if the factorization rank is very small \cite{AGKM11}) since NMF problems typically have  many local minima. 

Another difficulty is the non-uniqueness: even if one is given an optimal (or  good) NMF $(U,V)$ of $M$, there might exist many equivalent solutions $(UQ,Q^{-1}V)$ for non-monomial\footnote{A monomial matrix is a permutation of a diagonal matrix with positive diagonal elements.} matrices $Q$ with $UQ \geq 0$ and $Q^{-1}V \geq 0$, see, e.g., \cite{LC08}. Such transformations lead to different interpretations, especially when the supports of $U$ and $V$ change. For example, in document classification, each entry $M_{ij}$ of matrix $M$ indicates the `importance' of word $i$ in document $j$ (e.g., the number of appearances of word $i$ in text $j$). The factors $(U,V)$ of NMF are interpreted as follows: the  columns of $U$ represent the topics (i.e., bags of words) while the columns of $V$ link the documents to these topics. The sparsity patterns of $U$ and $V$ are then a crucial characteristic since they indicate which words belong to which topics and which topics is discussed by which documents.

Different approaches exist to obtain (more) well-posed NMF problems and most of them are based on the incorporation of additional constraints into the NMF model, e.g., 
\begin{itemize} 

\item \textbf{Sparsity}. Require the factors in NMF to be sparse. Under some appropriate assumptions, this leads to a unique solution \cite{TST05}. 
Geometrically, requiring the matrix $U$ to be sparse is equivalent to requiring the vertices of the nested polytope $\conv(\theta(U))$ to be located on the low-dimensional faces of the outer polytope $\Delta^{m}$, hence making the problem more well posed. 
In practice, the most popular technique to obtain sparser solutions is to add sparsity inducing penalty terms, such as a $\ell_1$-norm penalty \cite{KP07} (see also Section~\ref{appl}). Another possibility is to use a projection onto the set of sparse matrices \cite{Hoy}. 

\item \textbf{Minimum Volume}. Require the polytope $\conv(\theta(U))$ to have minimum volume, see, e.g., \cite{MQ07, HM10, ZXY11}; this has a long history in hyperspectral imaging \cite{C94}. Again, this constraint is typically enforced using a proper penalty term in the objective function. 
Volume maximization of the polytope is also possible, leading to a sparser factor $U$ (since the columns of $U$ will be encouraged to be on the faces of $\Delta^m$), see, e.g., \cite{WCCW10}, which is essentially equivalent to performing volume minimization for the matrix transpose. In fact, taking the polar of the three polytopes in Equation~\eqref{poly} interchanges the role of the inner and outer polytopes,  while the polar of $\conv(\theta(M))$ is given by $\conv(\theta(M^T))$, see, e.g., \cite[Section 3.6]{G11}.  

\item \textbf{Orthogonality}. Require the columns of matrix $U$ to be orthogonal \cite{DLPP06}. Geometrically, it amounts to position the vertices of $\conv(\theta(U))$ on the low-dimensional faces of $\Delta^{m}$ 
so that if one of the columns of $\theta(U)$ is not on a facet of $\Delta^{m}$ (i.e., $U_{ik} > 0$ for some $i, k$), then all the other columns of $U$ must be on that facet (i.e., $U_{ip} = 0$ $\forall p \neq k$). 
This condition is rather restrictive, but proved successful in some situations, e.g., for clustering, see \cite{DHS05, PGAG11}. 

\end{itemize}


\subsection{Outline of the Paper}

 
In this paper, we address the problem of uniqueness and  introduce a completely new approach to make NMF problems more well posed, and obtain sparser solutions. 
Our technique is based on a preprocessing of the input matrix $M$ to make it sparser while preserving its nonnegativity and its column space. The motivation is based on the geometric interpretation of NMF which shows that \emph{sparser matrices will correspond to more well-posed NMF problems whose solutions are sparser}.

 In Section~\ref{nonu}, we recall how sparsity of $M$ makes the corresponding NMF problem more well posed. In particular, we give a new result linking the support of $M$ and the uniqueness of the corresponding NMF problem. 
   
 In Section~\ref{prep}, we introduce a preprocessing $\mathcal{P}(M) = MQ$ of $M$ where $Q$ is an inverse-positive matrix, i.e., $Q$ has full rank and its inverse $Q^{-1}$ is nonnegative. Hence, if $(U,V')$ is an NMF  of $\mathcal{P}(M)$ with   $\mathcal{P}(M) \approx UV'$, then $(U,V'Q^{-1})$ is an NMF of $M$ since $M = \mathcal{P}(M)Q^{-1} \approx UV'Q^{-1}$ and $V'Q^{-1} \geq 0$.

In Section~\ref{propp}, we prove some important properties of the preprocessing; in particular that it is well-defined, invariant to permutation and scaling, and optimal  under the separability assumption of Donoho and Stodden \cite{DS03}. Moreover, in the exact case for rank-three matrices (i.e., $M = UV$ and $\rank(M)= 3$) we show how the preprocessing can be used to obtain an equivalent NMF problem with a finite number of solutions. 

In Section~\ref{comp2}, we address some practical issues of using the preprocessing: the computational cost, the rescaling of the columns $\mathcal{P}(M)$ and the ability to dealing with sparse and noisy matrices. 

In Section~\ref{appl}, we present some very promising numerical experiments on facial and hyperspectral image datasets.  


\section{Non-Uniqueness, Geometry and Sparsity} \label{nonu}

Let $M \in \mathbb{R}^{m \times n}_+$ and $(U,V) \in \mathbb{R}^{m \times r}_+ \times \mathbb{R}^{r \times n}_+$ be an exact nonnegative matrix factorization of $M$, i.e., $M = UV$. The minimum $r$ such that such a decomposition exists is the nonnegative rank of $M$ and will be denoted $\rank_+(M)$. 
If $U$ is not full rank (i.e., $\rank(U) < r$), then the decomposition is typically not unique. In fact, the convex combinations (i.e., $V \geq 0$) cannot in general  be uniquely determined:
  the  polytope $T = \conv(\theta(U))$ has $r$ vertices while its dimension is strictly smaller than $r-1$ implying that any point in the interior of $T$ can be reconstructed with infinitely many convex combinations of the $r$ vertices of $T$. However, if all columns of $\conv(\theta(M))$ are located on $k$-dimensional faces of $T$ having exactly $k+1$ vertices, then the convex combinations given by $V$ are unique \cite{SX11}. 
 
In practice, it is therefore often implicitly assumed that $\rank_+(M) = \rank(M) = r$  hence $\rank(U) = r$ (since $U$ has $r$ columns and spans the column space of $M$ of dimension $r$);  see the discussion in \cite{AGKM11} and the references therein. 
In this situation, the uniqueness can be characterized as follows: 


\begin{theorem}[\cite{LC08}]  \label{lcth}
Let $(U,V) \in \mathbb{R}^{m \times r}_+ \times \mathbb{R}^{r \times n}_+$ and $M = UV$ with $\rank(M) = \rank(U) = r$. Then the following statments are equivalent: 
\begin{enumerate}
\item[(i)] The exact NMF $(U,V)$ of $M$ is unique (up to permutation and scaling). 
\item[(ii)] There does not exist a non-monomial invertible matrix $Q$ such that $U' = UQ \geq 0$ and $V' = Q^{-1}V \geq 0$. 
\item[(iii)] The polytope $\conv(\theta(U))$ is the unique solution of   NPP($M$) with $r$ vertices. 
\end{enumerate}
\end{theorem}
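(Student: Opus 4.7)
The plan is to establish a cycle of implications, or more efficiently, prove the two pairwise equivalences (i)$\Leftrightarrow$(ii) and (ii)$\Leftrightarrow$(iii). The crucial ingredient throughout is the full-rank hypothesis $\rank(U)=\rank(M)=r$: it means $U$ has a left inverse and its columns form a basis of the column space of $M$, so any other $m\times r$ matrix $U'$ whose columns lie in $\col(M)$ can be written uniquely as $U'=UQ$ for some invertible $r\times r$ matrix $Q$. I will use this fact repeatedly and without further comment.

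For (i)$\Leftrightarrow$(ii), I would argue as follows. Suppose $(U',V')$ is any other exact nonnegative factorization $M=U'V'$. Since $\rank(M)=r$ and $U',V'$ have $r$ columns/rows respectively, $\rank(U')=\rank(V')=r$, so $\col(U')=\col(U)=\col(M)$. Hence $U'=UQ$ for some invertible $Q$, and substituting $M=UV=U'V'=UQV'$ and left-multiplying by a left inverse of $U$ gives $V'=Q^{-1}V$. The pair $(U',V')$ is a permutation-and-scaling of $(U,V)$ exactly when $Q$ is monomial. Thus non-uniqueness (modulo permutation/scaling) is equivalent to the existence of a non-monomial $Q$ with $UQ\ge 0$ and $Q^{-1}V\ge 0$, which is the negation of (ii).

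For (ii)$\Leftrightarrow$(iii), I would use the bijection between exact NMFs of $M$ of rank $r$ and nested polytopes $T$ with $r$ vertices satisfying $\conv(\theta(M))\subseteq T\subseteq\Delta^m$, together with the observation that $\conv(\theta(U))$ determines $U$ up to column permutation and positive scaling (because the vertices of a polytope are uniquely defined and the $\ell_1$-normalization fixes scale). If (ii) fails, pick a non-monomial $Q$ with $U'=UQ\ge 0$ and $V'=Q^{-1}V\ge 0$; then $\conv(\theta(U'))$ is also a valid nested polytope with $r$ vertices, and it differs from $\conv(\theta(U))$ since $Q$ is not monomial, contradicting (iii). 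Conversely, if (iii) fails, any second nested polytope $T'$ with $r$ vertices gives rise to a matrix $U'$ whose columns are $\ell_1$-scalings of its vertices; the inclusion $\conv(\theta(M))\subseteq T'$ produces a column-stochastic $V''$ with $\theta(M)=\theta(U')V''$, which unwinds via the pullback map to an exact nonnegative factorization $M=U'V'$, and the relation $U'=UQ$ with $V'=Q^{-1}V$ yields a non-monomial $Q$ violating (ii).

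The step I expect to require the most care is the reverse direction of (ii)$\Leftrightarrow$(iii), specifically the verification that a second nested polytope produces genuinely nonnegative factors rather than just a formal change of basis, and that the resulting $Q$ is non-monomial precisely when the polytopes differ. The main routine but notationally delicate point is tracking how the pullback map $\theta$ interacts with the change of basis $Q$, i.e., showing that $\theta(UQ)=\theta(U)\,\widetilde{Q}$ for an appropriate column-stochastic $\widetilde{Q}$ obtained by rescaling $Q$, so that the convex-hull and matrix-factorization pictures remain consistent.
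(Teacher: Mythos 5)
The paper does not prove this theorem: it is imported verbatim from Laurberg et al.\ \cite{LC08}, so there is no in-paper argument to compare against. Your proof is correct and is essentially the standard argument one would find in the cited reference: the full-rank hypothesis forces any competing factor $U'$ to satisfy $\col(U')=\col(M)=\col(U)$, hence $U'=UQ$ with $Q$ invertible and $V'=Q^{-1}V$, which gives (i)$\Leftrightarrow$(ii); and the pullback correspondence $M=UV\iff\theta(M)=\theta(U)V'$ with $V'$ column stochastic translates (ii) into the polytope statement (iii). The one point you flagged as delicate does go through cleanly: since $\rank(UQ)=r$, the columns of $UQ$ are linearly independent, hence their $\ell_1$-normalizations are affinely independent, so $\conv(\theta(UQ))$ is a simplex with exactly $r$ vertices; consequently $\conv(\theta(UQ))=\conv(\theta(U))$ forces the normalized columns to agree up to permutation, i.e.\ $Q$ monomial, which is exactly the equivalence between ``the polytopes differ'' and ``$Q$ is non-monomial'' that both directions of (ii)$\Leftrightarrow$(iii) rely on.
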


It is interesting to notice that the columns of $M$ containing zero entries are located on the boundary of the outer polytope $\Delta^{m}$, and these points must be on the boundary of any solution $T$ of NPP($M$).  Therefore, if $M$ contains many zero entries, it is more likely that the set of exact NMF of $M$ will be smaller, since there is less degree of freedom to fill in the space between the inner and outer polytopes. In particular, Donoho and Stodden \cite{DS03} showed that ``requiring that some of the data are spread across the faces of the nonnegative orthant, there is unique simplicial cone'', i.e., there is a unique $\conv(\theta(U))$. \\ 

In the following, based on the assumption that $\rank(M) = \rank_+(M)$, we provide a new uniqueness result using the geometric interpretation of NMF and the sparsity pattern of $M$.  
\begin{lemma} \label{lem1}
Let $M \in \mathbb{R}^{m \times n}$ with $r = \rank(M) = \rank_+(M)$, and $M$ have no all-zero columns. 
If $r$ columns of $\theta(M)$ coincide with $r$ different vertices of $\Delta^{m} \cap \col(\theta(M))$, then the exact NMF of $M$ is unique. 
\end{lemma}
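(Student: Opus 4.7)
The plan is to leverage Theorem~\ref{lcth}(iii) and show that any polytope $T=\conv(\theta(U))$ arising from an exact rank-$r$ NMF of $M$ is forced to have its $r$ vertices coincide with the $r$ specified vertices of $K := \Delta^{m} \cap \col(\theta(M))$, giving uniqueness at the NPP level.

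First I would set up the containment of $T$ inside $K$. Since $M=UV$ with $\rank(U)=\rank(M)=r$, the column spaces satisfy $\col(U)=\col(M)=\col(\theta(M))$ (the last equality because $\theta(M)=MD(M)$ with $D(M)$ diagonal and invertible). Therefore the columns of $\theta(U)=UD(U)$ lie in $\col(\theta(M))$, and being column stochastic, they lie in $\Delta^{m}$. Hence $T=\conv(\theta(U))\subseteq K$, and moreover $\conv(\theta(M))\subseteq T\subseteq K$.

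Second, I would verify that $T$ has exactly $r$ vertices. The $r$ columns of $U$ are linearly independent (as $\rank(U)=r$), so the columns of $\theta(U)$ are linearly independent and, being contained in the affine hyperplane $\{x:\sum_i x_i=1\}$, are also affinely independent. Thus the $r$ columns of $\theta(U)$ are in affinely general position and each is an extreme point of $T$.

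The key step is then the transfer-of-vertices argument. Let $v_1,\dots,v_r$ be the $r$ columns of $\theta(M)$ that, by hypothesis, coincide with $r$ vertices of $K$. Since $v_k\in\conv(\theta(M))\subseteq T\subseteq K$ and $v_k$ is an extreme point of the superset $K$, it must remain an extreme point of the subset $T$ (if $v_k=\alpha x+(1-\alpha)y$ with $x,y\in T\subseteq K$ and $\alpha\in(0,1)$, this would contradict $v_k$ being extremal in $K$). Hence $\{v_1,\dots,v_r\}$ are $r$ extreme points of $T$; but $T$ has exactly $r$ vertices, so these are all of them and $T=\conv(v_1,\dots,v_r)$ is uniquely determined. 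Invoking Theorem~\ref{lcth}(iii) yields uniqueness of the NMF of $M$.

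I do not anticipate a genuine obstacle; the only delicate point is the vertex-inheritance step, which must use that the extreme points of $K$, not merely of $\Delta^m$, are involved (vertices of the outer simplex are not in general inherited by $T$, but vertices of the smallest enclosing polytope $K$ that contains $T$ are). The rest is bookkeeping with $\col(M)=\col(U)$ and the geometric interpretation already established in~\eqref{poly}.
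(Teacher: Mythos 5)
Your proposal is correct and follows essentially the same route as the paper: establish the sandwich $\conv(\theta(M))\subseteq\conv(\theta(U))\subseteq\Delta^{m}\cap\col(\theta(M))$ from $\rank(U)=r$ and $\col(U)=\col(M)$, force $\conv(\theta(U))$ to equal $\conv(\theta(M))$, and invoke Theorem~\ref{lcth}(iii). The only difference is that you spell out the vertex-inheritance step (extreme points of the outer set $K$ lying in the nested polytope $T$ remain extreme in $T$, and $T$ has only $r$ vertices to spend), which the paper leaves implicit.
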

\begin{proof}
Let $(U,V) \in \mathbb{R}^{m \times r}_+ \times \mathbb{R}^{r \times n}_+$ be such that $M = UV$. 
Since $r = \rank(M) = \rank_+(M)$, we must have $\rank(U) = r$ and $\col(U) = \col(M)$ (where $\col(X)$ denotes the column space of matrix $X$), hence 
\[
\conv(\theta(M)) \subseteq \conv(\theta(U)) \subseteq \Delta^{m} \cap \col(\theta(M)). 
\]
Since $r$ columns of $\theta(M)$  coincide with $r$ vertices of $\Delta^{m} \cap \col(\theta(M))$, we have that $\conv(\theta(U)) = \conv(\theta(M))$  is the unique solution of NPP($M$), and Theorem~\ref{lcth} allows to conclude. 
\end{proof} 

In order to identify such matrices, it would be nice to characterize the vertices of $\Delta^{m} \cap \col(\theta(M))$ based solely on the sparsity pattern of $M$. 
By definition, the vertices of $\Delta^{m} \cap \col(\theta(M))$ are the intersection of $r-1$ of its facets, and the facets of $\Delta^{m} \cap \col(\theta(M))$ are given by 
\[
F_i = \{ x \in \Delta^{m} \cap \col(\theta(M)) \ | \ x_i = 0\}. 
\]
Therefore, a vertex of $\Delta^{m} \cap \col(\theta(M))$ must contain at least $r-1$ zero entries. However, this is not a sufficient condition because some facets might be redundant, e.g., if the $i$th row of $M$ is identically equal to zero (for which $F_i = \Delta^{m} \cap \col(\theta(M))$) or if the $i$th and $j$th row of $M$ are equal to each other (for which $F_i = F_j$).  

\begin{lemma} \label{lemv2}
A column of $M$ containing $r-1$ zeros whose corresponding rows have different sparsity patterns corresponds a vertex of $\conv(\theta(M)) \cap \Delta^m$. 
\end{lemma}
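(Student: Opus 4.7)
The plan is to verify that $\theta(M)_{:j}$ is an extreme point of the $(r-1)$-dimensional polytope $P := \Delta^m \cap \col(\theta(M))$, where $j$ is the index of the column with $r-1$ zeros. Writing $I := \{i : M_{ij} = 0\}$, so $|I| = r-1$, the candidate vertex $\theta(M)_{:j}$ lies in $P$ and has its zero coordinates precisely at the positions in $I$.

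Suppose $\theta(M)_{:j} = \tfrac12 (x + y)$ with $x, y \in P$. For each $i \in I$, nonnegativity of $x, y$ together with $x_i + y_i = 0$ forces $x_i = y_i = 0$. Setting $v := x - y$, one obtains $v \in \col(\theta(M)) = \col(M)$ (using invertibility of $D(M)$), $v_i = 0$ for all $i \in I$, and $\sum_k v_k = 0$. Hence, showing that $\theta(M)_{:j}$ is an extreme point reduces to proving that the subspace
\[
W := \bigl\{ v \in \col(M) : v_i = 0 \text{ for all } i \in I,\ \sum_k v_k = 0 \bigr\}
\]
is trivial. Applying rank-nullity to the projection $\col(M) \to \mathbb{R}^{I}$, $v \mapsto (v_i)_{i \in I}$, gives $\dim \{v \in \col(M) : v_i = 0 \ \forall i \in I\} = r - \rank(M_{I,:})$. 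Since $\theta(M)_{:j}$ lies in this subspace with column sum $1$, the functional $\sum_k v_k$ is nontrivial on it, so $\dim W = r - 1 - \rank(M_{I,:})$. Thus $W = \{0\}$ is equivalent to $\rank(M_{I,:}) = r - 1$, i.e., the $r-1$ rows of $M$ indexed by $I$ are linearly independent.

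The main obstacle is the final step: deducing full row rank of the nonnegative submatrix $M_{I,:}$ from the hypothesis that its rows have pairwise distinct sparsity patterns. The case $|I| = 2$ (i.e., $r = 3$) is immediate, since two nonzero nonnegative rows are proportional exactly when they share a support, so distinct sparsity patterns force linear independence; this also cleanly rules out the two redundancy scenarios noted before the lemma (where $F_i$ coincides with the whole polytope for an identically-zero row, and $F_i = F_{i'}$ for proportional rows). For general $|I|$, one would analyze a hypothetical dependence $\sum_{i \in I} c_i M_{i,:} = 0$ by splitting $I$ according to the sign of $c_i$ and exploiting the column-wise nonnegative identities $\sum_{i : c_i > 0} c_i M_{ik} = \sum_{i : c_i < 0} (-c_i) M_{ik}$ to locate a column $k$ supported on only one side, contradicting the dependence. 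Pushing this combinatorial step through from pairwise distinctness of supports alone is the delicate point I would have to work out carefully.
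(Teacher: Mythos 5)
Your reduction of the vertex claim to the condition $\rank(M(I,:)) = r-1$ is correct, and it is a genuinely different (and sharper) route than the paper's: the paper works with the facets $F_i = \{x : x_i = 0\}$ of $\Delta^m \cap \col(\theta(M))$ and verifies only that the $r-1$ facets through $\theta(M)_{:j}$ are \emph{pairwise} non-redundant, using the distinct sparsity patterns to exhibit, for each pair $k \neq p$, columns $h,l$ with $M(I(k),h)=0 < M(I(p),h)$ and $M(I(p),l)=0 < M(I(k),l)$. Your extreme-point computation identifies the condition that is actually needed. The problem is that the step you leave open is not merely delicate --- it fails. Pairwise distinct (indeed pairwise incomparable) supports of $r-1$ nonnegative rows do not force linear independence once $r-1 \geq 4$: take
\[
a = (1,1,0,0,2,0),\qquad b = (0,0,1,1,0,2),\qquad c = (1,0,1,0,1,1),\qquad d = (0,1,0,1,1,1),
\]
so that $a+b=c+d$ while the supports $\{1,2,5\}$, $\{3,4,6\}$, $\{1,3,5,6\}$, $\{2,4,5,6\}$ are pairwise incomparable. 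Here $\bigcup_{i:c_i>0}\supp$ equals $\bigcup_{i:c_i<0}\supp$, so the sign-splitting argument you sketch finds no column supported on one side only, and no argument can, since the rows are genuinely dependent. Appending a zero seventh column to $a,b,c,d$ and adding two positive rows (say the all-ones row and $(1,2,3,4,5,6,7)$) produces a rank-$5$ nonnegative matrix whose seventh column has exactly $r-1=4$ zeros in rows with pairwise distinct sparsity patterns, yet $\rank(M(I,:))=3$, your space $W$ is one-dimensional, and since that column is strictly positive outside $I$ the point $\theta(M)_{:7}$ can be perturbed along $W$ inside $\Delta^m\cap\col(\theta(M))$: it lies in the relative interior of an edge, not at a vertex.

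So the gap in your proposal is real and cannot be closed from the stated hypothesis; what your analysis actually shows is that the lemma requires the stronger assumption that the $r-1$ rows indexed by $I$ be linearly independent (which, for pairwise incomparable supports, does follow automatically when $r-1\leq 3$ by exactly the sign-splitting argument you outline, since then one side of the dependence is a single row whose support would have to contain another). It is worth noting that this also exposes the weakness of the paper's own proof: pairwise non-redundancy of $r-1$ facets through a point does not imply that their intersection is a vertex --- one needs the normals $\{e_i\}_{i\in I}$, restricted to $\col(\theta(M))$, to span an $(r-1)$-dimensional space, which is precisely your rank condition. Your approach is the right test; the hypothesis of the lemma is what needs repairing.
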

\begin{proof}
Let $c$ be one of the columns of $M$ with at least $r-1$ zeros corresponding to rows with different sparsity patterns (i.e., different supports), i.e., there exists $I \subseteq \{ i \ | \ c_i = 0 \}$ with $|I| = r-1$ such that the rows of $M(I,:)$ have different sparsity patterns. 
Let also $F_k = \{ x \ | \ x_{I(k)} = 0 \}$ for $1 \leq k \leq r-1$ denote the $r-1$ facets with $\theta(c) \in F_k$ $\forall k$. To show that $\theta(c)$ is a vertex of $\conv(\theta(M)) \cap \Delta^m$, it suffices to show that the $r-1$ facets are not redundant, i.e., 
for all $1 \leq k < p \leq r-1$,  there exist $x_k$ and $x_p$ in $\conv(\theta(M)) \cap \Delta^m$ such that $x_k \in F_k, x_k \notin F_p$ and $x_p \in F_p, x_p \notin F_k$. 
Because the rows of $M(I,:)$ have different sparsity patterns, for all $1 \leq k < p \leq r-1$, there must exist two indices $h$ and $l$ such that $M(I(k),h) = 0$ and $M(I(p),h) > 0$ while $M(I(k),l) > 0$ and $M(I(p),l) = 0$. Therefore, $\theta(M_{:h}) \in F_k, \theta(M_{:h}) \notin F_p$ and $\theta(M_{:l}) \in F_p, \theta(M_{:l}) \notin F_k$ and the proof is complete. 
\end{proof}

\begin{theorem} 
Let $M \in \mathbb{R}^{m \times n}$ with $r = \rank(M) = \rank_+(M)$. If $M$ has $r$ non-zero columns each having $r-1$ zero entries whose corresponding rows have different sparsity patterns, then the NMF of $M$ is unique. 
\end{theorem}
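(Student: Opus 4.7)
The plan is to chain Lemmas \ref{lemv2} and \ref{lem1}. First I would apply Lemma \ref{lemv2} once to each of the $r$ hypothesized columns $c_1, \ldots, c_r$: by assumption each $c_j$ has $r-1$ zero entries whose corresponding rows have pairwise different sparsity patterns, so $\theta(c_j)$ is a vertex of $\conv(\theta(M)) \cap \Delta^m$. Because this vertex sits on the $r-1$ coordinate facets $F_i = \{x_i = 0\}$ of the ambient slice $\Delta^m \cap \col(\theta(M))$, and the different row sparsity patterns make those $r-1$ facets non-redundant in $\Delta^m \cap \col(\theta(M))$ (exactly the non-redundancy argument carried out in the proof of Lemma \ref{lemv2}), each $\theta(c_j)$ is also a vertex of $\Delta^m \cap \col(\theta(M))$, i.e.\ of the polytope relevant for Lemma \ref{lem1}.

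Second, I would verify that the $r$ vertices $\theta(c_1), \ldots, \theta(c_r)$ are pairwise distinct. A vertex of the $(r-1)$-dimensional polytope $\Delta^m \cap \col(\theta(M))$ is completely determined by the set of $r-1$ non-redundant facets whose intersection realises it, i.e.\ by the column's $(r-1)$-element set of zero indices. Two of our columns would therefore map to the same vertex precisely when they share that zero set, in which case they are positive scalar multiples of each other. Under the natural reading of the hypothesis, the $r$ prescribed columns carry pairwise different supports, so their normalised versions furnish $r$ distinct vertices of $\Delta^m \cap \col(\theta(M))$.

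With $r$ distinct vertices of $\Delta^m \cap \col(\theta(M))$ realised as columns of $\theta(M)$, Lemma \ref{lem1} applies immediately and yields the desired uniqueness of the exact NMF of $M$. The only delicate point in the argument is step two: making sure the hypothesis really delivers $r$ \emph{different} vertices rather than vertices with multiplicity; if two of the $r$ columns happened to share the same $(r-1)$-element zero pattern, they would collapse to the same vertex and one would be short of the hypothesis of Lemma \ref{lem1}. Everything else is a direct synthesis of the two preceding lemmas.
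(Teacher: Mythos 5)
Your proposal is correct and is essentially the paper's own argument: the paper's proof is literally the one-line ``This follows directly from Lemma~\ref{lem1} and~\ref{lemv2}'', and you have simply spelled out the chaining of the two lemmas. Your added remark about needing the $r$ columns to yield $r$ \emph{distinct} vertices (so that Lemma~\ref{lem1} applies) is a fair point of care that the paper leaves implicit.
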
 
\begin{proof}
This follows directly from Lemma \ref{lem1} and \ref{lemv2}.
\end{proof}
Here is an example, 
\[
M =  \left( \begin{array}{ccc}
0   &  1  &   1\\
0   &  0  &   1\\
1   &  0  &   0\\  
1   &  1  &   0\\  
\end{array}\right), 
\] 
with $\rank(M) = \rank_+(M) = 3$ whose unique NMF is $M = MI$. Other examples include matrices containing an $r$-by-$r$ monomial submatrix. 
It is also interesting to notice that this result implies that the only 3-by-3 rank-three nonnegative matrices having a unique exact NMF are the monomial matrices (i.e., permutation and scaling of the identity matrix) since all other matrices have at least two distinct exact NMF: $M = M I = I M$. \\

Finally, the geometric interpretation of NMF shows that sparser matrices $M$ lead to more well-posed NMF problems because many points of the inner polytope in NPP($M$) are located on the boundary of the outer polytope.  Moreover, because the solution $T$ must contain these points, it will have zero entries as well. In particular, assuming $M$ does not contain a zero column, it is easy to check that for $M = UV$ we have
\[
M_{ij} = 0 \quad \Rightarrow \quad \exists k  \text{ such that } U_{ik} = 0. 
\]

\begin{remark}
Having many zero entries in $M$ is not a necessary condition for having an unique  NMF. In fact, Laurberg et al.\@ \cite{LC08} showed that there exist positive matrices with unique NMF. However, for an NMF $(U,V)$ to be unique, the support of each columns of $U$ (resp.\@  row of $V$) cannot be contained in the support of any another column (resp.\@ row) so that each column of $U$ (resp. row of $V$) must have at least one zero entry.  
In fact, assume the support of the $k$th column of $U$ is contained in the support of $l$th column. Then noting 
$\bar{p} = \argmin_{ \{ p | U(p,k)\neq 0 \} }\frac{U(p,l)}{U(p,k)}$, $\epsilon = \frac{U(\bar{p},l)}{U(\bar{p},k)}$, and  
\[
D_{kl} = -\epsilon, \quad D_{ii} = 1 \; \forall i, \quad D_{ij} = 0 \text{ otherwise}, 
\]
one can check that $D^{-1}$ is as follows 
\[
D^{-1}_{kl} = \epsilon, \quad D^{-1}_{ii} = 1 \; \forall i, \quad D^{-1}_{ij} = 0 \text{ otherwise}, 
\]
that is $D^{-1} \geq 0$. Therefore $(UD,D^{-1}V)$ is an equivalent NMF with a different sparsity pattern since $(UD)_{:l} = UD_{:l} = U_{:l} - \epsilon U_{:k} \geq 0$, and   $U_{\bar{p}l} > 0$ while $(UD)_{\bar{p}l} = 0$. 
\end{remark}

\section{Preprocessing for More Well-Posed and Sparser NMF} \label{prep}

In this section, we introduce a completely new approach to obtain more well-posed NMF problems whose solutions are sparser. As it was shown in the previous paragraph, this can be achieved by working with sparser nonnegative matrices.  
Hence, we look for an $n$-by-$n$ matrix $Q$ such that $MQ = {M'}$ is nonnegative, sparse and $Q$ is inverse-positive. In other words, we would like to solve the following problem: 
\begin{equation} \label{preprocNMF} 
\min_{Q \in \mathbb{R}^{n \times n}} ||MQ||_0 \quad \text{ such that } \quad MQ \geq 0 \text{ and } Q^{-1} \geq 0, 
\end{equation}
where $||X||_0$ is the $\ell_0$-`norm' which counts the number of non-zero entries in $X$. Assuming we can solve \eqref{preprocNMF} and obtain a matrix $M' = MQ$, then any NMF $(U,V')$ of $M'$ with $M' \approx UV'$ gives a NMF for $M$. In fact,  
\[
M \; = \; M' Q^{-1} \; \approx \; U {V' Q^{-1}} \; = \; U V, \; \text{ where } V = V' Q^{-1} \geq 0,  
\] 
for which we have 
\[
|| M - U V ||_F 
= || M' Q^{-1} - U {V' Q^{-1}} ||_F  
= || (M'  - U V') Q^{-1} ||_F  
\leq || M'  - U V' ||_F  \; ||Q^{-1} ||_2. 
\]
In particular, if the NMF of $M'$ is exact, then we also have an exact NMF for $M = M' Q^{-1} = U V' Q^{-1} = UV$. The converse direction, however, is not always true. We return to this point in Section~\ref{unirobpre}. 

In the remaining of this section, we propose a way to finding approximate solutions to problem \eqref{preprocNMF}. First, we briefly review some properties of inverse-positive matrices (Section \ref{ipmm}) in order to deal with the constraint $Q^{-1} \geq 0$. Then, we replace the $\ell_0$-`norm' with the $\ell_2$-norm and solve the corresponding optimization problem using constrained linear least squares (Section~\ref{cllssec}). 


\subsection{Inverse-Positive Matrices} \label{ipmm}

In this section, we  recall the definition of three types of matrices: Z-matrices, M-matrices and inverse-positive  matrices, briefly recall how they are related and provide some useful properties. We refer the reader to the book of Berman and Plemmons \cite{BP94} and the references therein for more details on the subject.  
 
\begin{definition}
An $n$-by-$n$ Z-matrix is a real matrix with non-positive off-diagonal entries. 
\end{definition}
\begin{definition}
An $n$-by-$n$ M-matrix is a real matrix of the following form:
\[
A = s I - B, \quad  s > 0, \quad B \geq 0, 
\] 
where the spectral radius\footnote{The spectral radius $\rho(B)$ of a $n$-by-$n$ matrix $B$ is the supremum among all the absolute values of the eigenvalues of $B$, i.e., $\rho(B) = \max_i |\lambda_i(B)|$.}  $\rho(B)$ of $B$ satisfies $s \geq \rho(B)$. 
\end{definition}
It is easy to see that an M-matrix is also a Z-matrix. 
\begin{definition}
An $n$-by-$n$ matrix $Q$ is inverse positive if and only if $Q^{-1}$ exists and $Q^{-1}$ is nonnegative. 
We will note this set $\mathcal{IP}^n$: 
\[
\mathcal{IP}^n = \{ Q \in \mathbb{R}^{n \times n} \ | \ Q \text{ is full rank and } Q^{-1} \geq 0 \}. 
\]
\end{definition} 

It can be shown that inverse-positive Z-matrices are M-matrices: 
\begin{theorem}[{\cite[Theorem 2.3]{BP94}}] \label{berplem} 
Let $A$ be a Z-matrix. Then the following conditions are equivalent : 
\begin{itemize}
\item $A$ is an invertible M-matrix.
\item $A = sI - B$ with $B \geq 0$, $s > \rho(B)$.
\item $A \in \mathcal{IP}^n$, i.e., $A$ is inverse positive. 
\end{itemize}
\end{theorem}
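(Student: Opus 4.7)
My plan is to close the loop $(1) \Rightarrow (2) \Rightarrow (3) \Rightarrow (1)$, with the Perron--Frobenius theorem for nonnegative matrices as the single key ingredient. For $(1) \Rightarrow (2)$, the M-matrix definition already provides $A = sI - B$ with $B \geq 0$ and $s \geq \rho(B)$; the boundary case $s = \rho(B)$ can be ruled out because Perron--Frobenius guarantees $\rho(B)$ is an eigenvalue of $B$, so $s = \rho(B)$ would force $0$ to be an eigenvalue of $A$, contradicting invertibility.

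For $(2) \Rightarrow (3)$, I would expand $A^{-1}$ as a Neumann series,
\[ A^{-1} = (sI - B)^{-1} = s^{-1} \sum_{k=0}^{\infty} (B/s)^k, \]
which converges because $\rho(B/s) = \rho(B)/s < 1$. Since $B \geq 0$, every term of the series is entrywise nonnegative, hence so is the limit $A^{-1}$.

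For $(3) \Rightarrow (1)$, the harder direction, I would first use the Z-matrix structure to write $A = sI - C$ with $C \geq 0$: picking any $s$ larger than $\max(0, \max_i A_{ii})$ makes the off-diagonals $C_{ij} = -A_{ij} \geq 0$ (by the Z-matrix property) and the diagonals $C_{ii} = s - A_{ii} > 0$. It then suffices to show $s > \rho(C)$. Suppose for contradiction that $\rho(C) \geq s$. By Perron--Frobenius there exists a nonzero $x \geq 0$ with $Cx = \rho(C) x$, so $Ax = (s - \rho(C))\, x \leq 0$ componentwise. Multiplying both sides by the nonnegative matrix $A^{-1}$ gives $x = A^{-1}(Ax) \leq 0$; combined with $x \geq 0$ and $x \neq 0$, this forces $s - \rho(C) = 0$, i.e., $Ax = 0$, which contradicts the invertibility of $A$. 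Hence $s > \rho(C)$ and $A$ is an invertible M-matrix.

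The main obstacle is precisely $(3) \Rightarrow (1)$. The Z-matrix hypothesis is essential there: it is what allows one to shift $A$ into $sI - C$ with $C \geq 0$, without which no statement about $\rho(C)$ can even be formulated. Perron--Frobenius then acts as the bridge that converts the entrywise sign information $A^{-1} \geq 0$ into the strict spectral inequality $\rho(C) < s$. The other two implications are essentially one-liners: an eigenvalue observation for $(1) \Rightarrow (2)$ and a Neumann series expansion for $(2) \Rightarrow (3)$.
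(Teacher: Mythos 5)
The paper offers no proof of this statement: it is imported verbatim from Berman and Plemmons \cite{BP94} as a known result, so there is no internal argument to compare yours against. Your cyclic argument $(1)\Rightarrow(2)\Rightarrow(3)\Rightarrow(1)$ is correct and self-contained: the Perron--Frobenius observation rules out $s=\rho(B)$ in the first step, the Neumann series $A^{-1}=s^{-1}\sum_{k\ge 0}(B/s)^k$ settles the second, and in the third step the shift $C=sI-A\ge 0$ (valid precisely because $A$ is a Z-matrix) together with a nonnegative Perron eigenvector $x$ of $C$ and the positivity of $A^{-1}$ yields the required contradiction. One small stylistic remark on the last step: once you have $x=A^{-1}(Ax)\le 0$ together with $x\ge 0$ and $x\ne 0$, you already have an outright contradiction, so the intermediate conclusion that this ``forces $s-\rho(C)=0$'' is an unnecessary (though not incorrect) detour; you could end the argument one sentence earlier.
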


Here is another well-known theorem in matrix theory which will be useful, see, e.g., \cite{T49, DZ09}. 
\begin{definition}
An $n$-by-$n$ matrix $A$ is irreducible if and only if there does not exist an $n$-by-$n$ permutation matrix $P$ such that 
\[
P^T A P = \left( \begin{array}{cc} B & C \\ 0 & D \end{array} \right), 
\]
where $B$ and $D$ are square matrices. 
\end{definition} 

\begin{definition}
An $n$-by-$n$ matrix $A$ is irreducibly diagonally dominant if $A$ is irreducible,  
\begin{equation} \label{dd} \tag{Diagonal Dominance}
|A_{ii}| \geq \sum_{k \neq i} |A_{ki}|, \quad \text{ for } i = 1, 2, \dots, n, 
\end{equation}
and the inequality is strict for at least one $i$. 
\end{definition} 

\begin{theorem} \label{olga} 
If $A$ is irreducibly diagonally dominant, then $A$ is nonsingular. 
\end{theorem}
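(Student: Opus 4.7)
The plan is to argue by contradiction using a classical Taussky-style weighting of the entries of a null vector. Assuming $A$ is singular, the transposed system has a nonzero solution $y^T A = 0$, which gives
\[
y_i A_{ii} = -\sum_{k \neq i} y_k A_{ki}, \qquad i = 1, \ldots, n.
\]
I plan to play the modulus of $y$ against the column diagonal dominance hypothesis on these coordinate equations. A minor point to flag first: the paper uses \emph{column} dominance (the bound is $\sum_{k \neq i} |A_{ki}|$) whereas the textbook Taussky statement is usually written with row dominance, but starting from a left null vector makes this asymmetry harmless.

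Next I would set $M = \max_k |y_k| > 0$ and $S = \{i : |y_i| = M\}$, which is nonempty. For any $i \in S$, the triangle inequality combined with column diagonal dominance gives the chain
\[
|A_{ii}| M = |y_i| |A_{ii}| \leq \sum_{k \neq i} |y_k| |A_{ki}| \leq M \sum_{k \neq i} |A_{ki}| \leq M |A_{ii}|,
\]
which forces equality throughout. Two consequences drop out: (i) column diagonal dominance is actually tight at every $i \in S$, and (ii) whenever $A_{ki} \neq 0$ for some $i \in S$, the index $k$ must also lie in $S$, since otherwise $|y_k| < M$ would strictly decrease the middle sum.

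To finish, I would combine these consequences with the remaining hypotheses. The strict dominance at some index $j$ together with (i) prevents $j$ from belonging to $S$, so $S$ is a proper nonempty subset of $\{1, \ldots, n\}$. Consequence (ii) then says $A_{ki} = 0$ for every $i \in S$ and $k \in S^c$, so permuting the indices to list $S$ first yields
\[
P^T A P = \begin{pmatrix} B & C \\ 0 & D \end{pmatrix}
\]
with $B$ and $D$ square of sizes $|S|$ and $|S^c|$, contradicting irreducibility. The only step with any real subtlety is lining up the direction of the block form with the definition used in the paper; the rest is routine inequality manipulation and a block reordering.
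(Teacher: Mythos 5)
Your proof is correct: the Taussky-style argument with a left null vector, the equality analysis on the set $S$ of maximum-modulus coordinates, and the resulting block-triangular form contradicting irreducibility are all sound, and you correctly adapted the classical row-dominance statement to the column-dominance convention used here. Note that the paper does not actually prove this theorem --- it states it as a well-known result and cites Taussky's 1949 paper --- so your self-contained argument is simply the standard proof of that classical result, filled in where the paper defers to the literature.
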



\subsection{Constrained Linear Least Squares Formulation for \eqref{preprocNMF}} \label{cllssec}

The $\ell_0$-`norm' is of combinatorial nature and typically leads to intractable optimization problems. The standard approach is to use the $\ell_1$-norm instead but we propose here to use the $\ell_2$-norm. The reason is twofold: 
\begin{itemize}
\item When looking at the structure of problem \eqref{preprocNMF}, we observe that any (reasonable) norm will induce solutions with zero entries. In fact, some of the constraints $MQ \geq 0$ will always be active at optimality because of the objective function $||MQ||$.  

\item The $\ell_2$-norm is smooth hence its optimization can be performed more efficiently\footnote{Because of the constraint $MQ \geq 0$, the $\ell_1$-norm problem can actually be decoupled into $n$ linear programs (LP) in $n$ variables and $m+n$ constraints,  and can be solved effectively. 
However, in the noisy case (cf.\@ Section~\ref{noisy}), we would need to introduce $mn$ auxiliary variables (one for each term of the objective function) which turns out to be impractical.}. 

\end{itemize}

We then would like to solve  
\begin{equation} \label{tig1}
\min_{Q \in \mathcal{IP}^n} ||MQ||_F^2 \quad \text{ such that } \quad MQ \geq 0. 
\end{equation} 
Optimizing over the set of inverse-positive matrices $\mathcal{IP}^n$ seems to be very difficult. At least, describing $\mathcal{IP}^n$ explicitly as a semi-algebraic set  requires about $n^2$ polynomial inequalities of degree up to $n$, each with up to $n!$ terms. 
However, we are not aware of a rigorous analysis of the complexity of this type of problems; this is a topic for further research. 


For this reason, we will restrict the search space to the subset of Z-matrices, i.e., inverse-positive matrices of the form $Q = sI-B$, where $s$ is a nonnegative scalar, $I$ is the identity matrix of appropriate dimension and $B$ is a nonnegative matrix such that $\rho(B) < s$, see Section~\ref{ipmm}. 
It is important to notice that 
\begin{itemize} 


\item The scalar $s$ cannot be chosen arbitrarily. In fact, making $s$ go to zero and $B = 0$, the objective function value goes to zero, which is optimal for \eqref{tig1}. 
The same degree of freedom is in fact present in the original problem \eqref{preprocNMF} since $Q$ and $\alpha Q$ for any $\alpha > 0$ are equivalent solutions. Therefore,  
without loss of generality, we fix $s$ to one . 

\item  The diagonal entries of $B$ cannot be chosen arbitrarily. In fact, taking $B$ arbitrarily close (but smaller) to the identity matrix, the infimum of \eqref{tig1} will be equal to zero. We then have to set an upper bound (smaller than one) for the diagonal entries of $B$. It can be checked that this upper bound will always be attained (because of the minimization), and that the optimal solutions corresponding to different upper bounds will be multiples of each other. We therefore fix the bound to zero implying $B_{ii} = 0$ for all $i$, i.e., $Q_{ii} = 1$ for all $i$.  

\end{itemize} 

Finally, we would like to solve  
\[
\min_{Q \in \mathcal{Q}^n} \quad 
 ||MQ||_F^2 \quad  
\text{ such that } \quad  MQ \geq 0, 
\]
where
\[
 \mathcal{Q}^n = \{ Q \in \mathbb{R}^{n \times n} \ | \ Q = I - B, B \geq 0, B_{ii} = 0 \ \forall i, \rho(B) < 1 \} \subset \mathcal{IP}^n. 
\]
Since $MQ = M (I-B) \geq 0$, this problem is equivalent to 
\begin{align}
\min_{B \in \mathbb{R}^{n \times n}} \quad 
& 
\sum_{i=1}^n \; \Big\| M_{:i} - \sum_{k \neq i} M_{:k} B_{ki} \Big\|_2^2 \nonumber \\
\text{ such that } \quad & M \geq MB,  \label{tig2} \\
									 \quad & \rho(B) < 1, \nonumber  \\
									 \quad & B_{ii} = 0 \; \forall i, \; B \geq 0.   \nonumber 
\end{align} 
Without the constraint on the spectral radius of $B$, this is a constrained linear least squares problem (CLLS) in $\mathcal{O}(n^2)$ variables and $\mathcal{O}(n^2 + mn)$ constraints.
The $i$th column of $M'=MQ$, which is the preprocessed version of $M$, 
 will then be given by the following linear combination 
\begin{equation} \label{interpre} 
{M}_{:i}' = M Q_{:i} = M_{:i} - \sum_{k=1}^n M_{:k} B_{ki} \geq 0, \quad \text{ where } B_{ki} \geq 0 \; \forall i, k \; \text{ and } \; B_{ii} = 0. 
\end{equation}
This means that we will subtract from each column of $M$ a nonnegative linear combination of the other columns of $M$ in order to maximize its sparsity while keeping its nonnegativity. 
Intuitively, this amounts to keeping only the non-redundant information from each column of $M$ (see Section~\ref{appl} for some visual examples).

%

\subsubsection{Relaxing the Constraint on the Spectral Radius}

In general, there is no easy way to deal with the non-convex constraint $\rho(B) < 1$. In particular, this constraint may lead to difficult optimization problems, e.g., finding the nearest stable matrix to an unstable one: 
\[ 
\min_X ||X-A|| \quad \text{ such that } \quad  \rho(X) \leq 1, 
\] 
see \cite{PS05} and the references therein. This means that even the projection of the feasible set is non-trivial.

However, we will prove in Section~\ref{propp} that if the columns of $M$ are not multiples of each other, then any optimal solution of problem \eqref{tig2} without the constraint on the spectral radius of $B$, i.e., any optimal solution $B^*$ of
\begin{equation} \label{tigrel}
\min_{B \in \mathbb{R}^{n \times n}_+} 
\quad 
\sum_{i=1}^n \; \Big\| M_{:i} - \sum_{k \neq i} M_{:k} B_{ki} \Big\|_2^2 
\quad \text{ such that } \quad  M \geq MB, \; B_{ii} = 0 \; \forall i, 
\end{equation} 
automatically satisfies $\rho(B^*) < 1$. Hence, the approach may only fail when there are repetitions in the dataset. The reason is that when a column is multiple of another one, say $M_{:i} = \alpha M_{:j}$ for $i\neq j$ and $\alpha > 0$, then taking $B_{ij} = \alpha$ (0 otherwise for that column) gives $MQ_{:i} = M_{:i} - \alpha M_{:j} = 0$ and similarly for $M_{:j}$. Hence we have lost a component in our dataset and potentially produce a lower rank matrix $MQ$. 
In practice, it will be important to make sure that the columns of $M$ are not multiples of each other (even though it is usually not the case for well-constructed datasets).

%
%
%
%
%

\section{Properties of the Preprocessing} \label{propp}


In the remainder of the paper, we denote $\mathcal{B}^*(M)$ the set of optimal solutions of problem \eqref{tigrel} for the data matrix $M$, and $\mathcal{P}$ the  preprocessing operator  defined as  
\[
\mathcal{P}: \mathbb{R}^{m \times n}_+ \to \mathbb{R}^{m \times n}_+ : M \mapsto \mathcal{P}(M) = M(I-B^*), \text{ where } B^* \in \mathcal{B}^*(M). 
\]
In this section, we prove some important properties of $\mathcal{P}$ and $\mathcal{B}^*(M)$: 
\begin{itemize}


 \item The preprocessing operator $\mathcal{P}$ is well-defined (Theorem~\ref{unik}).  
 \item The preprocessing operator $\mathcal{P}$ is invariant to permutation and scaling of the columns of $M$ (Lemma~\ref{permscal}). 
 \item If the columns of $\theta(M)$ are distinct, then $\rho(B^*) < 1$ for any $B^* \in \mathcal{B}^*(M)$ (Theorem~\ref{specrad}). 
\item If the vertices of $\conv(\theta(M))$ are distinct then 
  \begin{itemize}
  \item There exists $B^* \in \mathcal{B}^*(M)$ such that $\rho(B^*) < 1$ (Corollary~\ref{existe}).     
  \item $\rank(\mathcal{P}(M)) = \rank(M)$ and $\rank_+(\mathcal{P}(M)) \geq \rank_+(M)$ (Corollary~\ref{rknrk}). 
  \end{itemize} 

\item 
If the matrix $M$ is separable, then the preprocessing allows to recover a sparse and optimal solution of the corresponding NMF problem (Theorem~\ref{recsep}). In particular it is always optimal for rank-two matrices (Corollary~\ref{rk2}). 

\item 
If the matrix has rank-three, then the preprocessing yields an instance in which the number of solutions of the exact NMF problem is finite (Theorem~\ref{finite3}). 
\end{itemize}

\subsection{General Properties} \label{gp}

A crucial property of our preprocessing is that it is well-defined. 
\begin{theorem} \label{unik}
The preprocessing $\mathcal{P}(M)$ is well-defined, i.e., for any $B_1^* \in \mathcal{B}^*(M), B_2^* \in \mathcal{B}^*(M)$, we have $M(I-B_1^*) = M(I-B_2^*) = \mathcal{P}(M)$. 
\end{theorem}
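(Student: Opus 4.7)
The plan is to exploit the fact that problem~\eqref{tigrel} is a separable convex program: both the objective and the constraints decouple column-by-column in $B$. Once separated, each subproblem is a convex constrained linear least squares problem in a single column vector, and strict convexity of $\|\cdot\|_2^2$ will force the residual (hence the corresponding column of $\mathcal{P}(M)$) to be the same at every optimum.

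First I would observe that the $i$-th summand of the objective depends only on the $i$-th column $B_{:i}$ of $B$, that the matrix inequality $M \geq MB$ rewrites column-wise as $M_{:i} \geq M B_{:i}$ for each $i$, and that the constraints $B \geq 0$ and $B_{ii}=0$ also act column-independently. It therefore suffices, for every fixed $i$, to establish uniqueness of the value $Mb^*$ across all optimizers of
\[
\min_{b \in \mathbb{R}_+^n,\, b_i = 0}\ \|M_{:i} - Mb\|_2^2 \quad \text{subject to}\quad M_{:i} \geq Mb.
\]

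Next I would prove uniqueness of $Mb^*$ by a standard convexity argument. The feasible set above is convex (the intersection of the nonnegative orthant with finitely many affine half-spaces) and the objective is a convex quadratic in $b$. Assume $b_1$ and $b_2$ are both optimal with common value $f^*$, and set $r_j := M_{:i} - Mb_j$ for $j=1,2$. For any $\lambda \in (0,1)$ the point $b_\lambda := \lambda b_1 + (1-\lambda) b_2$ remains feasible, with residual $r_\lambda = \lambda r_1 + (1-\lambda) r_2$. Strict convexity of $\|\cdot\|_2^2$ on $\mathbb{R}^m$ yields $\|r_\lambda\|_2^2 \leq \lambda \|r_1\|_2^2 + (1-\lambda)\|r_2\|_2^2 = f^*$, with equality only if $r_1 = r_2$; optimality of $b_\lambda$ therefore forces $M b_1 = M b_2$. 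Applying this to each column gives $M B_1^* = M B_2^*$, so $M(I-B_1^*) = M(I-B_2^*) = \mathcal{P}(M)$.

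I do not expect any serious obstacle here; the whole argument rests on the twin observations that the problem decouples across columns of $B$ and that the squared $\ell_2$-norm is strictly convex in its argument, so the linear image $Mb^*$ is invariant across minimizers even though $b^*$ itself need not be unique (this lack of uniqueness for $B^*$ itself is expected precisely when the columns of $M$ are linearly dependent, which is typical in the applications considered).
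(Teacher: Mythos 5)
Your proof is correct and follows essentially the same route as the paper: decouple problem~\eqref{tigrel} into $n$ independent column-wise CLLS problems, then observe that the optimal residual (equivalently the linear image $Mb^*$) is unique across minimizers. The paper phrases this second step as uniqueness of the $\ell_2$ projection onto a convex set, while you unfold the standard strict-convexity argument behind that fact explicitly; the substance is identical.
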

\begin{proof}
Problem \eqref{tigrel} can be decoupled into $n$ independent CLLS (one for each column of $M$) of the form: 
\begin{equation} \label{tigrelsep}
\min_{b \in \mathbb{R}^{n-1}_+} \| d - C b \|^2  \text{ such that }   Cb \leq d 
\quad \equiv \quad 
\min_{b \in \mathbb{R}^{n-1}_+, y \in \mathbb{R}^m} 
\quad 
\| d - y \|^2  \text{ such that }   y \leq d, y = Cb. 
\end{equation}
The result follows from the fact that the $\ell_2$ projection onto a polyhedral set (actually any convex set) yields a unique point. 
\end{proof}

%

Another important property of the preprocessing is its invariance to permutation and scaling of the columns of $M$. 
\begin{lemma} \label{permscal}
Let $M$ be a nonnegative matrix and $P$ be a monomial matrix. Then, $\mathcal{P}(MP) = \mathcal{P}(M)P$. 
\end{lemma}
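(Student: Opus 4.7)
My plan is to exploit the decoupled column-by-column structure of problem \eqref{tigrel} together with the uniqueness result Theorem~\ref{unik}. By Theorem~\ref{unik}, it suffices to exhibit one optimal $B'\in\mathcal{B}^*(MP)$ for which $MP(I-B')=\mathcal{P}(M)P$; uniqueness then forces $\mathcal{P}(MP)=\mathcal{P}(M)P$. The natural candidate, suggested by the identity $MP(P^{-1}B^{*}P)=MB^{*}P$, is
\[
B' \; = \; P^{-1}B^{*}P, \qquad B^{*}\in\mathcal{B}^{*}(M).
\]

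First I would write the monomial matrix in the canonical form $P=\Pi D$ with $\Pi$ a permutation matrix (encoding some permutation $\pi$) and $D$ a positive diagonal matrix. Then I would verify the three feasibility requirements of problem \eqref{tigrel} for $B'$. Nonnegativity is immediate since $P^{-1}=D^{-1}\Pi^{T}\geq 0$ and $B^{*}\geq 0$. For the vanishing diagonal I would use $(P^{-1}B^{*}P)_{ii}=(\Pi^{T}B^{*}\Pi)_{ii}=B^{*}_{\pi(i),\pi(i)}=0$, since conjugation by a permutation matrix is a simultaneous row/column permutation that preserves the diagonal (up to relabeling). For the inequality $MPB'\leq MP$, I would note that $MPB'=MB^{*}P$ and $(M-MB^{*})P\geq 0$ as the product of two nonnegative matrices, using feasibility of $B^{*}$ for \eqref{tigrel} applied to $M$.

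Next I would establish optimality. Here I use that \eqref{tigrel} decouples into $n$ independent subproblems, one per column. Column $j$ of $MP$ equals $D_{jj}M_{:\pi(j)}$, so the $j$-th subproblem for $MP$ reads
\[
\min_{b\geq 0,\,b_j=0}\; \Big\| D_{jj}M_{:\pi(j)} - \sum_{k\neq j} D_{kk} M_{:\pi(k)}\,b_k\Big\|_2^{2}
\quad \text{s.t.} \quad \sum_{k\neq j} D_{kk} M_{:\pi(k)}\,b_k \leq D_{jj} M_{:\pi(j)}.
\]
The change of variables $\tilde b_{\pi(k)}=D_{kk}b_k/D_{jj}$ (which is a positive bijection between the feasible sets, mapping the coordinate constraint $b_j=0$ to $\tilde b_{\pi(j)}=0$) scales the objective by $D_{jj}^{2}$ and reduces the problem to the subproblem \eqref{tigrel} for column $\pi(j)$ of $M$. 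Hence its optimal value is attained by $\tilde b^{*}=B^{*}_{:\pi(j)}$, which under the inverse change of variables gives precisely the $j$-th column of $P^{-1}B^{*}P$, i.e.\ $B'_{:j}$.

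Having $B'\in\mathcal{B}^{*}(MP)$, the identity
\[
MP(I-B') \;=\; MP - MP\cdot P^{-1}B^{*}P \;=\; (M-MB^{*})P \;=\; \mathcal{P}(M)P
\]
closes the argument via Theorem~\ref{unik}. The only place where care is needed is in the bookkeeping of the index permutations and the rescaling factors in the change of variables; everything else follows from the linear, column-separable nature of \eqref{tigrel} and the fact that a monomial $P$ acts on columns by permutation and positive scaling, both of which respect the cone of feasible pairs $(b, MPb\leq (MP)_{:j})$.
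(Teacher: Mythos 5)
Your proposal is correct and follows essentially the same route as the paper: both arguments take the candidate $P^{-1}B^{*}P$, verify feasibility via $M\geq MB^{*}\iff MP\geq (MP)(P^{-1}B^{*}P)$, and establish optimality by exploiting the column-wise decoupling of \eqref{tigrel} together with the fact that a monomial $P$ only permutes and positively rescales the subproblems (so the objective of each is multiplied by a positive constant). Your explicit change of variables and the appeal to Theorem~\ref{unik} to pass from one optimal $B'$ to the well-defined map $\mathcal{P}$ are just a slightly more detailed bookkeeping of the same argument.
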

\begin{proof} 
We are going to show something slightly stronger; namely that $B^*$ is an optimal solution of \eqref{tigrel} for matrix $M$ if and only if $P^{-1} B^* P$ is an optimal solution of \eqref{tigrel} for matrix $MP$, i.e., 
 \[
 B^* \; \in \; \mathcal{B}^*(M) 
 \quad \iff \quad  
 P^{-1} B^* P \; \in \; \mathcal{B}^*(MP). 
 \]
First, note that  $B$ is a feasible solution of \eqref{tigrel} for $M$ if and only if  $P^{-1} B P$ is a feasible solution of \eqref{tigrel} for $MP$. In fact, nonnegativity of $B$ and its diagonal zero entries are clearly preserved under permutation and scaling while 
\[
M \geq MB \iff M P \geq M B P \iff  M P \geq M (P P^{-1}) B P \iff M P \geq (MP) (P^{-1} B P).
\] 
Hence there is one-to-one correspondence between feasible solutions of \eqref{tigrel} for $M$ and \eqref{tigrel} for $MP$. 

Then, let $B^*$ be an optimal solution of \eqref{tigrel}. 
Because \eqref{tigrel} can be decoupled into $n$ independent CLLS's, one for each column of $B$ (cf.\@ Equation~\eqref{tigrelsep}), we have  
\[
||M_{:i} - M B^*_{:i}||_2^2 \leq ||M_{:i} -  M B_{:i}||_2^2, \quad \forall i, 
\]
for any feasible solution $B$  of \eqref{tigrel}. 
Letting $p \in \mathbb{R}^n_+$ be such that $p_i$ is equal the non-zero entry of the $i$th row of $P$, we have 
\begin{align*}
\sum_i p_i^2 ||M_{:i} - M B^*_{:i}||_2^2 
& = \sum_i ||M_{:i} p_i - M P P^{-1} B^*_{:i} p_i||_2^2 \\
& = ||M P -  M P P^{-1} B^* P||_F^2   \\ 
& \leq \sum_i p_i^2 ||M_{:i} - M B_{:i}||_2^2 = ||M P -  M P P^{-1} B P||_F^2, 
\end{align*}
for any feasible solution $B' = P^{-1} B P$ of \eqref{tigrel} for $MP$. This proves $B^* \in \mathcal{B}^*(M) \Rightarrow  P^{-1} B^* P \in \mathcal{B}^*(MP)$. The other direction follows directly by using the permutation $P^{-1}$ on the matrix $MP$.  
\end{proof}

It is interesting to observe that if a column of $M$ belongs to the convex cone generated by the other columns, 
 then the corresponding column of $\mathcal{P}(M)$ is equal to zero. 

\begin{lemma} \label{lem2} 
Let $\mathcal{I} = \{1,2,\dots,n\} \backslash \{ i \}$. Then $\mathcal{P}(M)_{:i} = 0$ if and only if $M_{:i} \in \cone( M(:, \mathcal{I} ))$. 
\end{lemma}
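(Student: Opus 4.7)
The plan is to exploit the column-wise decoupling of problem~\eqref{tigrel} that was already used in the proof of Theorem~\ref{unik}: since the objective splits as $\sum_i \|M_{:i}-MB_{:i}\|_2^2$ and the constraints $MB \le M$, $B \ge 0$, $B_{ii}=0$ all act separately on each column of $B$, the $i$th column $B^*_{:i}$ may be taken as an optimal solution of the single constrained least-squares problem
\[
\min_{b \in \mathbb{R}^n_+,\; b_i = 0} \; \|M_{:i} - Mb\|_2^2 \quad \text{ such that } \quad Mb \le M_{:i},
\]
and we have $\mathcal{P}(M)_{:i} = M_{:i} - MB^*_{:i}$.

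For the ($\Leftarrow$) direction, assume $M_{:i} \in \cone(M(:,\mathcal{I}))$, so that $M_{:i} = \sum_{k \neq i} \alpha_k M_{:k}$ for some $\alpha_k \ge 0$. Setting $b_k = \alpha_k$ for $k \neq i$ and $b_i = 0$ gives a feasible point of the above subproblem (in particular $Mb = M_{:i} \le M_{:i}$) achieving the objective value $0$, which is the global minimum. By Theorem~\ref{unik} the product $MB^*_{:i}$ is unique across all optima, hence $\mathcal{P}(M)_{:i} = M_{:i} - MB^*_{:i} = 0$.

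For the ($\Rightarrow$) direction, if $\mathcal{P}(M)_{:i} = 0$ then $M_{:i} = \sum_{k \neq i} M_{:k} B^*_{ki}$ with coefficients $B^*_{ki} \ge 0$, which is by definition the statement $M_{:i} \in \cone(M(:,\mathcal{I}))$.

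No real obstacle is expected: the lemma is a direct unpacking of definitions together with the decoupling of~\eqref{tigrel}. The only point that requires mild care is that $B^*$ itself is generally not unique, so one should not speak of ``the'' optimal $B^*_{ki}$; this is exactly why the appeal to Theorem~\ref{unik} (uniqueness of $\mathcal{P}(M)$, equivalently of $MB^*_{:i}$) is needed in the ($\Leftarrow$) direction.
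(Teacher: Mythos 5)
Your proof is correct and follows essentially the same route as the paper, which disposes of the lemma in a single displayed equivalence $\mathcal{P}(M)_{:i}=M_{:i}-\sum_{k\neq i}B^*_{ki}M_{:k}=0 \iff M_{:i}=\sum_{k\neq i}B^*_{ki}M_{:k}$ with $B^*_{ki}\geq 0$. You are in fact more careful than the paper on the ($\Leftarrow$) direction, where one must note that the cone representation yields a feasible point of the decoupled subproblem with objective value zero, forcing every optimizer to satisfy $MB^*_{:i}=M_{:i}$ (so the appeal to Theorem~\ref{unik} is not even needed there); this is a detail the paper's one-line proof leaves implicit.
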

\begin{proof}
We have that 
\[
\mathcal{P}(M)_{:i} = M_{:i} - \sum_{k \neq i} B^*_{ki} M_{:k} = 0,  \quad B^*_{ki} \geq 0 \iff 
M_{:i} = \sum_{k \neq i} B^*_{ki} M_{:k},  \quad B^*_{ki} \geq 0. 
\] 
\end{proof}


The preprocessed matrix $\mathcal{P}(M)$ may contain all-zero columns, for which the function $\theta(.)$ is not defined (cf.\@ Definition~\ref{pbm}).  We extend the definition to matrices with zero columns as follows:  $\theta(X)$ is the matrix whose columns are the normalized non-zero  columns of $X$, i.e., letting $Y$ be the matrix $X$ where the non-zero columns have been removed, we define $\theta(X) = \theta(Y)$. Hence $\conv(\theta(X))$ denotes the convex hull of the normalized non-zero columns of $X$. 

Another straightforward property is that the preprocessing can only inflate the convex hull defined by the columns of $\theta(M)$. 
\begin{lemma} \label{lem3} 
Let $M \in \mathbb{R}^{m \times n}_+$. If the vertices of $\conv(\theta(M))$ are non-repeated, then 
\[
\conv(\theta(M)) \quad \subseteq \quad \conv(\theta(\mathcal{P}(M))) \quad \subseteq \quad \Delta^m \cap \col(\theta(M)). 
\]
\end{lemma}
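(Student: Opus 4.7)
The second inclusion is essentially immediate and I would dispatch it first. By construction $\mathcal{P}(M) = M(I-B^*) \geq 0$, so every non-zero column of $\mathcal{P}(M)$, once normalized by its $\ell_1$-norm, lies in $\Delta^m$. Moreover $\mathcal{P}(M) = MQ$ with $Q = I-B^*$, so $\col(\mathcal{P}(M)) \subseteq \col(M) = \col(\theta(M))$ (recall $M$ has no all-zero column by the standing assumption). Hence the columns of $\theta(\mathcal{P}(M))$ all lie in $\Delta^m \cap \col(\theta(M))$, and so does their convex hull.

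For the first inclusion, the plan is to invoke Corollary~\ref{existe}: since the vertices of $\conv(\theta(M))$ are non-repeated, there exists $B^* \in \mathcal{B}^*(M)$ with $\rho(B^*) < 1$, and since $\mathcal{P}(M)$ is well-defined (Theorem~\ref{unik}) I may use this particular $B^*$. Then $Q = I-B^*$ is an inverse-positive Z-matrix (Theorem~\ref{berplem}), and writing $M = \mathcal{P}(M)\,Q^{-1}$ with $Q^{-1} \geq 0$, each column satisfies
\[
M_{:i} \;=\; \sum_{j=1}^n (Q^{-1})_{ji}\, \mathcal{P}(M)_{:j}, \qquad (Q^{-1})_{ji} \geq 0.
\]
This expresses $M_{:i}$ as a nonnegative combination of the columns of $\mathcal{P}(M)$; zero columns of $\mathcal{P}(M)$ simply contribute nothing and can be dropped.

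It remains to pass to the normalized versions. For each non-zero column $\mathcal{P}(M)_{:j}$ I rewrite $\mathcal{P}(M)_{:j} = \|\mathcal{P}(M)_{:j}\|_1\, \theta(\mathcal{P}(M))_{:j}$, so that
\[
\theta(M)_{:i} \;=\; \frac{M_{:i}}{\|M_{:i}\|_1} \;=\; \sum_{j \,:\, \mathcal{P}(M)_{:j}\neq 0} \lambda_{ji}\, \theta(\mathcal{P}(M))_{:j}, \qquad \lambda_{ji} := \frac{(Q^{-1})_{ji}\,\|\mathcal{P}(M)_{:j}\|_1}{\|M_{:i}\|_1}\geq 0.
\]
Applying $\mathbf{1}^T$ to the identity $M_{:i} = \mathcal{P}(M)(Q^{-1})_{:i}$ yields $\|M_{:i}\|_1 = \sum_j (Q^{-1})_{ji}\,\|\mathcal{P}(M)_{:j}\|_1$ (all terms nonnegative, so no cancellation), hence $\sum_j \lambda_{ji} = 1$. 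Thus $\theta(M)_{:i} \in \conv(\theta(\mathcal{P}(M)))$ for every $i$, which gives the desired inclusion.

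The only delicate step is invoking Corollary~\ref{existe} and choosing a $B^*$ with $\rho(B^*)<1$, since otherwise $I-B^*$ need not be invertible and the identity $M = \mathcal{P}(M)Q^{-1}$ fails. Once that existence is in hand, the rest of the argument is routine — in particular, the handling of possible all-zero columns of $\mathcal{P}(M)$ is automatic because $\theta$ ignores them and the corresponding terms drop out of both sides.
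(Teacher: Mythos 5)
Your proof is correct, but it takes a genuinely different route from the paper's. For the first (and only nontrivial) inclusion, the paper never inverts $Q$: it argues locally, column by column, writing each vertex column as $M_{:i} = \mathcal{P}(M)_{:i} + \sum_{k \neq i} b_{ki} M_{:k}$ and observing that replacing $M_{:i}$ by $\mathcal{P}(M)_{:i}$ can only inflate the convex hull; iterating over all vertices gives the claim. You instead go global: you invoke Corollary~\ref{existe} to pick a $B^* \in \mathcal{B}^*(M)$ with $\rho(B^*) < 1$ (legitimate, since by Theorem~\ref{unik} the product $MB^*$ is independent of the choice), write $M = \mathcal{P}(M) Q^{-1}$ with $Q^{-1} \geq 0$, and then normalize, using $\mathbf{1}^T$ applied to a sum of nonnegative vectors to see the coefficients are convex. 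The trade-offs: your argument is shorter and more transparent once inverse-positivity is in hand, and it handles all columns of $M$ uniformly rather than only the vertices; but it imports the heavier machinery behind Corollary~\ref{existe} (Lemma~\ref{diagdom}, Theorem~\ref{specrad}, the M-matrix theory), whereas the paper's replacement argument is elementary and does not require $Q$ to be invertible at all. Note also that Corollary~\ref{existe} appears \emph{after} Lemma~\ref{lem3} in the paper, so your proof creates a forward reference; I checked that the proofs of Lemma~\ref{diagdom}, Theorem~\ref{specrad} and Corollary~\ref{existe} rely only on Lemmas~\ref{permscal} and \ref{lem2} and the matrix-theoretic preliminaries, not on Lemma~\ref{lem3} or Corollary~\ref{rknrk}, so there is no circularity — but the results would need to be reordered for your version to stand as written.
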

\begin{proof} 
By construction, since $\mathcal{P}(M) = MQ$, $\col(\theta(\mathcal{P}(M))) \subseteq \col(\theta(M))$ and $\conv(\theta(\mathcal{P}(M))) \subseteq  \Delta^m \cap \col(\theta(M))$. 
Let $i$ be the index corresponding to a vertex of $\theta(M)$ and $\mathcal{I} = \{1,2,\dots,n\} \backslash \{ i \}$. Because vertices of $\theta(M)$ are non-repeated, we have $M_{:i} \notin \conv(\theta(M(:,\mathcal{I})))$, while 
\[
\mathcal{P}(M)_{:i} = M_{:i} - \sum_{k \neq i} b_{ki} M_{:k} 
\quad \iff \quad 
M_{:i} = \mathcal{P}(M)_{:i} + \sum_{k \neq i} b_{ki} M_{:k}. 
\]
Hence $M_{:i} \in \conv(\theta( [\mathcal{P}(M)_{:i} \, M(:,\mathcal{I})] ))$, which implies that 
\[
\conv(\theta( M ) ) \subseteq \conv(\theta( [\mathcal{P}(M)_{:i} \, M(:,\mathcal{I})] )), 
\] 
so that replacing $M_{:i}$ by $\mathcal{P}(M)_{:i}$ extends $\conv(\theta(M))$. Since this holds for all vertices, the proof is complete.  
\end{proof}

\begin{corollary} \label{rknrk} 
Let $M \in \mathbb{R}^{m \times n}_+$. 
If no column of $M$ is multiple of another column, then 
\[
\rank(\mathcal{P}(M)) = \rank(M) \quad \text{ and } \quad \rank_+(\mathcal{P}(M)) \geq \rank_+(M). 
\]
\end{corollary}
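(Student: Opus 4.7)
The plan is to exploit the hypothesis ``no column of $M$ is a multiple of another'' to fit into Theorem~\ref{specrad}, and then use the resulting spectral bound to invert $Q = I - B^*$ with a nonnegative inverse. Note first that two columns of $M$ are positive multiples of each other if and only if their normalizations $\theta(M)_{:i}$ and $\theta(M)_{:j}$ coincide, so the hypothesis is exactly the assumption of Theorem~\ref{specrad}. Hence for any $B^* \in \mathcal{B}^*(M)$ we have $\rho(B^*) < 1$.

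From $\rho(B^*) < 1$, the matrix $Q := I - B^*$ is invertible (since $1$ is not an eigenvalue of $B^*$), and the Neumann series gives
\begin{equation*}
Q^{-1} = \sum_{k = 0}^{\infty} (B^*)^k,
\end{equation*}
which is nonnegative because $B^* \geq 0$. In particular $Q \in \mathcal{IP}^n$.

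For the first equality, since $Q$ is invertible, right-multiplication by $Q$ preserves rank, so $\rank(\mathcal{P}(M)) = \rank(MQ) = \rank(M)$. For the second inequality, let $s = \rank_+(\mathcal{P}(M))$ and take an exact NMF $\mathcal{P}(M) = UV$ with $U \in \mathbb{R}^{m \times s}_+, V \in \mathbb{R}^{s \times n}_+$. Then
\begin{equation*}
M = \mathcal{P}(M)\, Q^{-1} = U \, (V Q^{-1}),
\end{equation*}
and $V Q^{-1} \geq 0$ as the product of two nonnegative matrices. This exhibits an exact nonnegative factorization of $M$ of inner dimension $s$, hence $\rank_+(M) \leq s = \rank_+(\mathcal{P}(M))$.

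The only real work is the nonnegativity of $Q^{-1}$, and that is handed to us for free by Theorem~\ref{specrad} together with the standard Neumann series argument; once $Q \in \mathcal{IP}^n$ is in hand, both conclusions are one-line consequences. So I do not expect any serious obstacle here; the substantive content has already been absorbed into Theorem~\ref{specrad}.
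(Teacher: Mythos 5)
Your proof is correct, and it is in fact the second of the two arguments the paper itself gives for this corollary: after its primary proof, the paper remarks that ``another way to prove this result'' is to produce an inverse-positive $Q$ with $\mathcal{P}(M)=MQ$ and then argue exactly as you do ($\rank(MQ)=\rank(M)$ by invertibility, and $M=U(VQ^{-1})$ with $VQ^{-1}\geq 0$ for the nonnegative rank). The differences are minor: you invoke Theorem~\ref{specrad}, whose hypothesis (``no column of $M$ is a multiple of another'') matches the corollary's hypothesis verbatim, whereas the paper's remark cites Corollary~\ref{existe}; and you make the nonnegativity of $Q^{-1}$ explicit via the Neumann series $\sum_k (B^*)^k$ rather than through the M-matrix characterization of Theorem~\ref{berplem}. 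Both Theorem~\ref{specrad} and Corollary~\ref{existe} appear \emph{after} the corollary in the paper, so this is a forward reference, but there is no circularity since neither of their proofs uses the corollary. For contrast, the paper's primary proof is geometric: by Lemma~\ref{lem3}, $\conv(\theta(M))\subseteq\conv(\theta(\mathcal{P}(M)))\subseteq\Delta^m\cap\col(\theta(M))$, which simultaneously yields $\rank(\mathcal{P}(M))=\rank(M)$ and $\rank_+(\mathcal{P}(M))\geq\rank_+(M)$ without ever inverting $Q$; that route needs only the weaker hypothesis that the vertices of $\conv(\theta(M))$ are non-repeated, while your algebraic route buys an explicit nonnegative $Q^{-1}$ (hence an explicit map from factorizations of $\mathcal{P}(M)$ to factorizations of $M$), which is the conceptual point of the whole preprocessing. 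Either way the substantive content lives elsewhere, as you say.
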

\begin{proof} 
Without loss of generality, we can assume that $M$ does not have a zero column. In fact, a preprocessed zero column remains zero while it cannot influence the preprocessing of the other columns (see Equation~\eqref{interpre}).  
Then, by Lemma~\ref{lem3}, we have 
\[
\conv(\theta(M)) 
\quad \subseteq \quad 
\conv(\theta(\mathcal{P}(M))) 
\quad \subseteq \quad 
\Delta^m \cap \col(\theta(M)), 
\]
implying $\rank_+(\mathcal{P}(M)) \geq \rank_+(M)$ and $\rank(\mathcal{P}(M)) = \rank(M)$. 

Another way to prove this result is to use Corollary~\ref{existe} (see below) guaranteeing the existence of an inverse-positive matrix $Q$ such that $\mathcal{P}(M) = MQ$ which implies $\rank(\mathcal{P}(M)) = \rank(M)$. Moreover, any exact NMF $(U,V) \in \mathbb{R}^{m \times r} \times \mathbb{R}^{r \times n}$ of  $\mathcal{P}(M)$ gives $M = UVQ^{-1}$ hence $\rank_+(M) \leq \rank_+(\mathcal{P}(M))$. 
\end{proof}

We now prove that if no column of $M$ is multiple of another column (i.e., the columns of $\theta(M)$ are distinct) then $\rho(B^*) < 1$ for any $B^* \in \mathcal{B}^*(M)$ whence $Q = I - B^*$ is an inverse positive matrix. 

\begin{lemma} \label{diagdom}
Let $A$ be a column stochastic matrix and $Q = I-B$ where $B\geq0$ and $B_{ii} = 0$ for all $i$ be such that $AQ \geq 0$. Then,   
\[
\sum_k B_{ki} \leq 1, \quad \forall i, 
\]
i.e., $Q$ is diagonally dominant. Moreover, 
if $A_{:i} \notin \conv(A(:,\mathcal{I}))$ where $\mathcal{I} = \{1,2,\dots,n\} \backslash \{ i \}$,  then 
\[
\sum_k B_{ki} < 1. 
\]
\end{lemma}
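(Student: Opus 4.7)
The key observation is to exploit the column stochasticity of $A$: since $\mathbf{1}^T A = \mathbf{1}^T$ (where $\mathbf{1} \in \mathbb{R}^m$ is the all-ones vector), left-multiplication by $\mathbf{1}^T$ converts componentwise inequalities about columns of $A$ into scalar inequalities about column sums of $B$.

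The plan is as follows. The hypothesis $AQ \geq 0$, i.e., $A(I-B) \geq 0$, written column by column, gives
\[
A_{:i} \; \geq \; \sum_{k \neq i} A_{:k} B_{ki} \qquad \forall i,
\]
where I have used $B_{ii} = 0$. Since this inequality holds componentwise and $\mathbf{1}^T$ is a nonnegative operator, applying $\mathbf{1}^T$ preserves the inequality. Using $\mathbf{1}^T A_{:j} = 1$ for every $j$, the left-hand side becomes $1$ and the right-hand side becomes $\sum_{k \neq i} B_{ki}$. This yields $\sum_{k \neq i} B_{ki} \leq 1$, which is the diagonal dominance (since $B_{ii}=0$, $\sum_k B_{ki} = \sum_{k\neq i} B_{ki}$).

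For the second part, I would argue by contradiction. Suppose $A_{:i} \notin \conv(A(:,\mathcal{I}))$ but $\sum_k B_{ki} = 1$. Then the inequality in $A_{:i} \geq \sum_{k \neq i} A_{:k} B_{ki}$ becomes an equality in the scalar sense (both sides sum to $1$ entrywise), and combined with the componentwise inequality and nonnegativity, this forces equality in every component:
\[
A_{:i} \; = \; \sum_{k \neq i} B_{ki}\, A_{:k}.
\]
Since $B_{ki} \geq 0$ and $\sum_{k \neq i} B_{ki} = 1$, the right-hand side is a convex combination of the columns $\{A_{:k}\}_{k \in \mathcal{I}}$, so $A_{:i} \in \conv(A(:,\mathcal{I}))$, contradicting the hypothesis. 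Hence $\sum_k B_{ki} < 1$.

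There is no real obstacle here; the proof is essentially one application of $\mathbf{1}^T$ followed by a contradiction argument. The only subtle point worth stating explicitly is that componentwise nonnegative inequality combined with equality of entry-sums (both equal to $1$) forces equality entrywise, which is what unlocks the convex combination in the strict case.
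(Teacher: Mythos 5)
Your proof is correct and follows essentially the same route as the paper: applying $\mathbf{1}^T$ to the componentwise inequality is exactly the paper's step $1 = \|A_{:i}\|_1 \geq \|AB_{:i}\|_1 = \sum_k B_{ki}$ for nonnegative vectors, and your contradiction argument for the strict case is just the contrapositive of the paper's observation that non-membership in the convex hull forces strict inequality in at least one component. No gaps.
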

\begin{proof}
By assumption, we have for all $i$
\[
A_{:i} \geq A B_{:i} = \sum_k A_{:k} B_{ki} 
\quad   \Rightarrow \quad  
1 = ||A_{:i}||_1 \geq ||AB_{:i}||_1 = ||\sum_k A_{:k} B_{ki}||_1 = ||B_{:i}||_1 = \sum_k B_{ki}, 
\]
because $A$ and $B$ are nonnegative. Moreover, if $A_{:i} \notin \conv(A(:,\mathcal{I}))$, then there exists at least one index $j$ such that $A_{ji} > A_{j:} B_{:i}$ (Lemma~\ref{lem2}) so that the above inequality is strict. 
\end{proof}

\begin{theorem} \label{specrad}
If no column of $M$ is multiple of another column, then any optimal solution $B^*$ of \eqref{tigrel} satisfies $\rho(B^*) < 1$, i.e., $Q = I - B^*$ is inverse positive. 
\end{theorem}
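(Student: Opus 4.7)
The plan is to combine the column-sum bound from Lemma~\ref{diagdom} with a Perron--Frobenius argument to force a contradiction whenever $\rho(B^*)$ attains the value~$1$. First I would invoke Lemma~\ref{permscal} with $P = D(M)$ (a positive diagonal, hence monomial, matrix) to reduce to the case where $M$ is column stochastic: the optimal matrix for the rescaled problem is $D(M)^{-1} B^* D(M)$, which is similar to $B^*$ and hence has the same spectral radius. Under this normalization, Lemma~\ref{diagdom} directly yields $\sum_k B^*_{ki} \leq 1$ for every column $i$, so that $\rho(B^*) \leq 1$ by the standard column-sum bound for nonnegative matrices.

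To rule out $\rho(B^*) = 1$, I would take a nonzero Perron vector $v \geq 0$ with $B^* v = v$ and analyse its support $S = \{j : v_j > 0\}$. Summing the scalar equations making up $B^* v = v$ and using $c_j := \sum_k B^*_{kj} \leq 1$ forces $c_j = 1$ for every $j \in S$, while looking at coordinates $i \notin S$ of $B^* v = v$ forces $B^*_{ij} = 0$ whenever $i \notin S$ and $j \in S$. Hence each column $B^*_{:j}$ with $j \in S$ is supported entirely inside $S$, has entries summing to one, and in particular $B^*$ restricted to the index set $S$ is column stochastic.

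The heart of the argument is to translate $c_j = 1$ into a geometric statement. For $j \in S$, both $M_{:j}$ and $MB^*_{:j}$ have $\ell_1$-norm equal to one (using column stochasticity of $M$), and the feasibility constraint $M_{:j} \geq MB^*_{:j}$ therefore pins down $M_{:j} = MB^*_{:j}$. Because $B^*_{jj}=0$ and the remaining weights $B^*_{kj}$ are supported on $S \setminus \{j\}$ and sum to one, this expresses $M_{:j}$ as a convex combination of $\{M_{:k} : k \in S,\, k \neq j\}$, i.e.\ every column $M_{:j}$ with $j \in S$ lies in the convex hull of the others in $S$.

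Finally, this contradicts elementary convex geometry: the polytope $\conv(M_{:k} : k \in S)$ is nonempty and hence has at least one vertex, and the hypothesis that no column of $M$ is a multiple of another column means (after the stochastic normalization) that the columns $\{M_{:k} : k \in S\}$ are pairwise distinct, so any such vertex is some $M_{:j}$ with $j \in S$ that cannot be written as a convex combination of the remaining $\{M_{:k} : k \in S \setminus \{j\}\}$. This contradicts the previous paragraph, so $\rho(B^*) = 1$ is impossible; combined with $\rho(B^*) \leq 1$ and Theorem~\ref{berplem}, this gives $Q = I - B^* \in \mathcal{IP}^n$. I expect the most delicate step to be the support analysis that links the Perron eigenvector to the equalities $c_j = 1$ and $M_{:j} = MB^*_{:j}$ on $S$; once that dictionary is in place, the geometric contradiction is essentially immediate.
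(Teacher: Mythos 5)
Your proof is correct, but it takes a genuinely different route from the paper's. Both arguments share the same setup: reduce to a column-stochastic $M$ via Lemma~\ref{permscal} and extract the column-sum bound $\sum_k B^*_{ki}\le 1$ from Lemma~\ref{diagdom}, so that $\rho(B^*)\le 1$ is immediate. To exclude $\rho(B^*)=1$, the paper permutes $Q=I-B^*$ into block upper-triangular form with irreducible diagonal blocks and shows each block is irreducibly diagonally dominant (at least one column of each block must lie outside the convex hull of the others because the columns are distinct), then invokes Theorem~\ref{olga}. You instead take a Perron eigenvector $v\ge 0$ with $B^*v=v$, show that on its support $S$ every column sum equals one and every column $B^*_{:j}$, $j\in S$, is supported in $S$, convert the saturated inequality $M_{:j}\ge MB^*_{:j}$ with equal $\ell_1$-norms into the exact identity $M_{:j}=MB^*_{:j}$, and derive a contradiction with the existence of an extreme point of $\conv(M_{:k}:k\in S)$. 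The two proofs are of comparable length; yours avoids the irreducible normal form and the appeal to Taussky's theorem on irreducibly diagonally dominant matrices, replacing them with the (equally classical) existence of a nonnegative eigenvector for $\rho(B^*)$, and it localizes the geometric argument to the support of that eigenvector rather than to an irreducible block. The eigenvector support analysis is carried out carefully (the summation argument forcing $c_j=1$ on $S$, and the vanishing of $B^*_{ij}$ for $i\notin S$, $j\in S$, are both sound), and the final extreme-point contradiction correctly uses the hypothesis that the normalized columns are pairwise distinct. The only point glossed over is the possible presence of zero columns of $M$ (for which $D(M)$ is undefined), but under the stated hypothesis a zero column can only occur when $n=1$, so this is harmless; the paper dispenses with it explicitly in its first reduction step.
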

\begin{proof} 
By Theorem~\ref{berplem}, $\rho(B^*) < 1$ if and only if $Q = I - B^*$ is inverse positive if and only if $Q$ is a nonsingular M-matrix. 
Let us then show that $Q$ is a nonsingular M-matrix. 
First, we can assume without loss of generality that
\begin{itemize}
\item 
Matrix $M$ does not contain a column equal to zero. In fact, if $M$ does, say the first column is equal to zero, then we must have $B_{:1} = 0$ (since $M_{:1} \geq MB_{:1}$ and there is not other zero column in $M$). The matrix $Q$ is then a nonsingular M-matrix if and only if $Q(2$:$n,2$:$n)$ is.  

\item 
The columns of $M$ sum to one. In fact, letting $P=D(M)$ be defined as in Equation~\eqref{diago}, by Lemma~\ref{permscal}, $B^*$ is an optimal solution for $M$ if and only if $P^{-1}B^*P$ is an optimal solution for $MP$. Since $B^*$ and $P^{-1}B^*P$ share the same eigenvalues, $\rho(B^*) < 1 \iff \rho(P^{-1}B^*P) < 1$.

\item 
Let $B \in \mathcal{B}^*(M)$, $Q = I-B^*$, and $P$ be a permutation matrix  such that 
\[
P^T Q P 
= \left( \begin{array}{ccccc} 
Q^{(1)} & Q^{(12)} &  Q^{(13)} & \dots & Q^{(1k)} \\
0 & Q^{(2)} &  Q^{(23)} & \dots & Q^{(2k)} \\
0 & 0 & Q^{(3)} & \dots & Q^{(3k)} \\
\vdots &  \dots & \ddots & \ddots & \vdots \\
0 & \dots &  \dots & 0 & Q^{(k)} \\
\end{array} \right) 
= I - 
\left( \begin{array}{ccccc} 
B^{(1)} & B^{(12)} &  B^{(13)} & \dots & B^{(1k)} \\
0 & B^{(2)} &  B^{(23)} & \dots & B^{(2k)} \\
0 & 0 & B^{(3)} & \dots & B^{(3k)} \\
\vdots &  \dots & \ddots & \ddots & \vdots \\
0 & \dots &  \dots & 0 & B^{(k)} \\
\end{array} \right), 
\]
where $Q^{(i)}$ are irreducible for all $i$.  Without loss of generality, by Lemma~\ref{permscal}, we can then assume that $Q$ has this form. 
\end{itemize} 


In the following we show that $Q^{(p)}$ is nonsingular for each $1 \leq p \leq k$ hence $Q$ is. 
By Theorem~\ref{olga}, if  $Q^{(p)}$ is irreducibly diagonally dominant, then $Q^{(p)}$ is nonsingular and the proof is complete. We already have that $Q^{(p)}$ is irreducible for $1 \leq p \leq k$. 
Let $I_p$ denote the index set such that $Q^{(p)} = Q(I_p,I_p)$. We have $M(I_p,:)$ is column stochastic, and 
\[
\mathcal{P}(M)(I_p,:) = M(I_p,:) -  \sum_{l=1}^{p-1} M(I_l,:) B^{(lp)} - M(I_p,:) B^{(p)} \geq 0, 
\]
implying that $M(I_p,:) \geq M(I_p,:) B^{(p)}$.  Moreover the columns of $M(I_p,:)$ are distinct so that there is at least one which does not belong to the convex hull of the others. Hence, by Lemma~\ref{diagdom}, $Q^{(p)}$ is irreducibly diagonally dominant. 
\end{proof}

\begin{corollary} \label{existe}
Let $M \in \mathbb{R}^{m \times n}_+$. If the vertices of $\conv(\theta(M))$ are non-repeated, then there exists one optimal solution $B^* \in \mathcal{B}^*(M)$ such that $\rho(B^*) < 1$, i.e., such that $Q = I-B^*$ is an inverse-positive matrix. 
\end{corollary}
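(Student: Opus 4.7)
The plan is to reduce to Theorem~\ref{specrad} by contracting columns that are multiples of one another, and then lift the resulting solution back to $M$ in a way that keeps its spectral radius below one. Since $\mathcal{B}^*(M)$ may contain several elements, the goal is to exhibit one particular member with the required property, not to prove it for every member.

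Partition the columns of $M$ into equivalence classes $C_1,\dots,C_s$ under the relation ``is a positive multiple of'', choose a representative $r_p\in C_p$, and form $\tilde M := M(:,\{r_1,\dots,r_s\})$. No column of $\tilde M$ is a multiple of another, so Theorem~\ref{specrad} yields $\tilde B^*\in\mathcal B^*(\tilde M)$ with $\rho(\tilde B^*)<1$. The hypothesis that the vertices of $\conv(\theta(M))$ are non-repeated ensures that every vertex lies in a singleton class, so any non-singleton $C_p$ must correspond to a non-vertex $r_p$; this is the fact that will keep the per-column optima of the full and reduced problems in sync at representative columns.

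Lift $\tilde B^*$ to $B^*\in\mathbb R^{n\times n}$ as follows: for each representative $r_p$ set $B^*_{r_q,r_p}=\tilde B^*_{q,p}$ and $B^*_{j,r_p}=0$ for $j\notin\{r_1,\dots,r_s\}$; for each non-representative $j\in C_p\setminus\{r_p\}$ set the single entry $B^*_{r_p,j}=c_j$, where $M_{:j}=c_j M_{:r_p}$. After permuting the representatives to the first $s$ indices, $B^*$ becomes block upper triangular,
\[
B^* \;=\; \begin{pmatrix} \tilde B^* & C \\ 0 & 0 \end{pmatrix},
\]
so its spectrum equals that of $\tilde B^*$ together with zeros, and $\rho(B^*)=\rho(\tilde B^*)<1$.

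What remains is to show $B^*\in\mathcal B^*(M)$. Feasibility is a direct computation: $MB^*_{:,r_p}=\tilde M\tilde B^*_{:,p}\leq \tilde M_{:,p}=M_{:,r_p}$ and $MB^*_{:,j}=c_j M_{:,r_p}=M_{:,j}$, while nonnegativity and vanishing diagonal are clear. By the column-wise decomposition \eqref{tigrelsep} it suffices to match, for each column $i$, the per-column minimum of the full problem with the value attained by $B^*_{:,i}$. Non-representative columns attain zero, which is also the full minimum by Lemma~\ref{lem2}. For a representative $r_p$, any feasible $MB_{:,r_p}$ in the full problem equals $\tilde M\tilde B_{:,p}$ for some $\tilde B_{:,p}\geq 0$ obtained by aggregating contributions within each class, and $B_{r_p,r_p}=0$ forces $\tilde B_{p,p}=0$ exactly when $C_p$ is singleton, in which case the full and reduced per-column problems coincide. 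When $C_p$ is non-singleton the restriction $\tilde B_{p,p}=0$ is relaxed by the full problem, but then $r_p$ is not a vertex of $\conv(\theta(M))$, hence $\tilde M_{:,p}$ lies in the cone of the remaining representatives and both per-column optima equal zero (attained on the full side via a duplicate column, on the reduced side via convex-hull membership of the non-vertex $r_p$). Matching the optima across this singleton/non-singleton dichotomy, which is exactly where the non-repeated-vertex hypothesis enters, is the main obstacle.
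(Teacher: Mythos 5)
Your argument is correct, and it reaches the conclusion by a genuinely different reduction than the paper's. The paper also exhibits one block-triangular optimal solution, but it collapses much more aggressively: it puts the $r$ vertex columns first and uses Lemma~\ref{lem2} to zero out \emph{every} non-vertex column with a combination supported on the vertex columns alone, so that $B^*$ has the form $\bigl(\begin{smallmatrix} B_1^* & B_{12}^* \\ 0 & 0\end{smallmatrix}\bigr)$; since no vertex column lies in the convex hull of the others, Lemma~\ref{diagdom} applies with strict inequality to $Q_1 = I - B_1^*$, giving strict diagonal dominance and hence nonsingularity directly, with no need to re-enter the irreducible-block machinery of Theorem~\ref{specrad}. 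You instead perform the \emph{minimal} contraction needed to make Theorem~\ref{specrad} applicable --- one representative per ``positive multiple'' class, keeping all distinct directions including non-vertex ones --- invoke that theorem as a black box on $\tilde M$, and lift. The price of your route is the per-column optimality matching, which you correctly identify as the crux and resolve via the singleton/non-singleton dichotomy: the non-repeated-vertices hypothesis enters in your proof to guarantee that a repeated direction is a non-vertex (hence in the cone of the other representatives, making both per-column optima zero), and in the paper's proof to guarantee that no vertex column is in the convex hull of the other vertex columns. What your version buys is a transparent logical dependence on Theorem~\ref{specrad} and a cleaner justification of why the constructed $B^*$ is actually optimal column by column (the paper's corresponding assertion that an optimal solution of the stated block form exists is stated more tersely); what the paper's version buys is brevity, since strict diagonal dominance of the vertex block settles everything in one step. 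One cosmetic remark: your closing sentence announces the optimum-matching as ``the main obstacle'' after you have already dispatched it; you may want to rephrase so it does not read as an open gap.
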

\begin{proof}
Let us show that $Q$ is a nonsingular M-matrix. First, by Lemma~\ref{diagdom}, $Q$ is diagonally dominant implying $\rho(B) \leq 1$ so that $Q$ is an M-matrix (cf. Theorem~\ref{specrad}). We can assume without loss of generality that the $r$ first  columns of $M$ correspond to the vertices of $\conv(\theta(M))$. This implies that there exists an optimal solution $B^* \in \mathcal{B}^*(M)$ with the following form
\[
\left( \begin{array}{cc} Q_1 & Q_{12} \\ 0 & I \end{array} \right) 
= I - \left( \begin{array}{cc} B_1^* & B^*_{12} \\ 0 & 0 \end{array} \right). 
\]
In fact, by assumption, the last columns of $M$ belong to the convex cone of the $r$ first ones and can then be set to zero (which is optimal) using only the first $r$ columns (cf.\@ Lemma~\ref{lem2}). Lemma~\ref{diagdom} applies on matrix $Q_1$ and $M(:,1\text{:}r)$ since  
\[
MQ(:,1\text{:}r) = M(:,1\text{:}r) - M(:,1\text{:}r) B_1^* \geq 0, 
\]
while by assumption no column of $M(:,1\text{:}r)$ belong to the convex hull of the other columns, so that $Q_1$ is strictly diagonally dominant hence is a nonsingular M-matrix. 
\end{proof}

Finally, what really matters is that the vertices of $\conv(\theta(M))$ are non-repeated. 
In that case, the preprocessing is unique and the preprocessed matrix has the same rank as the original one. The fact that $Q$ could be singular is not too dramatic. In fact, given an NMF $(U,V')$ of the preprocessed matrix $\mathcal{P}(M) = MQ \approx UV'$, we can obtain the optimal factor $V$ for matrix $M$ by solving the nonnegative least squares problem $V = \argmin_{X \geq 0} ||M-UX||_F^2$ (instead of taking $V = V'Q^{-1}$) and obtain $M \approx UV$. \\

%


\subsection{Recovery under Separability} \label{recusep}

\begin{definition}[Separability] A nonnegative factorization $M = UV$ is called separable if for each $i$ there is some column $f(i)$ of $V$ that has a single nonzero entry and this entry is in the $i$th row, i.e., $V$ contains a monomial submatrix. In other words, each column of $U$ appears (up to a scaling factor) as a column of $M$. 
\end{definition}

Note that this assumption is equivalent to the pure pixel assumption in hyperspectral imaging (i.e., for each constitutive material present in the image, there is at least one pixel containing only that material) \cite{C94} or, in document classification (see Section~\ref{uniqueNMF}), to the assumption that, for each topic, there is at least one document corresponding only to that topic (or, considering the matrix transpose, that there is at least one word corresponding only to that topic \cite{AGKM11}).  


Geometrically, this means that the vertices of $\conv(\theta(M))$ are given by the columns of $\theta(U)$. 
We have the following straightforward lemma: 
\begin{lemma} \label{sepgeo}
$M = UV$ is separable if and only if $\conv(\theta(M)) = \conv(\theta(U))$. 
\end{lemma}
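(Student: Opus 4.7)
The plan is to establish the two implications separately, using only that for any exact NMF $M = UV$ the sandwich $\conv(\theta(M)) \subseteq \conv(\theta(U)) \subseteq \Delta^m$ from Equation~\eqref{poly} holds, and the standard fact that the vertices of the convex hull of finitely many points are always a subset of those points.

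For the forward direction ($\Rightarrow$), I would argue directly from the definition. Separability states that each column of $U$ equals, up to a positive scalar, some column of $M$. Passing to the pullback (which just normalizes columns), each column of $\theta(U)$ literally coincides with some column of $\theta(M)$. Therefore $\conv(\theta(U)) \subseteq \conv(\theta(M))$. Combined with the generic inclusion $\conv(\theta(M)) \subseteq \conv(\theta(U))$ from \eqref{poly}, equality follows.

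For the backward direction ($\Leftarrow$), suppose $\conv(\theta(M)) = \conv(\theta(U))$. Because $\theta(U)$ has only finitely many columns, the vertex set of $\conv(\theta(U))$ is a subset of $\{\theta(U)_{:1},\dots,\theta(U)_{:r}\}$; symmetrically, the vertex set of $\conv(\theta(M))$ is a subset of the columns of $\theta(M)$. Since the two polytopes are equal, they share the same vertex set, so every vertex of $\conv(\theta(U))$ is (up to scaling) a column of $M$. It remains to observe that in our setting each column of $\theta(U)$ is in fact a vertex of $\conv(\theta(U))$: this follows from the standing assumption $r = \rank_+(M)$ used throughout Section~\ref{nonu}, because a column of $\theta(U)$ lying in the convex hull of the others could be discarded, yielding an exact NMF of rank $r-1$ and contradicting the minimality of $r$. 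Hence every column of $U$ appears, up to a positive scalar, as a column of $M$, which is precisely separability.

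The only delicate point is the final sentence of the backward direction: one must justify that all $r$ columns of $\theta(U)$ are genuine vertices (and not, say, interior points of $\conv(\theta(U))$), for otherwise the conclusion that \emph{every} column of $U$ appears in $M$ would fail. This is harmless under the paper's standing hypothesis that $r$ is the nonnegative rank, but it is worth flagging explicitly rather than treating as obvious. The rest of the proof is purely the geometric interpretation of separability, and no new machinery is needed.
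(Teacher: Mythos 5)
Your proof is correct and follows essentially the same route as the paper, which disposes of the lemma in a one-line chain of equivalences (separable $\iff$ $V$ contains a monomial submatrix $\iff$ the vertices of $\theta(U)$ and $\theta(M)$ coincide $\iff$ $\conv(\theta(M)) = \conv(\theta(U))$). The only difference is that you explicitly justify, via $r = \rank_+(M)$, that every column of $\theta(U)$ is genuinely a vertex of $\conv(\theta(U))$ --- a point the paper's terse proof silently assumes when it speaks of ``the vertices of $\theta(U)$'' --- so your version is slightly more careful but not a different argument.
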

\begin{proof}
$M=UV$ is separable if and only if $V$ contains a monomial submatrix if and only if the vertices of $\theta(U)$ and $\theta(M)$ coincide if and only if $\conv(\theta(M)) = \conv(\theta(U))$. 
\end{proof}


\begin{theorem} \label{recsep}
If $M$ is separable and the $r$ vertices of $\theta(M)$ are non-repeated, then $\mathcal{P}(M)$ has $r$ non-zero columns, say $S_{:1}, S_{:2}, \dots S_{:r}$, such that  $\conv(\theta(M)) \subseteq \conv(\theta(S))$, i.e., there exists $R \geq 0$ such that $M = SR$. 
\end{theorem}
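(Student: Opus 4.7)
The plan is to combine the geometric characterization of separability (Lemma~\ref{sepgeo}) with the zero-column test for $\mathcal{P}$ (Lemma~\ref{lem2}) and the inflation property (Lemma~\ref{lem3}) to sort the columns of $M$ into vertices (which survive the preprocessing) and non-vertices (which get zeroed out).

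First I would invoke Lemma~\ref{sepgeo} to say that $\conv(\theta(M))=\conv(\theta(U))$, so the $r$ vertices of $\conv(\theta(M))$ are exactly the columns of $\theta(U)$ and, by the definition of separability, each appears as some column $\theta(M_{:f(k)})$ of $\theta(M)$; since the vertices are non-repeated, the map $k \mapsto f(k)$ picks out exactly $r$ distinct columns of $M$. I would then dispense with zero columns of $M$ up front (they obviously remain zero under $\mathcal{P}$ and the separable structure is unaffected), so I may assume $M$ has no zero columns and $\theta(M)$ is well defined on every column.

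Next I would show that every non-vertex column of $M$ is zeroed out. For such a column $M_{:i}$, the point $\theta(M_{:i})$ lies in $\conv(\theta(M))$ but is not a vertex; because the vertices of $\conv(\theta(M))$ all occur among the other columns $M(:,\mathcal{I})$ (where $\mathcal{I}=\{1,\dots,n\}\setminus\{i\}$), we have $\theta(M_{:i}) \in \conv(\theta(M(:,\mathcal{I})))$. Rescaling by $\|M_{:i}\|_1$ yields $M_{:i} \in \cone(M(:,\mathcal{I}))$, so Lemma~\ref{lem2} gives $\mathcal{P}(M)_{:i}=0$. Conversely, for each vertex column $M_{:f(k)}$, the non-repetition assumption implies $\theta(M_{:f(k)})$ is a genuine extreme point of $\conv(\theta(M))$ and hence does not lie in the convex hull of the remaining columns of $\theta(M)$; equivalently, $M_{:f(k)} \notin \cone(M(:,\mathcal{I}))$, so Lemma~\ref{lem2} yields $\mathcal{P}(M)_{:f(k)} \neq 0$. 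Therefore $\mathcal{P}(M)$ has exactly $r$ non-zero columns, which I denote $S_{:1},\ldots,S_{:r}$.

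Finally, Lemma~\ref{lem3} gives $\conv(\theta(M)) \subseteq \conv(\theta(\mathcal{P}(M)))=\conv(\theta(S))$ (the last equality by the extended definition of $\theta$ which discards zero columns). Thus each $\theta(M_{:j})$ can be written as a convex combination $\sum_k \alpha_{kj}\,\theta(S_{:k})$ with $\alpha_{kj}\ge 0$; multiplying through by $\|M_{:j}\|_1$ and absorbing the factor $\|S_{:k}\|_1^{-1}\|M_{:j}\|_1$ into the weights yields $M_{:j} = \sum_k R_{kj} S_{:k}$ with $R_{kj}\ge 0$, i.e.\ $M=SR$ with $R\ge 0$, which completes the proof.

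The only delicate point I foresee is verifying the cone-to-convex-hull equivalence used in the vertex/non-vertex dichotomy: namely that for column-stochastic target $\theta(M_{:i})$, membership in $\cone(M(:,\mathcal{I}))$ is exactly the same as membership of $\theta(M_{:i})$ in $\conv(\theta(M(:,\mathcal{I})))$. This is a short normalization argument using that the weights in any conic representation must sum to one once both sides are column stochastic, but it is the lynchpin of the whole argument and worth writing out carefully.
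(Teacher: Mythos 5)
Your proposal is correct and follows essentially the same route as the paper, whose proof is the one-line statement that the result follows from Lemmas~\ref{lem2} and \ref{sepgeo}; you simply spell out the vertex/non-vertex dichotomy and the cone-versus-convex-hull normalization that the paper leaves implicit. Your explicit appeal to Lemma~\ref{lem3} for the containment $\conv(\theta(M)) \subseteq \conv(\theta(S))$ is in fact a needed ingredient the paper's citation omits, so your write-up is, if anything, more complete.
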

\begin{proof} 
This is a consequence of Lemmas~\ref{lem2} and \ref{sepgeo}. 
\end{proof} 
Theorem~\ref{recsep} shows that the preprocessing is able to identify the $r$ columns of $M=UV$ corresponding to the vertices of $\theta(M)$. Moreover, it returns a sparser matrix $S$, namely $\mathcal{P}(U)$, whose cone contains the columns of $M$. 
Remark also that Theorem~\ref{recsep} does not require $M$ to be full rank, i.e., the dimension of $\conv(\theta(M))$ can be smaller than $r-1$. 


\begin{corollary} \label{rk2}
For any rank-two nonnegative matrix $M$ whose columns are not multiples of each other,  $\mathcal{P}(M)$ has only two non-zero columns, say $S_{:1}$ and  $S_{:2}$ such that $\conv(\theta(M)) \subseteq \conv(\theta(S))$, i.e., there exists $R \geq 0$ such that $M = SR$. In other words, the preprocessing technique is optimal as it is able to identify an optimal nonnegative basis for the NMF problem corresponding to the matrix $M$.  
\end{corollary}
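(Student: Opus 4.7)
My plan is to reduce the statement to Theorem~\ref{recsep} by showing that every rank-two nonnegative matrix $M$ with no two columns proportional is automatically separable with $r=2$, and that its two vertices are non-repeated. Once that is established, Theorem~\ref{recsep} (applied with $r=2$) immediately delivers both the two non-zero columns of $\mathcal{P}(M)$ and the inclusion $\conv(\theta(M)) \subseteq \conv(\theta(S))$, which in turn gives the existence of $R \geq 0$ with $M = SR$ by taking convex-combination coefficients.

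First I would observe that since $\rank(M)=2$, the column space of $\theta(M)$ is a two-dimensional subspace, so $\conv(\theta(M)) \subseteq \Delta^m \cap \col(\theta(M))$ lies inside a one-dimensional affine set and is therefore a line segment. Hence $\conv(\theta(M))$ has at most two vertices; since not all columns of $\theta(M)$ coincide (columns of $M$ are not multiples of each other, so their normalizations are distinct), it has exactly two vertices. A standard fact about convex hulls of finite point sets is that every vertex of $\conv(\theta(M))$ is itself one of the columns of $\theta(M)$, so there exist two indices $i_1, i_2$ such that $\theta(M)_{:i_1}$ and $\theta(M)_{:i_2}$ are the two vertices of $\conv(\theta(M))$.

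Next I would use this to verify the hypotheses of Theorem~\ref{recsep} with $U = [\, M_{:i_1} \; M_{:i_2}\,]$. Every other column $\theta(M)_{:j}$ lies on the segment joining the two vertices, so it is a convex combination of $\theta(M)_{:i_1}$ and $\theta(M)_{:i_2}$; undoing the pullback $\theta$ gives $M_{:j}$ as a nonnegative combination of $M_{:i_1}$ and $M_{:i_2}$. Hence $M = UV$ for some $V \geq 0$, and by construction $V$ contains a monomial submatrix at columns $i_1, i_2$, so $M$ is separable. Moreover, since the two vertex columns $\theta(M)_{:i_1}$ and $\theta(M)_{:i_2}$ are distinct (the columns of $M$ are not multiples of each other), the two vertices of $\theta(M)$ are non-repeated. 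Theorem~\ref{recsep} then yields two non-zero columns $S_{:1}, S_{:2}$ of $\mathcal{P}(M)$ with $\conv(\theta(M)) \subseteq \conv(\theta(S))$, and rewriting the convex-hull inclusion columnwise gives $M = SR$ for some $R \geq 0$.

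I do not anticipate a real obstacle here: the corollary is essentially a direct specialization of Theorem~\ref{recsep}, and the main content is the elementary geometric observation that a rank-two nonnegative matrix whose columns are pairwise non-proportional is automatically separable (its normalized columns lie on a line segment whose two endpoints must themselves be columns). The only small care is that Theorem~\ref{recsep} does not require $M$ to be full-dimensional, so its use in the rank-two setting is entirely legitimate; and that the assumption $M_{:i} \not\parallel M_{:j}$ is stronger than requiring only the two \emph{vertex} columns to be non-repeated, which is all that is needed to invoke Theorem~\ref{recsep}.
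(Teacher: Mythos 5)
Your proposal is correct and follows essentially the same route as the paper: both reduce the corollary to Theorem~\ref{recsep} by observing that a rank-two nonnegative matrix with pairwise non-proportional columns is automatically separable with non-repeated vertices. The paper phrases this via the cone picture (a two-dimensional pointed cone is spanned by two extreme vectors), while you phrase it via the equivalent segment picture for $\conv(\theta(M))$; the extra detail you supply is a fleshed-out version of the same one-line observation.
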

\begin{proof}
A rank-two nonnegative matrix is always separable. In fact, a two-dimensional pointed cone is always spanned by two extreme vectors. In particular $\rank(M) = 2 \iff \rank_+(M) = 2$, see, e.g.,~\cite{Tho}. 
\end{proof}


\begin{example} \label{sepex}
Here is an example with a rank-three separable matrix 
\begin{equation} \label{rk3sep}
M =  \left( 
\begin{array}{cccccccccc} 
5&5&5&5&9&1&4&1&7&7 \\
10&6&5&3&7&8&4&1&5&8 \\
8&9&9&4&7&8&3&9&6&7 \\ 
\end{array} 
\right)^T \left( 
\begin{array}{cccccccc} 
1&0&0&2&3&6&4&4\\
0&1&0&5&7&7&7&4\\
0&0&1&9&4&4&8&6\\ 
\end{array} 
\right). 
\end{equation}
Its preprocessed version is 
\[
\mathcal{P}(M) =  \left( 
\begin{array}{cccccccccc} 
3.6&3.85&3.93&4.29&7.61&  0&3.32&0.48&5.93&5.66\\
6.27&2.54&1.62&0&1.48&6.49&1.48& 0&0.72&3.44\\
0.8&2.4&2.67&0.67&0.67&1.78&0&4.2&0.93&0.62 \\ 
\end{array} 
\right)^T \left( 
\begin{array}{cccccccc} 
1&0&0&0&0&0&0&0\\
0&1&0&0&0&0&0&0\\
0&0&1&0&0&0&0&0\\ 
\end{array} 
\right). 
\]
Figure~\ref{sepa} shows the geometric interpretation of the preprocessing. 
Notice that the preprocessing makes the solution to the corresponding NMF problem unique. 
\begin{figure}[ht!]
\begin{center}
\includegraphics[width=10cm]{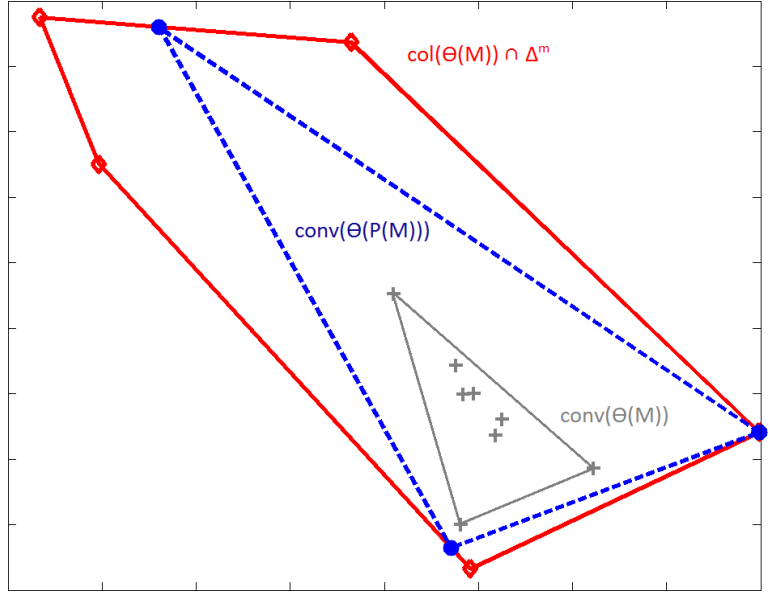}
\caption{Geometric interpretation of the preprocessing of matrix $M$ from Equation~\eqref{rk3sep}.} 
\label{sepa}
\end{center}
\end{figure} 
\end{example}

\subsection{Uniqueness and Robustness Through Preprocessing} \label{unirobpre}

A potential drawback of the preprocessing is that it might increase the nonnegative rank of $M$. In this section, we show how to modulate the preprocessing to prevent this behavior.  \\

Let us define 
\[
\mathcal{P}^{\alpha}(M) = M (I-\alpha B^*)  =  M - \alpha M  B^*, 
\] 
where $0 \leq \alpha \leq 1$ and $B^* \in \mathcal{B}^*(M)$. Notice that $\mathcal{P}^{\alpha}(M)$ is well-defined 
because for any $B_1^*, B_2^* \in \mathcal{B}^*(M)$ we have $M B_1^* = M B_2^*$; see Theorem~\ref{unik}.

\begin{lemma} \label{alphabeta}
Let $M$ be a nonnegative matrix such that the vertices of $\conv(\theta(M))$ are non-repeated.  Then, for any $0 \leq \alpha \leq \beta \leq 1$, 
\[
\conv( \theta(M) ) 
\subseteq \conv( \theta(  \mathcal{P}^{\alpha}(M) ) )
\subseteq \conv( \theta(  \mathcal{P}^{\beta}(M) ) ) 
\subseteq \col( \theta(M) ) \cap \Delta^m. 
\]
Therefore, 
\[
\rank_+(M) \leq \rank_+( \mathcal{P}^{\alpha}(M) ) \leq \rank_+( \mathcal{P}^{\beta}(M) ).
\]
\end{lemma}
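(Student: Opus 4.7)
The plan is to first invoke Corollary~\ref{existe} to fix a representative $B^* \in \mathcal{B}^*(M)$ with $\rho(B^*) < 1$; note that $\mathcal{P}^{\alpha}(M) = M(I - \alpha B^*)$ is independent of the choice by Theorem~\ref{unik}. Because $\rho(\alpha B^*) = \alpha\rho(B^*) < 1$ for every $\alpha \in [0,1]$, the matrix $I - \alpha B^*$ is a nonsingular M-matrix (Theorem~\ref{berplem}), so its inverse is entrywise nonnegative. The outermost inclusion $\conv(\theta(\mathcal{P}^{\beta}(M))) \subseteq \col(\theta(M)) \cap \Delta^m$ then follows immediately since the columns of $\mathcal{P}^{\beta}(M) = M(I-\beta B^*)$ lie in $\col(M)$ and $\theta$-normalization places them in $\Delta^m$.

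For the innermost inclusion I would write
\[
M \;=\; \mathcal{P}^{\alpha}(M)\, (I - \alpha B^*)^{-1},
\]
which, together with $(I - \alpha B^*)^{-1} \geq 0$, expresses each column of $M$ as a nonnegative combination of the columns of $\mathcal{P}^{\alpha}(M)$; after $\theta$-normalization this becomes a convex combination, yielding $\conv(\theta(M)) \subseteq \conv(\theta(\mathcal{P}^{\alpha}(M)))$. For the middle inclusion I would use the identity
\[
\mathcal{P}^{\alpha}(M) \;=\; \bigl(1 - \tfrac{\alpha}{\beta}\bigr)\, M \;+\; \tfrac{\alpha}{\beta}\, \mathcal{P}^{\beta}(M) \qquad (\beta > 0;\ \beta = 0 \text{ forces } \alpha = 0 \text{ and the claim is trivial}),
\]
and combine it with the innermost inclusion at level $\beta$, namely $M = \mathcal{P}^{\beta}(M)\, S$ with $S = (I - \beta B^*)^{-1} \geq 0$, to obtain
\[
\mathcal{P}^{\alpha}(M) \;=\; \mathcal{P}^{\beta}(M) \Bigl[ \bigl(1 - \tfrac{\alpha}{\beta}\bigr)\, S + \tfrac{\alpha}{\beta}\, I \Bigr].
\]
The bracketed factor is entrywise nonnegative, so each column of $\mathcal{P}^{\alpha}(M)$ is a nonnegative combination of columns of $\mathcal{P}^{\beta}(M)$, and after $\theta$-normalization this proves $\conv(\theta(\mathcal{P}^{\alpha}(M))) \subseteq \conv(\theta(\mathcal{P}^{\beta}(M)))$.

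The rank statement then follows because both chained inclusions above are actually witnessed by a right-multiplication $X = Y Z$ with $Z \geq 0$: given any exact NMF $Y = W V$ with $\rank_+(Y)$ factors, $X = W(V Z)$ is an NMF of $X$ of the same inner dimension, so $\rank_+(X) \leq \rank_+(Y)$. Applying this with $(X,Y) = (M,\mathcal{P}^{\alpha}(M))$ and then $(X,Y) = (\mathcal{P}^{\alpha}(M), \mathcal{P}^{\beta}(M))$ produces the desired chain. The main bookkeeping obstacle I expect is that $\mathcal{P}^{\alpha}(M)$ or $\mathcal{P}^{\beta}(M)$ can acquire zero columns (exactly when $M_{:i} \in \cone(M(:,\mathcal{I}))$, by Lemma~\ref{lem2}), on which $\theta$ is not defined; but since zero columns contribute zero to any nonnegative combination, they may simply be dropped, and neither the inclusions nor the rank bounds are affected.
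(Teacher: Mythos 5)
Your proof is correct, but it follows a genuinely different route from the paper's. The paper disposes of this lemma by repeating the argument of Lemma~\ref{lem3}: for each vertex column it rewrites $\mathcal{P}^{\alpha}(M)_{:i} = M_{:i} - \alpha\sum_{k\neq i}B^*_{ki}M_{:k}$ as $M_{:i} = \mathcal{P}^{\alpha}(M)_{:i} + \alpha\sum_{k\neq i}B^*_{ki}M_{:k}$ and observes that replacing $M_{:i}$ by $\mathcal{P}^{\alpha}(M)_{:i}$ can only inflate the convex hull; no invertibility of $I-\alpha B^*$ is needed there. You instead invoke Corollary~\ref{existe} to pick $B^*\in\mathcal{B}^*(M)$ with $\rho(B^*)<1$ (legitimate: its hypothesis is exactly that the vertices of $\conv(\theta(M))$ are non-repeated, and $\mathcal{P}^{\alpha}(M)$ does not depend on the choice of $B^*$), deduce $(I-\alpha B^*)^{-1}\geq 0$ for all $\alpha\in[0,1]$ via Theorem~\ref{berplem}, and read off the inclusions from the global factorizations $M=\mathcal{P}^{\alpha}(M)(I-\alpha B^*)^{-1}$ and $\mathcal{P}^{\alpha}(M)=\mathcal{P}^{\beta}(M)\bigl[(1-\alpha/\beta)(I-\beta B^*)^{-1}+(\alpha/\beta)I\bigr]$. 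What this buys you is a clean treatment of the middle inclusion --- precisely the step that does not literally reduce to ``the same steps as Lemma~\ref{lem3}'' --- and it makes the $\rank_+$ chain immediate from the elementary fact that $X=YZ$ with $Z\geq 0$ implies $\rank_+(X)\leq\rank_+(Y)$, rather than passing through the nested-polytope interpretation; your handling of zero columns is also fine. The one point worth making explicit is that $\mathcal{P}^{\beta}(M)=(1-\beta)M+\beta\,\mathcal{P}(M)\geq 0$, which is what licenses applying $\theta$ to its columns and landing in $\Delta^m$ for the outermost inclusion.
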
  
\begin{proof}
The proof can be obtained by following exactly the same steps as the proof of Lemma~\ref{lem3}. 
\end{proof}


\begin{lemma}  \label{usc}
Let $M$ be a nonnegative matrix such that the vertices of $\conv(\theta(M))$ are non-repeated, then the supremum 
\begin{equation} \label{supr}
\bar{\alpha} = \sup_{0 \leq \alpha \leq 1} \alpha \quad \text{ such that } \quad \rank_+( \mathcal{P}^{\alpha}(M) ) = \rank_+( M ) 
\end{equation}
is attained. 
\end{lemma}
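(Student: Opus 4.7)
Write $r = \rank_+(M)$ and consider
\[
S = \{ \alpha \in [0,1] \ | \ \rank_+(\mathcal{P}^\alpha(M)) = r \}.
\]
The plan is to show that $S$ is a closed subset of $[0,1]$, so that $\bar\alpha = \sup S$ is actually attained. Since $\mathcal{P}^0(M)=M$ we have $0 \in S$; and by Lemma~\ref{alphabeta} the map $\alpha \mapsto \rank_+(\mathcal{P}^\alpha(M))$ is monotone nondecreasing, so $S$ is in fact an interval of the form $[0,\bar\alpha)$ or $[0,\bar\alpha]$. It therefore suffices to exhibit an exact nonnegative rank-$r$ factorization of the endpoint matrix $\mathcal{P}^{\bar\alpha}(M)$.

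First I would pick a sequence $(\alpha_k) \subseteq S$ with $\alpha_k \nearrow \bar\alpha$, together with exact NMFs $\mathcal{P}^{\alpha_k}(M) = U_k V_k$ with $(U_k,V_k) \in \mathbb{R}^{m\times r}_+\times \mathbb{R}^{r \times n}_+$. Observe that no column of $U_k$ can vanish: otherwise deleting that column and the corresponding row of $V_k$ would produce an NMF of $\mathcal{P}^{\alpha_k}(M)$ with at most $r-1$ factors, contradicting $\rank_+(\mathcal{P}^{\alpha_k}(M))=r$. After absorbing positive scalings into $V_k$, I may therefore assume that each column of $U_k$ lies in $\Delta^m$, so that the whole matrix $U_k$ sits in a compact set.

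The crux is to obtain a uniform bound on $V_k$. Summing the identity $\mathcal{P}^{\alpha_k}(M)=U_kV_k$ over rows and using $\|(U_k)_{:i}\|_1=1$ yields $\|(V_k)_{:j}\|_1 = \|\mathcal{P}^{\alpha_k}(M)_{:j}\|_1$ for every $j$. Since $\mathcal{P}^{\alpha_k}(M) = M - \alpha_k M B^*$ depends continuously on $\alpha_k \in [0,1]$, the right-hand side is bounded uniformly in $k$, and hence so is $V_k$. By compactness I extract a subsequence along which $U_k \to U \in \mathbb{R}^{m\times r}_+$ and $V_k \to V \in \mathbb{R}^{r \times n}_+$; passing to the limit in $U_k V_k = \mathcal{P}^{\alpha_k}(M)$ gives $UV = \mathcal{P}^{\bar\alpha}(M)$, whence $\rank_+(\mathcal{P}^{\bar\alpha}(M)) \leq r$. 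Combined with the reverse inequality $\rank_+(\mathcal{P}^{\bar\alpha}(M)) \geq r$ from Lemma~\ref{alphabeta}, this gives $\bar\alpha \in S$, as desired.

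The delicate step is exactly this boundedness of $V_k$: the nonnegative rank is not lower semi-continuous in general, so without a priori control on the factors the limit of $U_k V_k$ could, in principle, have strictly larger nonnegative rank than its approximants. What rescues the argument is the column-stochastic normalization of $U_k$, which transfers the uniform bound on the columns of $\mathcal{P}^{\alpha_k}(M)$ into a uniform bound on the columns of $V_k$, restoring compactness along the sequence and letting the limit be taken honestly.
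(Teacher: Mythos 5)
Your proof is correct. It establishes the same key inequality as the paper, namely $\rank_+(\mathcal{P}^{\bar{\alpha}}(M)) \le \rank_+(M)$, but by a genuinely different and more self-contained route. The paper first reduces to the case where $M$ has no zero column, observes that $\mathcal{P}^{\alpha}(M)$ then has no zero column for $\alpha < 1$, and invokes the semicontinuity theorem of \cite[Theorem~3.1]{BCR10} at $P = \mathcal{P}^{\bar{\alpha}}(M)$: if the supremum were not attained, every matrix in a small ball around $\mathcal{P}^{\bar{\alpha}}(M)$ would have nonnegative rank strictly larger than $\rank_+(M)$, contradicting $\rank_+(\mathcal{P}^{\alpha}(M)) = \rank_+(M)$ for $\alpha$ just below $\bar{\alpha}$. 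You instead prove the needed semicontinuity from scratch: the column-stochastic normalization of $U_k$ converts the uniform bound on $\|\mathcal{P}^{\alpha_k}(M)_{:j}\|_1$ into a uniform bound on $V_k$, the factor pairs then live in a compact set, and an exact factorization with $r$ inner dimensions survives in the limit. In effect you show that $\{N \geq 0 \mid \rank_+(N) \leq r\}$ is closed, which buys you two things over the citation: the argument is elementary, and it needs no hypothesis on zero columns (the paper must treat that case separately because the quoted theorem excludes it, and the boundary case $\bar{\alpha}=1$, where $\mathcal{P}(M)$ may well have zero columns, is exactly where that caveat bites). What the paper's route buys in exchange is brevity. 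One small correction to your closing commentary: the failure you describe as possible ``in principle'' cannot actually occur --- your own argument shows that the sublevel sets of $\rank_+$ are closed unconditionally, so a limit of matrices of nonnegative rank at most $r$ always has nonnegative rank at most $r$; the normalization is not a device rescuing a fragile statement but the proof of an unconditional one. The semicontinuity that genuinely fails for $\rank_+$ is the opposite one: the nonnegative rank can jump up under arbitrarily small perturbations, as for $[e_1 \; e_1]$.
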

\begin{proof}
We can assume without loss of generality that $M$ does not have all-zero columns. In fact, if $M_{:i} = 0$ for some $i$ then  $\mathcal{P}^{\alpha}(M)_{:i} = 0$ for all $\alpha \in [0,1]$ so that the nonnegative rank of $\mathcal{P}^{\alpha}(M)$ is not affected by the zero columns of $M$. 

Then, if $\alpha = 1$, the proof is complete. Otherwise, one can easily check that, for any $0 \leq \alpha < 1$, we have $\mathcal{P}^{\alpha}(M)_{:i} \neq 0$ $\forall i$ (using the same argument as in Lemma~\ref{lem2}). 

Finally, the result follows from the upper-semicontinuity of the nonnegative rank \cite[Theorem 3.1]{BCR10}: `If $P$ is nonnegative matrix, without zero columns and with $\rank_+(P) = k$, then there exists a ball $\mathcal{B}(P,\epsilon)$ centered at $P$ and of radius $\epsilon > 0$ such that $\rank_+(N) \geq k$ for all $N \in \mathcal{B}(P,\epsilon)$'. 
Therefore, if the supremum of \eqref{supr} was not attained, the matrix $\mathcal{P}_{\bar{\alpha}}(M)$ would satisfy $\rank_+( \mathcal{P}_{\bar{\alpha}}(M) ) > \rank_+( M )$ while for any $\alpha < \bar{\alpha}$ we would have $\rank_+( \mathcal{P}_{{\alpha}}(M) ) = \rank_+( M )$, a contradiction.  
\end{proof}

Hence working with matrix $\mathcal{P}^{\bar{\alpha}}(M)$ instead of $M$ will reduce the number of solutions of the NMF problem while preserving the nonnegative rank: 
\begin{theorem} \label{morewellposed} 
Let $M$ be a nonnegative matrix for which the vertices of $\conv(\theta(M))$ are non-repeated, let also $\bar{\alpha}$ be defined as in Equation~\eqref{supr}. Then any NMF $(U,V)$ of $\mathcal{P}^{\bar{\alpha}}(M)$ corresponds to an NMF $(U,VQ^{-1})$ of $M$, while the converse is not true. In fact, 
\[
\conv(\theta(M)) \subseteq \conv(\theta(\mathcal{P}^{\bar{\alpha}}(M))).
\] 
Therefore, the NMF problem for $\mathcal{P}^{\bar{\alpha}}(M)$  is more well posed. 
\end{theorem}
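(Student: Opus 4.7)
The plan is to unpack the statement into three separable claims and dispatch each by invoking a previously established result. First, I must show that $Q = I - \bar{\alpha} B^*$ is inverse positive (so the transfer $(U,V) \mapsto (U, V Q^{-1})$ lands in the nonnegative cone). Second, I must verify the convex hull containment to certify that NMFs of $\mathcal{P}^{\bar{\alpha}}(M)$ produce NMFs of $M$. Third, I must argue that the inclusion is in general strict, which is what makes the converse direction fail and the problem ``more well posed.''

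For the first step, since the vertices of $\conv(\theta(M))$ are non-repeated, Corollary~\ref{existe} yields some $B^* \in \mathcal{B}^*(M)$ with $\rho(B^*) < 1$; by Theorem~\ref{unik} the product $M B^*$ is independent of the choice of $B^*$, so I may fix this particular $B^*$ without loss of generality. Because $\bar{\alpha} \in [0,1]$, the matrix $\bar{\alpha} B^*$ is still nonnegative with zero diagonal, so $Q = I - \bar{\alpha} B^*$ is a Z-matrix, and $\rho(\bar{\alpha} B^*) = \bar{\alpha}\, \rho(B^*) \le \rho(B^*) < 1$. Theorem~\ref{berplem} then gives that $Q$ is an invertible M-matrix and hence $Q^{-1} \ge 0$. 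From here the forward implication is mechanical: if $(U,V)$ is an NMF of $\mathcal{P}^{\bar{\alpha}}(M) = MQ$, then $M = \mathcal{P}^{\bar{\alpha}}(M)\, Q^{-1} \approx U V Q^{-1}$, and $V Q^{-1} \ge 0$ because both factors are nonnegative.

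The second step, the convex hull containment, is a direct instance of Lemma~\ref{alphabeta} applied with $\alpha = 0$ and $\beta = \bar{\alpha}$, since $\mathcal{P}^{0}(M) = M$. No new argument is needed.

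For the third step, I would argue geometrically through the nested polytopes picture. An NMF $(U,V)$ of $M$ (resp.\@ of $\mathcal{P}^{\bar{\alpha}}(M)$) with $\rank(U)=r$ corresponds to an inner polytope $T = \conv(\theta(U))$ nested between $\conv(\theta(M))$ (resp.\@ $\conv(\theta(\mathcal{P}^{\bar{\alpha}}(M)))$) and $\Delta^m \cap \col(\theta(M))$. Since $\conv(\theta(M)) \subseteq \conv(\theta(\mathcal{P}^{\bar{\alpha}}(M)))$ and the inclusion is in general strict (as soon as the preprocessing is non-trivial; for a concrete witness one may point to Example~\ref{sepex}, where the preprocessed inner polytope coincides with the unique feasible $T$ while the original NPP$(M)$ admits a continuum of nested polytopes), there exist candidate polytopes $T$ that nest around $\conv(\theta(M))$ but not around $\conv(\theta(\mathcal{P}^{\bar{\alpha}}(M)))$. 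Each such $T$ produces an NMF of $M$ with no counterpart for $\mathcal{P}^{\bar{\alpha}}(M)$, which is exactly the failure of the converse, and simultaneously confirms the ``more well posed'' conclusion since the set of admissible $U$'s shrinks. The main obstacle here is only conceptual clarity, namely being explicit that ``more well posed'' is quantified by strict shrinking of the solution set rather than by uniqueness, so the statement holds even when strict containment cannot be asserted for every $M$.
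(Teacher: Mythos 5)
Your proof is correct and follows essentially the same route as the paper, whose own proof is a one-line appeal to the definition of $\bar{\alpha}$ together with Lemmas~\ref{alphabeta} and \ref{usc}; your core step (the hull containment via Lemma~\ref{alphabeta} with $\alpha=0$, $\beta=\bar{\alpha}$) is exactly the paper's. The only addition is that you explicitly verify $Q=I-\bar{\alpha}B^*$ is inverse positive via Corollary~\ref{existe}, $\rho(\bar{\alpha}B^*)=\bar{\alpha}\rho(B^*)<1$, and Theorem~\ref{berplem} — a detail the paper leaves implicit — and you correctly flag that ``more well posed'' should be read as shrinkage of the solution set rather than guaranteed strict containment.
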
 
\begin{proof}
This follows directly from the definition of $\bar{\alpha}$, and Lemmas~\ref{alphabeta} and \ref{usc}. 
\end{proof}

We now illustrate Corollary~\ref{morewellposed} on a simple example, which will lead to three other important results. 
\begin{example}[Nested Squares] \label{2sqex} Let 
\[
M = \left(  \begin{array}{cccc} 
5   &  3  &   3  &   5 \\
3   &  5   &  5  &   3 \\
5   &  5   &  3  &   3 \\
3   &  3   &  5  &   5 \\
\end{array} 
\right). 
\]
The problem NPP($M$) restricted to the column space of $M$ is made up of two nested squares, $\conv( \theta(M) )$ and $\col( \theta(M) ) \cap \Delta^m$, centered at $(0,0)$ with side length 2 and 8  respectively, see Figure~\ref{2sq}. The polygon corresponding to $\mathcal{P}^{\alpha}(M)$ is a square centered at $(0,0)$ with side length depending on $\alpha$, between 2 (for $\alpha = 0$) and 8 (for $\alpha = 1$). We can show that the largest such square still included in a triangle corresponds to 
\begin{equation} \label{2sqe}
\mathcal{P}^{\bar{\alpha}}(M) =  
\mathcal{P}^{\bar{\alpha}} \left(  \begin{array}{cccc} 
5   &  3  &   3  &   5 \\
3   &  5   &  5  &   3 \\
5   &  5   &  3  &   3 \\
3   &  3   &  5  &   5 \\
\end{array} 
\right)   
= \frac{1}{a} 
\left( \begin{array}{cccc} 
    1+a  &  1-a  &  1-a  &  1+a\\
    1-a  &  1+a  &  1+a  &  1-a\\
    1+a   & 1+a  &  1-a  &  1-a\\
    1-a  &  1-a &   1+a  &  1+a \\
    \end{array} 
\right), 
\end{equation}
where $a = \sqrt{2}-1$ and $\bar{\alpha} = \frac{4a-1}{3a}$ (this follows from the proof of Theorem~\ref{finite3}; see below). Hence, the polygon $\conv( \theta(\mathcal{P}^{\bar{\alpha}}(M)) )$ is a square centered at $(0,0)$ with side length $8a$ in between $\conv( \theta(M) )$ and $\col( \theta(M) ) \cap \Delta^m$, see Figure~\ref{2sq}. 
\begin{figure}[ht!]
\begin{center}
\includegraphics[width=12cm]{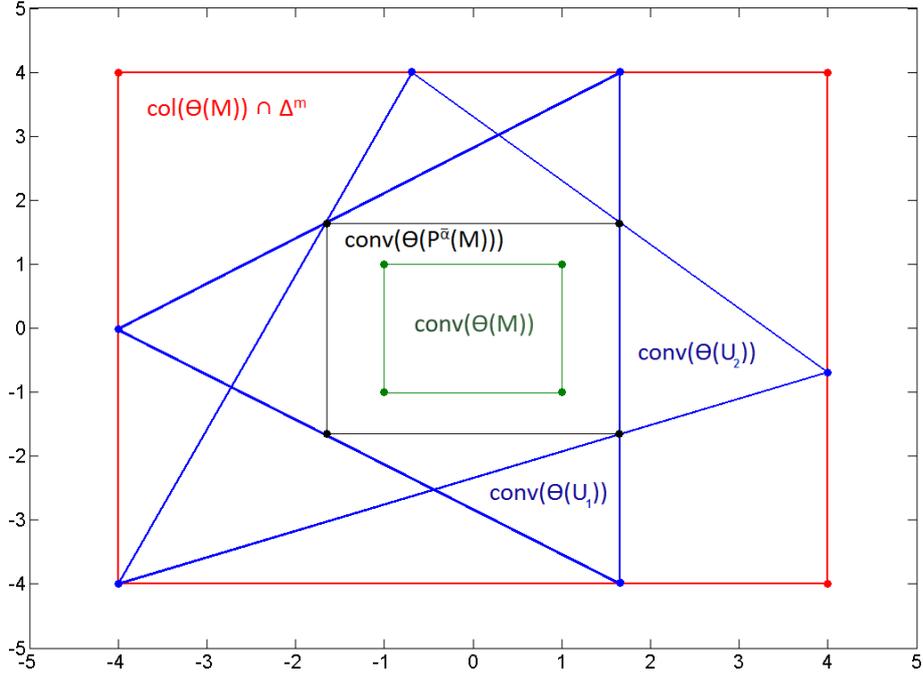}
\caption{Geometric interpretation of the preprocessing of matrix $M$ from Equation~\eqref{2sqe}.} 
\label{2sq}
\end{center}
\end{figure} 
Unfortunately, the NMF of $\mathcal{P}^{\bar{\alpha}}(M)$ is non-unique. In fact, we will see later that it has 8 solutions (the ones drawn on Figure~\ref{2sq} and their rotations). 

%
%
\end{example}

Example \ref{2sqex} illustrates the following three important facts: \\ 

\noindent \textbf{Fact 1. Defining a well-posed NMF problem is not always possible.} 
In other words, \emph{there does not exist any `reasonable' NMF formulation having always a unique solution (up to permutation and scaling).} 
In fact, Example~\ref{2sqex} shows that, because of the symmetry of the problem, any solution of NPP($M$) can be rotated by 90, 180 or 270 degrees to obtain a different solution with exactly the same characteristics (the rotated solutions cannot be distinguished in any reasonable way).  
For example, there are 4 solutions which are the sparsest, each containing one vertex of $\col( \theta(M) ) \cap \Delta^m$, see $\conv( \theta(U_2) )$ on Figure~\ref{2sqex}, including  
\[
U_2 = \left( \begin{array}{ccc} 
    1 &  a     &  0  \\
    0  &  1-a  &  1  \\
    a  & 1  &  0 \\
    1-a  &  0 &   1   \\
    \end{array} 
\right), \text{ and its rotation of 180 degrees } 
U_2^{(180)} = \left( \begin{array}{ccc} 
    0 &  1-a    &  1  \\
    1  &  a  &  0  \\
    1-a     & 1  &  1 \\
    a   &  0 &   0   \\
    \end{array} 
\right). \vspace{0.4cm} \\
\]

\noindent \textbf{Fact 2. The preprocessing makes NMF more robust.} For any $m$-by-$n$ matrix $E$ such that $\col(E) \subseteq \col(M)$, $M+E \geq 0$, and 
\[
\conv(\theta(M)) \subseteq \conv(\theta(M+E)) \subseteq \conv(\mathcal{P}^{\bar{\alpha}}(M)), 
\]
the exact NMF $(U,V)$ of $\mathcal{P}^{\bar{\alpha}}(M)$ will still provide an optimal factor $U$ for the perturbed matrix $M+E$. In particular, if the matrix $M$ is positive, then one can show that\footnote{Using the same ideas as in Lemma~\ref{lem3} and the fact that any preprocessed column must contain at least one zero entry.}  $\conv(\theta(M))$ is strictly contained in $\conv(\mathcal{P}^{\bar{\alpha}}(M))$ (given that $\bar{\alpha} > 0$) so that any sufficiently small perturbation $E$ with $\col(E) \subseteq \col(M)$ will satisfy the conditions above. 

In Example~\ref{2sqex}, the vertices of $M$ can be perturbed and, as long as they remain inside the square defined by $\conv(\mathcal{P}^{\bar{\alpha}}(M))$ (see Figure~\ref{2sq}), the exact NMF of $\conv(\mathcal{P}^{\bar{\alpha}}(M))$ will provide an exact NMF for the perturbed matrix $M$. (More precisely, any matrix $E$ such that $\col(E) \subseteq \col(M)$ and $\max_{i,j} |E_{ij}| \leq \sqrt{2}-1$ will satisfy $\conv(\theta(M+E)) \subseteq \conv(\mathcal{P}^{\bar{\alpha}}(M))$.) \\ 


\noindent \textbf{Fact 3. The preprocessing reduces the number of solutions of the NMF problem. } In Example~\ref{2sqex}, even though the NMF of $\mathcal{P}^{\bar{\alpha}}(M)$ is non-unique, the set of solutions has been drastically reduced: from a two-dimensional space to a zero-dimensional one containing eight points ($\conv(\theta(U_1))$, $\conv(\theta(U_2))$ and the corresponding rotated solutions, see Figure~\ref{2sqex}).

\begin{theorem} \label{finite3}
Let $M \in \mathbb{R}^{m \times n}_+$ be such that $\rank(M) = \rank_+(M) = 3$ and let $\bar{\alpha}$ be defined as in Equation~\eqref{supr}. 
Assume also that $\conv(\theta(\mathcal{P}(M)))$ 
has at least four vertices. 
Then the number of solutions of NPP($\mathcal{P}^{\bar{\alpha}}(M)$) with three vertices is smaller than $m+n$.  
\end{theorem}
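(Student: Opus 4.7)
The plan is to recast this as a two-dimensional nested polygon problem, then use the extremality of $\bar{\alpha}$ to show each solution triangle is rigidly determined, after which a combinatorial rotation argument produces the bound.

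First I would set up the planar picture. Since $\rank(M)=\rank_+(M)=3$, the column space $L := \col(\theta(M))$ is three-dimensional, so that $O := L \cap \Delta^m$ is a (two-dimensional) convex polygon with at most $m$ edges, one for each nontrivial trace of a coordinate hyperplane $\{x_i=0\}$ on $L$. Similarly, $P := \conv(\theta(\mathcal{P}^{\bar{\alpha}}(M)))$ is a convex polygon in that same affine slice with at most $n$ vertices (and, by hypothesis on $\mathcal{P}(M)$ together with Lemma~\ref{alphabeta}, at least four). Every NPP solution with three vertices is then a triangle $T \subset L$ with $P \subseteq T \subseteq O$, and the goal is to bound the cardinality of this set by $m+n-1$.

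Next, I would use the definition of $\bar{\alpha}$ to show that each such triangle is \emph{rigid}. The map $\alpha \mapsto \mathcal{P}^\alpha(M)$ is affine in $\alpha$ (hence continuous), and by Lemma~\ref{alphabeta} the inner polygon grows monotonically with $\alpha$. If a solution triangle $T$ admitted even a one-parameter family of nearby perturbations $T'$ with $P \subseteq T' \subseteq O$, then one could absorb an infinitesimal inflation to $P^{\bar{\alpha}+\epsilon}$ inside some $T'$, contradicting the maximality in Lemma~\ref{usc}. Consequently each solution is isolated in the feasible set, which forces each side of $T$ to support $P$ at a vertex of $P$ and each vertex of $T$ to lie on an edge of $O$ — giving six tangency/incidence equations that exactly pin down the six planar degrees of freedom of $T$.

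Finally, I would count. Each solution triangle now corresponds to a cyclically ordered triple of tangent $P$-vertices interleaved with a cyclically ordered triple of $O$-edges hosting its corners. The argument is that, as one rotates around $P$, these triples advance in lockstep through the combined cyclic sequence of $n$ $P$-vertices and $m$ $O$-edges, producing strictly fewer than $m+n$ compatible configurations.

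The main obstacle is this counting step. Rigidity alone only yields the much weaker bound $\binom{n}{3}\binom{m}{3}$; obtaining the linear bound $m+n-1$ requires carefully exploiting the cyclic synchronization between the side-tangencies on $P$ and the vertex-incidences on $O$, and handling the degenerate cases (a side of $T$ coinciding with an edge of $P$, or a vertex of $T$ landing at a vertex of $O$) where the six-equation count collapses in a controlled way.
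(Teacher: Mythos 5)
Your setup and your use of the extremality of $\bar{\alpha}$ are in the right spirit, but there is a genuine gap exactly where you flag it: the passage from ``each solution is rigid'' to the linear bound $m+n$ is the entire content of the theorem, and neither the rigidity claim nor the counting is actually established. On rigidity: the assertion that a one-parameter family of nested triangles would let you ``absorb an infinitesimal inflation'' of the inner polygon is the same heuristic the paper itself offers only as a conjecture-level intuition (in its discussion of the higher-rank case); you do not prove it, and even granting it, six incidence conditions on six degrees of freedom give finiteness only generically, with a count on the order of $\binom{m}{3}\binom{n}{3}$ over the choices of supporting edges of the outer polygon and tangent vertices of the inner one. The ``cyclic synchronization'' you invoke to collapse this to $m+n$ is precisely the missing idea; it does not follow from rigidity.

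The paper closes this gap with a specific one-dimensional tool due to Aggarwal et al.: parametrize the boundary of the outer polygon by $t \in [0,1]$ and let $f_4(t)$ be the endpoint of the four-step rotating-tangent construction started at $x(t)$ (trace the tangent from $x(t)$ to the inner polygon, intersect with the outer boundary, repeat). A nested triangle with a vertex at $x(t)$ exists if and only if $f_4(t) \geq t+1$. Maximality of $\bar{\alpha}$, combined with the continuity of $f_4$ with respect to the vertices of the inner polygon, forces $f_4(t) \leq t+1$ for all $t$ (otherwise one could pass to $\mathcal{P}^{\bar{\alpha}+\epsilon}(M)$ and still have nonnegative rank three). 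The key structural fact, proved in the paper's appendix, is that $f_4$ is piecewise constant or strictly convex with at most $m+n$ contact-change points (one for each vertex of the outer polygon and each side of the inner one); a continuous function of this shape lying below the line $t+1$ can meet that line only at the break points, which yields the bound, and distinct break points meeting the line correspond to distinct solutions. Your proposal also omits the preliminary step showing that $\bar{\alpha} < 1$ when the outer polygon has at least four vertices (with the triangle case handled separately), which is needed before the inflation argument can be run at all.
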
 
\begin{proof}
Let $P$ and $Q$ denote the outer and inner polygons of NPP($\mathcal{P}^{\bar{\alpha}}(M)$), respectively. 
Let us also parametrize the boundary of the outer polygon $P$ with the parameter $t \in [0,1]$ and the function 
\[
x : \mathbb{R}_+ \to \mathbb{R}^{2} : t \mapsto x(t) \in P,  
\]
where $x$ is a continuous function with $x(0) = x(1)$ and $\{ x(t) \ | \ t \in [0,1] \}$ is equal to the boundary of $P$. We also define the function $x$ for values of $t$ larger than one using $x(t) = x( t - \left\lfloor  t \right\rfloor$) where $\left\lfloor  t \right\rfloor$ is the largest integer not exceeding $t$. Using the construction of 
Aggarwal et al.\@~\cite{ABOS89}, we define the function 
\[
f_k : \mathbb{R}_+ \to \mathbb{R}_+ : t \mapsto f_k(t)
\] 
as follows. Let $t_1 \in [0,1)$ and $x(t_1)$ be the corresponding point on the boundary of $P$. From $x(t_1)$, we can trace the tangent to $Q$ (i.e., $Q$ is on one side of the tangent, and the tangent touches $Q$), say in the clock-wise direction, intersect it with $P$ and hence obtain a new point $x(t_2)$ on the boundary $P$ (see Figure~\ref{2sqf} for an illustration on the nested squares problem). 
\begin{figure}[ht!]
\begin{center}
\includegraphics[width=12cm]{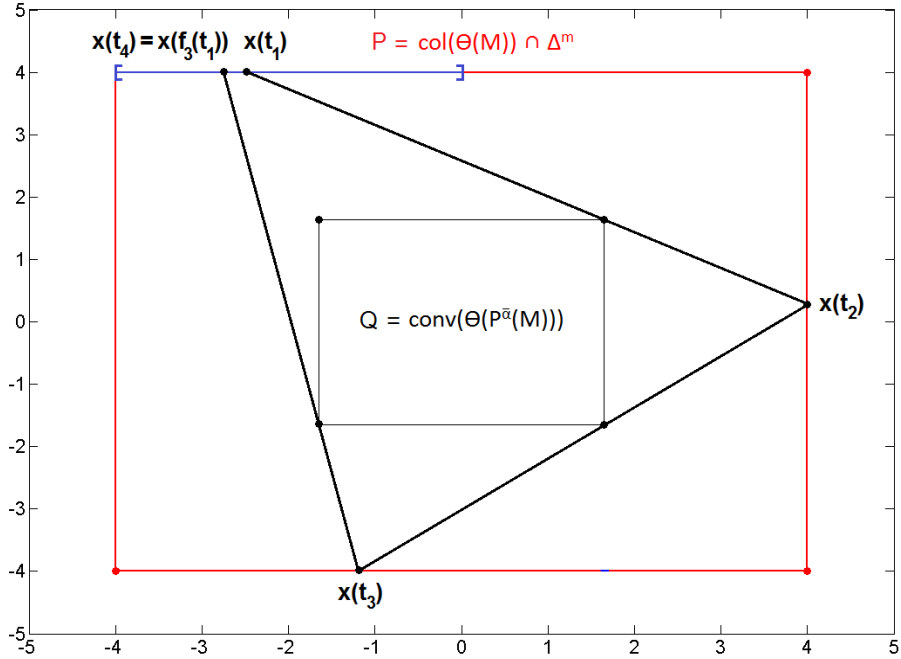}
\caption{Mapping of the point $x(t_1)$ to $x(t_4)$ using the construction from \cite{ABOS89}.} 
\label{2sqf}
\end{center}
\end{figure}
We assume without loss of generality that $t_2 \geq t_1$ (i.e., if $t_2$ happens to be larger than one, we do not round it down with the equivalent value $t_2 - \left\lfloor  t_2 \right\rfloor$). 
Starting from $x(t_2)$, we can use the same procedure to obtain $x(t_3)$ and we apply this procedure $k$ times to obtain the point $x(t_{k+1})$, where $t_{k+1} \geq \dots \geq t_2 \geq t_1$. Finally, we define $f_k(t_1) = t_{k+1}$. 

Aggarwal et al.\@ showed that $x(t_1)$ can be taken as a vertex of a feasible solution of NPP($\mathcal{P}^{\bar{\alpha}}(M)$) with $k$ vertices if and only if $f_k(t_1) = t_{k+1} \geq t_1 + 1$, i.e., we were able to turn around $Q$ inside $P$ in $k+1$ steps 
(in fact, $x(t_1)$, $x(t_1)$, $\dots$, and $x(t_k)$ are the vertices of a feasible solution). 

Aggarwal et al.\@~\cite{ABOS89} also showed that the function $f_k$ is continuous, non-decreasing, and depends continuously on the vertices of $Q$ (see also Appendix~\ref{Appa}). Figure~\ref{2sqfail} displays the function $f_4$ for the nested squares (Example~\ref{2sqex}).  
\begin{figure}[ht!]
\begin{center}
\includegraphics[width=8cm]{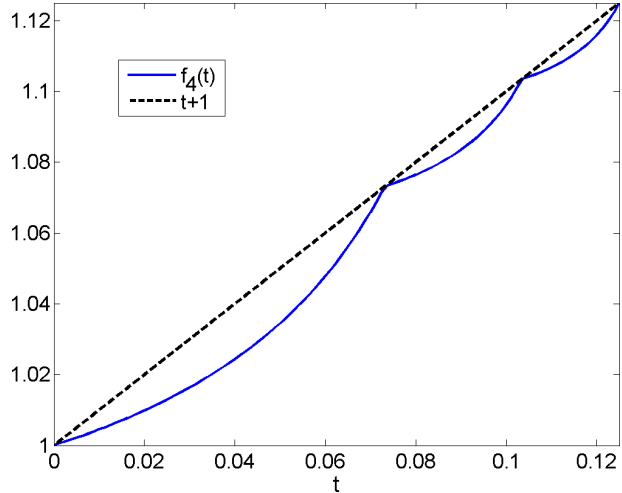}
\caption{Function $f_4(t)$ for Example~\ref{2sqex} using the construction from \cite{ABOS89} (see also Figure~\ref{2sqfail} and Appendix~\ref{Appa}). We only plot the function $f_4$ in the interval $[0,\frac{1}{8}]$ because, by symmetry, $f_4(x+\frac{1}{8}) = f_4(x) + \frac{1}{8}$.} 
\label{2sqfail}
\end{center}
\end{figure}

If $\col(\theta(M)) \cap \Delta^m$ has three vertices, then $\bar{\alpha} = 1$. In fact, we have that 
\[
\theta(\mathcal{P}^{\alpha}(M)) \subseteq \col(\theta(M)) \cap \Delta^m \quad \text{ for any } \; 0 \leq \alpha \leq 1, 
\] 
implying $\rank_+(\mathcal{P}^{\alpha}(M)) = 3$ for all $0 \leq \alpha \leq 1$. 
Moreover, because $\theta(\mathcal{P}^{\alpha}(M))$ has at least four vertices, $\col(\theta(M)) \cap \Delta^m$ is the unique solution of the corresponding NPP problem: the outer polygon is a triangle while the inner polygon has at least four vertices which are located on the edges of the outer triangle (since $\bar{\alpha} = 1$ and each column of $\mathcal{P}(M)$ contains at least one zero entry). 

Let us then assume that $\col(\theta(M)) \cap \Delta^m$ has at least four vertices. We show that this implies  $\bar{\alpha} < 1$. Assume $\bar{\alpha} = 1$. The polygons $P = \col(\theta(M)) \cap \Delta^m$ and $Q=\theta(\mathcal{P}(M))$ have at least 4 vertices. Moreover, the vertices of $Q$ are located on the boundary of $P$ (because $\bar{\alpha} = 1$) on at least two different sides of $P$ (three vertices cannot be on the same side). It can be shown by inspection that the optimal solution of this NPP instance must have at least four vertices, hence $\rank_+(\mathcal{P}(M)) > 3$, a contradiction. 


Next, we show that $f_4(t) \leq t+1$. Assume there exists $t$ such that $f_4(t) > t+1$. By continuity of $f_4$ with respect to the vertices of  $Q=\conv(\theta(\mathcal{P}^{\bar{\alpha}}(M)))$, there exists $\epsilon > 0$ sufficiently small such that $\bar{\alpha} + \epsilon < 1$ and such that the function $f'_4$ for the NPP instance with inner polygon $Q' = \conv(\theta(\mathcal{P}^{\bar{\alpha}+\epsilon}(M)))$ and the same outer polygon $P$ satisfies $f_4(t) > t+1$ hence $\rank_+(\mathcal{P}^{\bar{\alpha}+\epsilon}(M)) \leq 3$, a contradiction. 

In Appendix~\ref{Appa}, we prove that $f_k$ is piecewise constant/strictly convex for any $k$, i.e., $f_k$ is made up of pieces which are either constant or strictly convex, with at most $m+n$ break points corresponding to different solutions to the NPP. 
Therefore, because $f_4$ is continuous and smaller than $t+1$, it can intersect the line $t+1$ only at the break points. Since there are at most $m+n$ such points corresponding to different NPP solutions, the number of solutions of NPP($\mathcal{P}^{\bar{\alpha}}(M)$) with three vertices is smaller than $m+n$. (Notice that the bound is tight since it is achieved by the nested squares example with 8 solutions.) 
\end{proof}

\begin{remark}
If $\conv(\theta(\mathcal{P}(M)))$ has three vertices, they define a feasible solution for the corresponding NPP problem (i.e., $\mathcal{P}(M)$ is separable, see Theorem~\ref{sepgeo}).  However, the number of solutions might be not be finite in that case. Here is an example 
\[
M = \left( \begin{array}{cccc}    0   & 0.5&   0.25   &      0\\
    1 &   0.5  &  0.75 &   1\\
    1&        0  &  0.1 &   0.5\\
         0 &   1  &  0.9 &   0.5 \\\end{array} \right) 
         \quad \text{ and } \quad  
         \mathcal{P}(M) = \left( \begin{array}{cccc}  
              0   & 0.5    &     0    &     0\\
    1 &  0.5   & 0.3  &  0.5\\
    1  &       0   & 0  &       0\\
         0  &  1   & 0.3  &  0.5 \\ \end{array} \right), 
\]
whose corresponding NPP problems are represented on Figure~\ref{ceth8}: the NPP of $\mathcal{P}(M)$ does not have a finite number of solutions. 
\begin{figure}[ht!]
\begin{center}
\includegraphics[width=5.5cm]{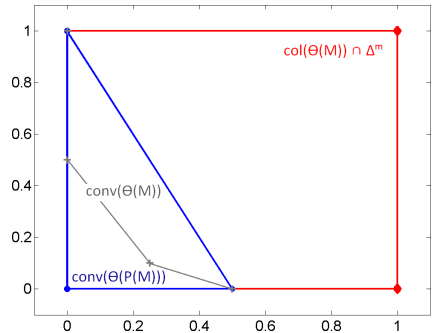}
\caption{Counter-example for Theorem~\ref{finite3} when $\mathcal{P}(M)$ has three vertices.} 
\label{ceth8}
\end{center}
\end{figure}
\end{remark}

The fact that the NPP of the matrix $\mathcal{P}^{\bar{\alpha}}(M)$ can have several different  solutions  is untypical and, we believe, could be due to the symmetry of the problem (as in Example~\ref{2sqex}). 
 We conjecture that, in general, the solution to NPP($\mathcal{P}^{\bar{\alpha}}(M))$ is unique. In particular, we observed on randomly generated matrices that it was, see, e.g., Example~\ref{sepex}. In fact, as the function $f_k(.)$ defined in Theorem~\ref{finite3} depends continuously on the inner and outer polytopes $Q$ and $P$, if these polytopes are generated randomly, there is no reason for the values of the function $f_k(.)$ at the 
 break points to be located on the same line as on Figure~\ref{2sqfail}. 
 
 We also conjecture that Theorem~\ref{finite3} holds true for any rank: 
\begin{conjecture} 
Let $M$ be such that $\rank(M) = \rank_+(M) = k$ and $\conv(\theta(\mathcal{P}(M)))$ has at least $(k+1)$ vertices, and $\bar{\alpha}$ be defined as in Equation \eqref{supr}, then the number of solutions of NPP($\mathcal{P}^{\bar{\alpha}}(M)$) is finite. 
\end{conjecture}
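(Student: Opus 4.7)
The plan is to extend the strategy of Theorem~\ref{finite3} to arbitrary rank $k$. Let $P = \col(\theta(M)) \cap \Delta^m$ and $Q = \conv(\theta(\mathcal{P}^{\bar{\alpha}}(M)))$ denote the outer and inner polytopes of NPP($\mathcal{P}^{\bar{\alpha}}(M)$). A solution with $k$ vertices is a $(k-1)$-simplex $T$ satisfying $Q \subseteq T \subseteq P$. The overall idea is to use the maximality of $\bar{\alpha}$ to force active contact conditions between $Q$ and each feasible $T$, and then argue that the resulting semi-algebraic set of feasible simplices is zero-dimensional, hence finite.

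First I would establish a tightness lemma: for any solution $T$, no enlarged inner polytope of the form $Q_\epsilon = \conv(\theta(\mathcal{P}^{\bar{\alpha}+\epsilon}(M)))$ with $\epsilon > 0$ can fit inside $T$, since otherwise upper semicontinuity of the nonnegative rank (Lemma~\ref{usc}) would yield $\rank_+(\mathcal{P}^{\alpha}(M)) = k$ for some $\alpha > \bar{\alpha}$, contradicting the definition of $\bar{\alpha}$ in \eqref{supr}. This should force each facet of $T$ to be supported by vertices of $Q$ in a way that blocks every admissible direction along the one-parameter preprocessing family $M(I - \alpha B^*)$. Second, I would translate this blocking into a system of equations: each contact between a facet of $T$ and $Q$ yields one equation, and each vertex of $T$ lying on a face of $P$ yields more. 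Third, assuming by contradiction that the solution set contains a one-parameter family $T(s)$, a continuity argument modelled on Theorem~\ref{finite3} would show that for all sufficiently small $\epsilon > 0$ there exists a $(k-1)$-simplex containing $Q_\epsilon$ inside $P$, contradicting $\rank_+(\mathcal{P}^{\bar{\alpha}+\epsilon}(M)) > k$.

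The main obstacle is the second step. A naive parameter count gives $k(k-1)$ degrees of freedom for the vertices of $T$ in the $(k-1)$-dimensional affine hull, minus at most $k$ equations from each vertex of $T$ lying on a facet of $P$, minus $k$ equations from each facet of $T$ touching $Q$, for a total of $k(k-3)$ free parameters. This vanishes for $k = 3$ (recovering Theorem~\ref{finite3}) but is strictly positive for $k \geq 4$. The rank-three proof is saved by the Aggarwal et al.\@ tangent-line construction, which produces the one-dimensional function $f_k$ whose piecewise strict convexity and continuity pin down the number of break points; this construction relies on the cyclic ordering of vertices on a polygon and admits no direct higher-dimensional analog. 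Overcoming this will likely require a stronger tightness statement, for instance that each facet of $T$ at $\bar{\alpha}$ must contain at least $k-2$ affinely independent vertices of $Q$ (so that the contact conditions essentially determine the supporting hyperplane of the facet), or alternatively a non-trivial algebraic-geometric argument exploiting the linear structure of the preprocessing family $\mathcal{P}^{\alpha}(M) = M - \alpha M B^*$ to bound the solution set via an intersection-theoretic count. The symmetric nested-squares example in dimension two (with exactly $m+n$ solutions) suggests that any such count must be delicate near highly symmetric configurations.
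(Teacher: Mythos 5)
The statement you are trying to prove is stated in the paper as a \emph{conjecture}, and the paper offers no proof of it: it only sketches what would be needed (namely, that every solution of NPP($\mathcal{P}^{\bar{\alpha}}(M)$) is isolated, and that the number of isolated solutions is finite) and explicitly remarks that the tangent-line construction of Aggarwal et al.\@ \cite{ABOS89}, which drives the proof of Theorem~\ref{finite3}, does not generalize to three or more dimensions. Your proposal is therefore being measured against a proof that does not exist, and, to your credit, you do not claim to close the problem: your first step (using the maximality of $\bar{\alpha}$ together with the upper semicontinuity of the nonnegative rank to force tight contact between $Q$ and any feasible simplex $T$) is essentially the paper's own isolation heuristic, and your final continuity argument is the same perturbation-of-$\bar{\alpha}$ device used in the proof of Theorem~\ref{finite3}.

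The genuine gap is exactly where you say it is, and it is worth being precise about why it is fatal rather than merely inconvenient. Your count of $k(k-1)$ degrees of freedom against $2k$ generic contact conditions, leaving $k(k-3)$ free parameters for $k \geq 4$, shows that tightness alone (one contact per facet of $T$ with $Q$, one incidence per vertex of $T$ with a facet of $P$) cannot pin down $T$; some additional rigidity must come from the specific one-parameter family $\mathcal{P}^{\alpha}(M) = M - \alpha M B^*$, and neither you nor the paper identifies what that rigidity is. Note also that your count is optimistic in one respect: a vertex of a feasible $T$ need not lie on the boundary of $P$ at all, so the $k$ equations from vertex--facet incidences are not guaranteed. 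The stronger tightness statement you float (each facet of $T$ containing $k-2$ affinely independent vertices of $Q$) is plausible but unproved, and the nested-squares example warns that any dimension count must survive symmetric configurations where contact conditions become degenerate. In short: the proposal is a reasonable research plan that reproduces the paper's own partial analysis, but it is not a proof, and the conjecture remains open.
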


Unfortunately, the geometric construction of Aggarwal et al.\@~\cite{ABOS89} cannot be generalized to three dimensions (or higher). 
To prove the conjecture, we would need to show that 
\begin{itemize}
\item Any solution of NPP($\mathcal{P}^{\bar{\alpha}}(M)$) is isolated. Intuitively, the preprocessing $\mathcal{P}^{\bar{\alpha}}(M)$ of $M$ grows the inner polytope $Q$ as long as the corresponding NPP instance has a solution with $\rank_+(M)$ vertices. If a solution was not isolated, it could be moved around while remaining feasible, which indicates that we could grow the inner polytope $Q$ hence increase $\bar{\alpha}$. 
\item The number of isolated solutions is finite. We conjecture that the solutions can be characterized in terms of the faces of $P$ and $Q$, which are finite (depending on $m$ and $n$).    \\ 
\end{itemize}



\begin{remark}
Of course computing $\bar{\alpha}$ is non-trivial. However, for matrices of small rank, this could be done effectively. In fact, checking whether the nonnegative rank of an $m$-by-$n$ is equal to $\rank(M)$ can be done in polynomial time in $m$ and $n$ provided that the rank is fixed \cite{AGKM11}. In particular, the algorithm of Aggarwal et al.\@~\cite{ABOS89} does it in $\mathcal{O}((m+n)\log(\min(m,n)))$ operations for rank-three matrices \cite{GG10b}. Hence, one could for example use a bisection method to find a good lower bound $\beta \lesssim \bar{\alpha}$ and use the corresponding matrix NPP($\mathcal{P}^{\beta}(M)$) to have a more well-posed NMF problem whose solutions will be solutions of the original one. 
\end{remark}


\section{Preprocessing in Practice}  \label{comp2}

In this section, we address three important practical considerations of the preprocessing.

\subsection{Computational Complexity of Solving \eqref{tigrel}} \label{compcost}

It is rather straightforward to check that problem \eqref{tigrel} can be decoupled into $n$ independent CLLS's, each corresponding to a different column of $M$; for example, for the $i$th column of $M$, we have  
\begin{equation} \label{tigrel2} 
\min_{b \in \mathbb{R}^{n}_+} \quad || M_{:i} - Mb||_2^2 
\quad \text{ such that } \quad  M_{:i} \geq M b , \; b_{i} = 0. 
\end{equation}
We then have $n$ CLLS's with $n$ variables (actually $n-1$ since variable $b_{i} = 0$ can be removed) and $m+n$ constraints. Using interior point methods, the computational complexity for solving \eqref{tigrel2} is of the order of  $\mathcal{O}(n^{3.5})$; hence the total computational cost is of the order  $\mathcal{O}(n^{4.5})$. 

Figure~\ref{CT} shows the computational time needed for solving \eqref{tigrel} with respect to $m$ for $n$ fixed and vice versa, for randomly generated matrices (using the \emph{rand(.)} function of \matlab) on a laptop 3GHz Intel$^{\textrm{\textregistered}}$ CORE i7-2630QM CPU @2GHz 8Go RAM running \matlab \, R2011b using the function \emph{lsqlin(.)} of \matlab.  
\begin{figure*}[ht!]
\begin{center}
\includegraphics[width=16cm]{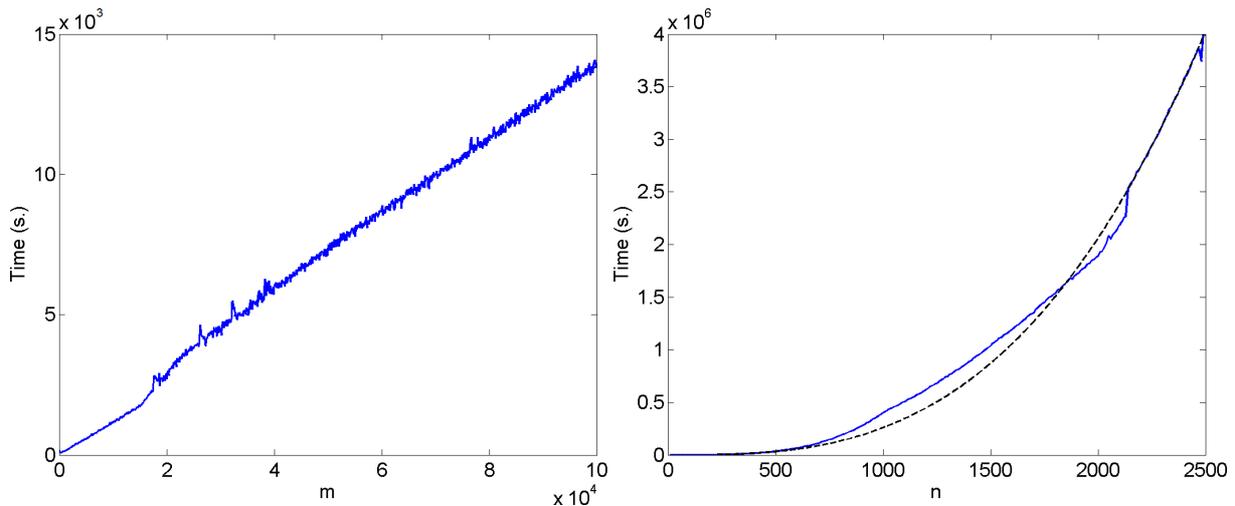}
\caption{Computational time for solving \eqref{tigrel}. On the left, $m$-by-100 randomly generated matrices; on the right, 1000-by-$n$ randomly generated matrices (plain) and the polynomial $2.6 \, 10^{-4} n^{3}$ (dashed).} 
\label{CT}
\end{center}
\end{figure*} 
The computational time is  linear in $m$ while being of the order of $n^{3}$ in $n$,  smaller than the expected $\mathcal{O}(n^{4.5})$. 
Therefore, in practice, the dimension $m$ can be rather large while, on a standard machine, $n$ cannot be much larger than 1000. Using parallel architecture would allow to solve larger scale problems (see also Section~\ref{concl}).






\subsection{Normalization of the Columns of $\mathcal{P}(M)$} \label{scaling}

Since the aim eventually is  to provide a good approximate NMF  to the original data matrix $M$, we observed that normalizing the columns of the preprocessed matrix $\mathcal{P}(M)$ to match the norm of the corresponding columns of $M$ gives better results. That is, we replace $\mathcal{P}(M)$ with $D \mathcal{P}(M)$ where 
\[
D_{ii} = \frac{||M_{:i}||_2}{||\mathcal{P}(M)_{:i}||_2} \text{ for all } i, \quad \text{ and } \quad D_{ij} = 0 \text{ for all } i \neq j.
\]
This scaling does not change the nice properties of the preprocessing since $D$ is a monomial matrix, hence $QD$ still is an inverse-positive matrix. This scaling degree of freedom is related to the fact that we fixed the diagonal entries of $Q$ to one, see Section~\ref{cllssec}. 

The reason for this choice is that NMF algorithms are sensitive to the norm of the columns of $M$. In fact, when using the Frobenius norm, we have that the following two problems are equivalent 
\[
\min_{U \geq 0,V \geq 0} \, 
||M - UV||_F^2 \quad 
\equiv \quad 
\min_{X \geq 0,Y \geq 0} \, \sum_{i=1}^n ||M_{:i}||_2^2 \left\| \frac{M_{:i}}{||M_{:i}||_2}  - XY_{:i}\right\|_2^2. 
\]
Therefore, to give each column of $\mathcal{P}(M)$ the same importance in the objective function as in the original NMF problem, it makes sense to use the scaling above. 
This is particularly critical if there are outliers in the dataset: the outliers do not look similar to the other columns of $M$ hence their preprocessing will not reduce much their $\ell_2$-norm (because they are further away from the convex cone generated by the other columns of $M$). Therefore, their relative importance in the objective function will increase in the NMF problem corresponding to $\mathcal{P}(M)$, which is not desirable.



\subsection{Dealing with Noisy Input Matrices and/or Obtaining Sparser Preprocessing} \label{noisy}

Our technique will typically be useless when the input matrix is noisy and sparse. For example, we have 
\[
M =  \left( \begin{array}{cc}
0   &  0   \\
1   &  0   \\
1   &  1  \\
  \end{array}\right),  
  \mathcal{P}(M) = \left( \begin{array}{cc}
  0   &  0   \\
1   &  0   \\
0   &  1  \\
  \end{array}\right)
\quad \text{ while } \quad 
M_n =  \left( \begin{array}{cc}
0   &  \delta   \\
1   &  0   \\
1   &  1  \\
  \end{array}\right) = 
  \mathcal{P}(M_n), 
\]
for any $\delta > 0$. This shows that the preprocessing is very sensitive to small positive entries of $M$. In order to deal with such noisy and sparse matrices, 
we propose to relax the nonnegativity constraint $M \geq MQ$ in \eqref{tigrel}, and solve instead 
\begin{equation} \label{tigreleps}
\min_{B \in \mathbb{R}^{n \times n}_+} 
\quad 
\sum_{i=1}^n \; \Big\| M_{:i} - \sum_{k \neq i} M_{:k} B_{ki} \Big\|_2^2 
\quad \text{ such that } \quad  M_{:i} + \epsilon ||M_{:i}||_{\infty} e  \geq \sum_{k \neq i} M_{:k} B_{ki}, \; \forall i, 
\end{equation} 
where $0 < \epsilon \ll 1$ and $e$ is the vector of all ones of appropriate dimension.  
We will denote the corresponding preprocessing $\mathcal{P}_{\epsilon}(M) = M (I-B^*_{\epsilon})$ where $B^*_{\epsilon}$ is an optimal solution of \eqref{tigreleps}. 
For the example above with $\delta = \epsilon = 10^{-2}$, we obtain
\[
\mathcal{P}_{\epsilon}(M_n) =  \left( \begin{array}{cc}
-10^{-2}  &  10^{-2}   \\
1    &  -10^{-2}    \\
 10^{-4}  &  0.99  \\
  \end{array}\right). 
\]

In practice, this technique also allows to obtain preprocessed matrices with more entries close (or smaller) than zero.  
When choosing the parameter $\epsilon$, it is very important to check whether $\rho(B^*_{\epsilon}) < 1$ so that the rank of $\mathcal{P}_{\epsilon}(M)$ is equal to the rank of $M$ and no information is lost (i.e., we can recover the original matrix $M = \mathcal{P}_{\epsilon}(M) (I-B^*_{\epsilon})^{-1}$ given $\mathcal{P}_{\epsilon}(M)$ and $B^*_{\epsilon}$). 


\section{Application to Image Processing} \label{appl}

In this section, we apply the preprocessing technique to several image datasets. 
By construction, the preprocessing procedure will remove from each image a linear combination of the other images. As we will see, this will  highlight certain localized parts of these images, essentially because the preprocessed matrices are sparser than the original ones. 
We will then show that combining the preprocessing with standard NMF algorithms naturally leads to better part-based decompositions, because sparser matrices lead to sparser NMF solutions, see Section~\ref{nonu}.


A direct comparison between NMF applied on the original matrix and NMF applied on the  preprocessed matrix is not very informative in itself: while the former will feature a lower approximation error, the latter will provide a sparser part-based representation. This does not really tell us whether the improvements in the part-based representation and sparsity are worth the increase in approximation error. For that reason, we chose to compare them with a standard sparse NMF technique, described below, in order to better assess whether the increase in sparsity achieved is worth the loss in reconstruction accuracy. Hence, we compare the following three different approaches: 
\begin{itemize}

\item \textbf{Nonnegative matrix factorization (NMF)}.  It solves the original NMF problem from Equation~\eqref{nmf} using the accelerated HALS algorithm (A-HALS) from  \cite{GG11} (with parameters $\alpha = 0.5$ and $\epsilon = 0.1$ as suggested in \cite{GG11}), which is a block coordinate descent method.  
 
\item \textbf{Preprocessed NMF for different values of $\epsilon$}. It first computes the preprocessed matrix $\mathcal{P}_{\epsilon}(M)$ (cf.\@ Section~\ref{noisy}), then solves the NMF problem for the rescaled  preprocessed matrix $\mathcal{P}_{\epsilon}(M) D \approx U V'$ (cf.\@ Section~\ref{scaling}) using A-HALS and finally returns $(U,V)$ where $V = \argmin_{X \geq 0} ||M-UX||_F^2$. This approach will be denoted Pre-NMF($\epsilon$). (We will also indicate in brackets the error obtained when using $V = V'Q^{-1}$, which will be, by construction, always higher.) 
Notice that the preprocessed matrix may contain negative entries (when $\epsilon > 0$) which is handled by A-HALS. We do not set these entries to zero for two important reasons: (i) we want to preserve the column space of $M$, 
(ii) the negative entries of $M$ lead to sparser NMF solutions. In particular, it was shown that if an entry of $M$, say at position $(i,j)$, is smaller than $-||\max(0,M)||_F$ then $(UV)_{ij} = 0$ for any optimal solution of NMF \cite{GG08}.

\item \textbf{Sparse NMF}. The most standard technique to obtain sparse solutions for NMF problems is to use a sparsity-inducing penalty term in the objective function. In particular, it is well-known that adding an $l_1$-norm penalty term induces sparser solutions (see, e.g., \cite{KP07}), and we therefore solve the following problem:
\begin{equation}
\min_{U, V \geq 0} ||M-UV||_F^2 + \sum_{i=1}^r \mu_i ||U_{:i}||_1, \quad ||U_{:i}||_{\infty} = 1 \; \forall i,  
\tag{sNMF} \label{sNMF}
\end{equation}
where $||x||_1 = \sum_{i} |x_{i}|$, $||x||_{\infty} = \max_{i} |x_{i}|$ and $\mu_i$ are positive parameters  controlling the sparsity of the columns of $U$. In order to solve sNMF, we also use A-HALS which can easily be adapted to handle this situation. 
The $\ell_{\infty}$-norm constraints is not restrictive because of the degree of freedom in the scaling of the columns of $U$ and the corresponding rows of $V$, while it prevents  matrix $U$ to converge to zero.  
The theoretical motivation is that the $l_1$-norm is the convex envelope of the $l_0$-norm (i.e., the largest convex function smaller than the $l_0$-norm) in the $\ell_{\infty}$-ball, see \cite{RFP10} and the references therein.

In order to compare sparse NMF with Pre-NMF($\epsilon$), the parameters $\mu_i$ $1 \leq i \leq r$ are tuned in order to match the sparsity obtained by Pre-NMF($\epsilon$). The corresponding approach will be denoted sNMF($\epsilon$). 
\end{itemize}

For each approach, we will keep the best solution obtained among the same ten random initializations (using the \emph{rand(.)} function of \matlab) and each run was allowed 1000 (outer) iterations of the A-HALS algorithm.  
We will use the relative error 
\[
\frac{||M-UV||_F}{||M||_F}
\]
to asses the quality of an approximation. We will also display the error obtain by the truncated singular value decomposition (SVD) for the same factorization rank to serve as a comparison. 
For the sparsity, we use the proportion of non-zero entries\footnote{The negative entries of the preprocessed matrix $\mathcal{P}_{\epsilon}(M)$ for $\epsilon > 0$ will be counted as zeros.} 
\[
s(U) = \; \frac{\# \text{zeros}(U)}{mr} \; \in \; [0,1], \text{ for } U \in \mathbb{R}^{m \times r}. 
\]

Because the solution computed with Pre-NMF does not directly aim at minimizing the error $||M-UV||_F^2$, it is not completely fair to use this measure for comparison. 
In fact, it would be better to compare the quality of the sparsity patterns obtained by the different techniques. For this reason, we use the same post-processing procedure as described in \cite{GG09} which benefits all algorithms: once a solution is computed by one of the algorithms, the zero entries of $U$ are fixed and we minimize $\min_{U\geq 0 ,V \geq 0} ||M-UV||_F^2$ on the remaining (nonzero) entries (again, A-HALS can easily be adapted to handle this situation and we perform 100 additional steps on each solution), and report the new relative approximation error as ``Improved''. 

The code is available at \url{https://sites.google.com/site/nicolasgillis}. \\


\subsection{CBCL Dataset}


The CBCL face dataset\footnote{Available at \url{http://cbcl.mit.edu/software-datasets/FaceData2.html}.} 
is made of 2429 gray-level images of faces with $19 \times 19$ pixels (black is one and white is zero). We look for an approximation of rank $r=49$ as in \cite{LS99}. 
Because of the large number of images in the dataset, the preprocessing is rather slow. In fact, we have seen in Section~\ref{compcost} that it is in $\mathcal{O}(n^{4.5})$ where $n$ is the number of images in the dataset (it would take about one week on a laptop). Therefore, we only keep every third image for a total of 810 images, which takes less than three hours for the preprocessing; about 10-15 seconds per image\footnote{The \matlab \, function \emph{lsqlin} for solving CLLS problems is much slower than \emph{quadprog} with interior point (which is much faster than \emph{quadprog} with active set).}.

Table~\ref{cbclM} reports the sparsity and the value of $\rho(B^*_{\epsilon})$ for the preprocessed matrices with different values of the parameter $\epsilon$. As explained in Section~\ref{noisy}, the sparsity of $\mathcal{P}_{\epsilon}(M)$ increases with $\epsilon$, and $\epsilon$ was chosen to make sure that $\rho(B^*_{\epsilon}) < 1$ implying $\rank(\mathcal{P}_{\epsilon}(M)) = \rank(M)$. 
\begin{table}[ht!]
\begin{center}
\caption{CBCL dataset: sparsity of the preprocessed matrices  $\mathcal{P}_{\epsilon}(M) = MQ$ and corresponding spectral radius of $B^*_{\epsilon} = I-Q$.} 
\begin{tabular}{|c||c|c|c|c|}
\hline
\textit{}	& $M$   &   $\mathcal{P}(M)$ &  $\mathcal{P}_{0.05}(M)$ & $\mathcal{P}_{0.1}(M)$  \\ 
\hline
s(.)			  &	 0   &   	0.001 & 	 20.92	& 38.03 \\ 
$\rho(B^*_{\epsilon})$   &	 0   & 0.71 & 	 0.83	& 0.90 \\
\hline  
\end{tabular}
\label{cbclM}
\end{center}
\end{table}

Figure~\ref{cbcl} displays a sample of images of the CBCL dataset along with the corresponding preprocessed images for different values of $\epsilon$. 
\begin{figure}[ht] 
\centering
\includegraphics[width=\textwidth]{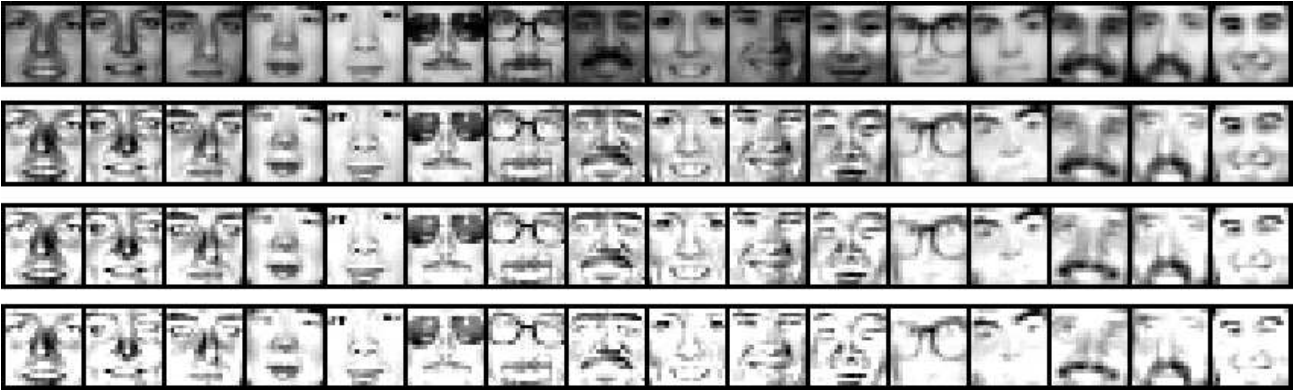} 
\caption{From top to bottom: CBCL sample images, corresponding preprocessed images for $\epsilon=0$, $\epsilon=0.05$,  and $\epsilon=0.1$. } 
\label{cbcl} 
\end{figure}

We observe that the preprocessing is able to highlight some parts of the images: the eyes (faces 5 and 9), the eyebrows (faces 3, 4, 8, 10, 11, 13 and 16), the  mustache (faces 14 and 15), the glasses (faces 6, 7 and 12), the nose (faces 1 to 4) or the mouth (faces 1 to 5). 
Recall that the preprocessing removes from each image of the original dataset a linear combination of other images. Therefore, the parts of the images which are significantly different from the other images are better preserved, hence highlighted. \\

We now compare the three approaches described in the introduction of this section. Table~\ref{cbclT} gives the numerical results and shows that Pre-NMF performs competitively with sNMF in all cases (similar relative error for similar sparsity levels). 
\begin{table*}[ht!]
\begin{center}
\caption{Comparison of the relative approximation error and sparsity for the CBCL image dataset.} 
\begin{tabular}{|c||c|c||c|c|}
\hline
\textit{}	& Plain   &   Improved &  $s(U)$ & $s(V)$ \\ 
\hline
SVD			  &	 7.28   &   	7.28 & 	 0	& 0  \\ \hline	
NMF			  &	 7.97   &   	7.96 & 	 53.27	& 11.36 \\ \hline	
Pre-NMF(0) & \hspace{1.1cm} 9.28 (9.76)$^{\dagger}$    &	   8.37 &   76.78	& 4.42 \\ 
sNMF(0)      &  8.34  &  	8.20 & 	 77.62	& 5.19  \\ \hline 
Pre-NMF(0.05) & \hspace{1cm} 11.12 (12.66) &	   9.15 &   90.14	& 2.16 \\ 
sNMF(0.05)  & 9.24  &   	8.90 & 	91.12	& 2.22 \\  \hline 
Pre-NMF(0.1)  & \hspace{1cm} 13.12  (23.47)  &	  9.88 &   {94.58}	& 1.17 \\
sNMF(0.1)     &  10.30 \hspace{0.05cm}  &   	9.89 & 	 94.77	& 1.14  \\  \hline 
\end{tabular}
\label{cbclT} 
\vspace{-0.2cm}
\begin{flushleft}$^{\dagger}$\footnotesize{
In brackets, it is the error obtained when using $V = V'Q^{-1}$, instead of $V = \argmin_{X\geq0} ||M-UX||_F^2$.}
\end{flushleft}
\end{center}
\end{table*}

Figure~\ref{cbclB} displays the basis elements obtained for NMF, Pre-NMF(0), Pre-NMF(0.1) and sNMF(0.1). 
\begin{figure}[ht] 
\centering
\includegraphics[width=14cm]{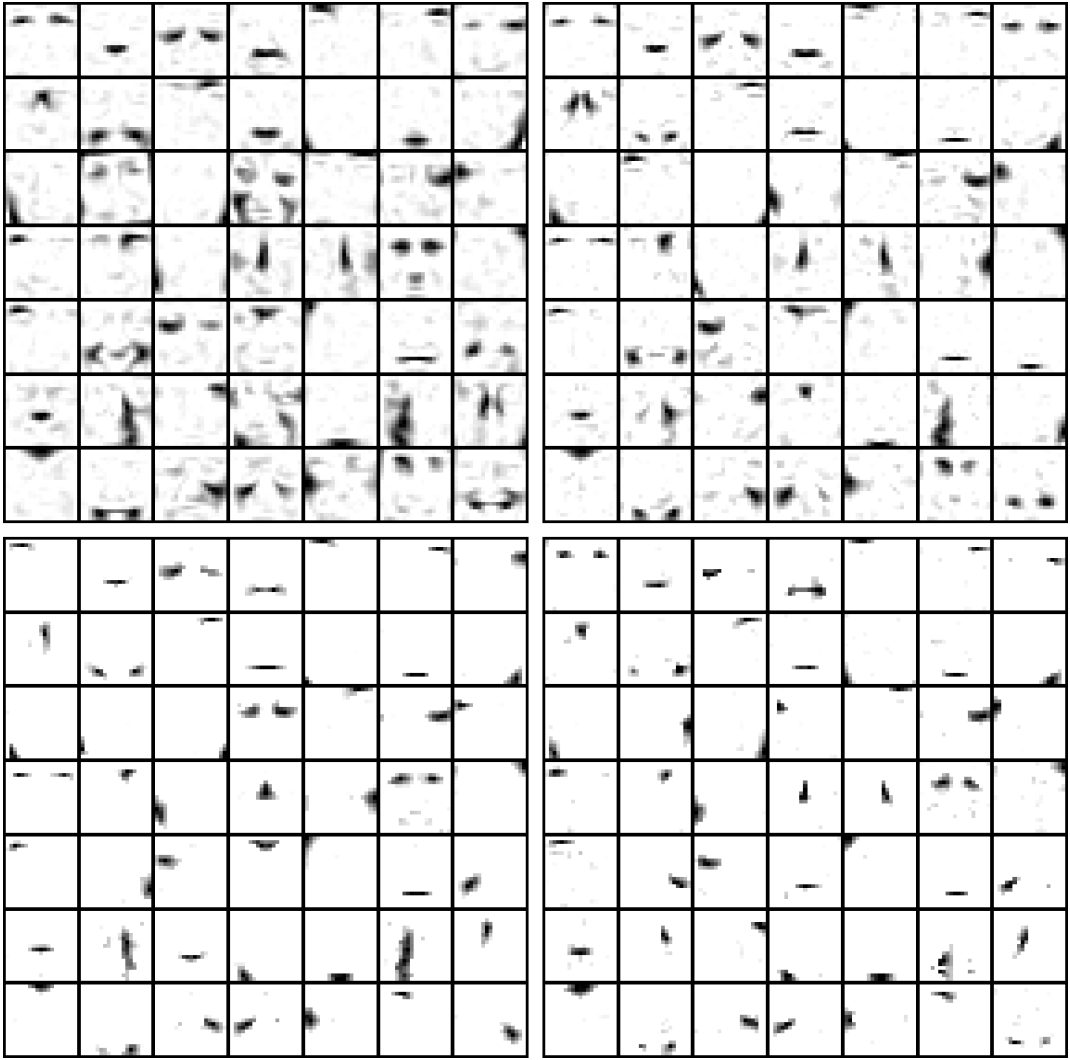} 
\caption{From left to right, top to bottom: basis elements for the CBCL dataset obtained with NMF, Pre-NMF(0), Pre-NMF(0.1) and sNMF(0.1). }
\label{cbclB} 
\end{figure} 
The decomposition by parts obtained by Pre-NMF(0.1) is comparable to sNMF(0.1),  reinforcing the observation above (cf.\@ Table~\ref{cbclT}) that Pre-NMF performs competitively with sNMF. \\

Our technique has the advantage that only one parameter has to be chosen (namely $\epsilon$) and that sparse solutions are naturally obtained. In fact, the user does not need to know in advance the desired sparsity level: one just has to try different values of $\epsilon \in [0,1]$ (making sure $\rho(B^*_{\epsilon}) < 1$) and a sparse factor $U$ will automatically be generated (no parameters have to be tuned in the course of the optimization process). 
Moreover, Pre-NMF proves to be less sensitive to initialization than sNMF: we rerun both algorithms for $\epsilon = 0.1$ with 100 different initializations (using exactly the same settings as above) and observe the following: 
\begin{itemize}

\item Among the hundred solutions generated by sNMF(0.01), three did not achieve the required sparsity (being lower than 0.85, while all others were around 0.95 as imposed). In particular, the variance of the sparsity of the factor $U$ for PreNMF(0.01) is $8.69 \, 10^{-7}$ while it is much higher $1.31 \, 10^{-3}$ for sNMF(0.01). (Note that after removing the three outliers, the variance of sNMF(0.01) is still higher being $3.23 \, 10^{-6}$.) 

\item The average of the relative error of Pre-NMF(0.01) is 9.94, slightly lower than sNMF(0.01) with 9.96.  

\item The variance of the relative  error of Pre-NMF(0.01) is $5.97 \, 10^{-3}$, lower than sNMF(0.01) with $2.36 \, 10^{-2}$.  

\end{itemize}


\begin{remark}
We have also tested other sparse NMF techniques and they could not match the results obtained by sNMF, especially for high sparsity requirement. In particular, we tested the following standard formulation using only one penalty parameter (see, e.g., \cite{KP07}) 
\[
\min_{U, V \geq 0} ||M-UV||_F^2 + \mu \sum_i ||U_{:i}||_1, 
\] 
and the algorithm of Hoyer \cite{Hoy}. 
\end{remark}



\subsection{Hubble Telescope Hyperspectral Image} \label{Hubblesec}


The Hubble dataset consists of 100 spectral images of the Hubble telescope, $128 \times 128$ pixels each \cite{PPP06}. It is composed of eight materials\footnote{These are true Hubble satellite material spectral signatures provided by the NASA Johnson Space Center.}; see the fourth row on  Figure~\ref{BasisH}. 
The preprocessing took about one minute (about 0.5 second per image)\footnote{The \matlab \, function \emph{lsqlin} for solving CLLS problems was again much slower (about ten times) than \emph{quadprog} with active set or with interior point which were comparable in this case.}.
Figure~\ref{SampleH} displays a sample of images of the simulated Hubble database along with the corresponding preprocessed images. 
\begin{figure}[ht]
\begin{center}
\includegraphics[width=\textwidth]{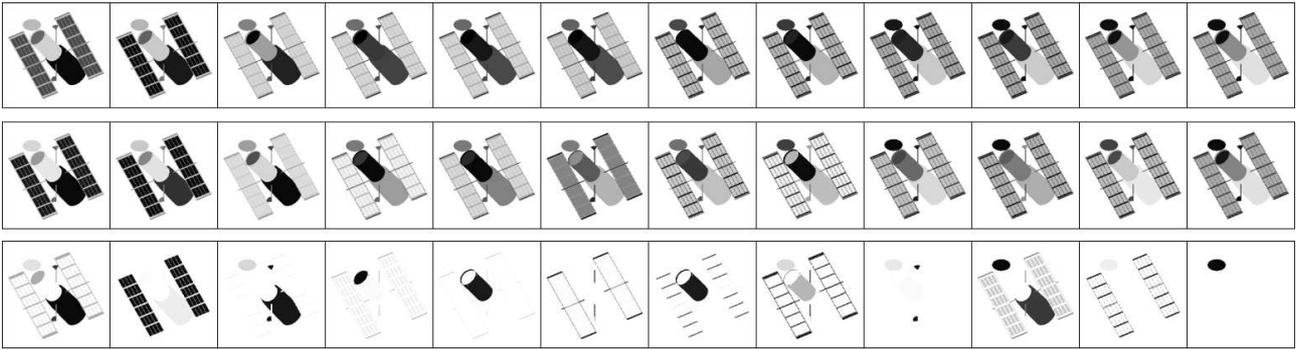}
\caption{From top to bottom: Sample of Hubble images, corresponding preprocessed images for $\epsilon = 0$ and $\epsilon = 0.01$.} 
\label{SampleH} 
\end{center}
\end{figure}
The preprocessing for $\epsilon = 0.01$ highlights extremely well the constitutive parts of the Hubble telescope: it is in fact able to extract some materials individually. 
Table~\ref{HubbleM} gives the sparsity and the value of $\rho$ for the different preprocessed matrices. 
\begin{table}[ht!]
\begin{center}
\caption{Hubble dataset: sparsity of the preprocessed matrices  $\mathcal{P}_{\epsilon}(M) = MD$ and corresponding spectral radius of $B^*_{\epsilon} = I-D$.}  
\begin{tabular}{|c||c|c|c|}
\hline
\textit{}	& $M$   &   $\mathcal{P}(M)$ &   $\mathcal{P}_{0.01}(M)$\\
\hline
s(.)			  &	57 & 57 & 80  	  	   \\
$\rho(B^*_{\epsilon})$		& 0  & 0.9808 & 0.9979 	    \\
\hline 
\end{tabular}
\label{HubbleM}
\end{center}
\end{table}

Table~\ref{HubbleT} reports the numerical results. Although sNMF(0.01) identifies a solution with slightly lower reconstruction error than Pre-NMF(0.01) (2.90 vs.\@ 2.93), it is not able to identify the constitutive materials properly while Pre-NMF(0.01) \emph{perfectly separates all eight constitutive materials}. 
It is also important to point out that the solutions generated by Pre-NMF(0.01) with different initializations correspond in most cases\footnote{We used 100 random initializations and obtained 61 times the optimal decomposition (in the other cases, it is always able to detect at least six of the eight materials).} to this optimal decomposition  while the solutions generated by sNMF are typically very different (and with very different objective function values). This indicates that the NMF problem corresponding to the preprocessed matrix is more well posed\footnote{Of course, in general, even if an NMF formulation has a unique global minimum (up to permutation and scaling), it will still have many local minima. Therefore, even in that situation, solutions generated with standard nonlinear optimization algorithms might still be rather different for different initializations.}. 
\begin{table}[ht!]
\begin{center}
\caption{Comparison of the relative approximation error and sparsity for the Hubble dataset.}
\begin{tabular}{|c||c|c||c|c|}
\hline
\textit{}	& Plain   &   Improved &  $s(U)$ & $s(V)$ \\ 
\hline
SVD			  &	 \hspace{1cm} 0.01 \hspace{1cm} &   	0.01  & 	 58	& 0  \\
\hline	
NMF			  &	 0.06   &  	0.05  & 	 58.02	& 2.25 \\
\hline	
Pre-NMF(0) & \hspace{1cm} 0.08 (0.08)  &	 0.07   &  59.16 	& 0.13 \\
sNMF(0)    &  0.37   & 0.36   & 64.14	 	&  0.63  \\ 
\hline 
Pre-NMF(0.01) & \hspace{1.2cm} 14.08 (75.09)$^{\dagger}$   &	  2.93 &   93.71	& 0 \\
sNMF(0.01)    &  3.39   &   	2.90  & 	 93.94	& 0  \\ 
\hline 
\end{tabular} 
\label{HubbleT}
\vspace{-0.2cm}
\begin{flushleft}$^{\dagger}$\footnotesize{Notice that for $\epsilon = 0.01$, the solution obtained using $V = V'Q^{-1}$ has a very high reconstruction error; the reason being that $Q = (I-B^*_{\epsilon})$ is close to being singular since $\rho(B^*_{\epsilon}) = 0.9979$.} 
\end{flushleft}
\end{center}
\end{table}

The comparison between sNMF(0) and Pre-NMF(0) is also interesting: the basis elements generated by Pre-NMF(0) (see second row of Figure~\ref{BasisH}) identify the constitutive materials much more effectively as six of them are almost perfectly extracted, while sNMF(0) only identifies one (while another is extracted as two separate basis elements). 
\begin{figure}[ht!] 
\begin{center}
\includegraphics[width=14cm]{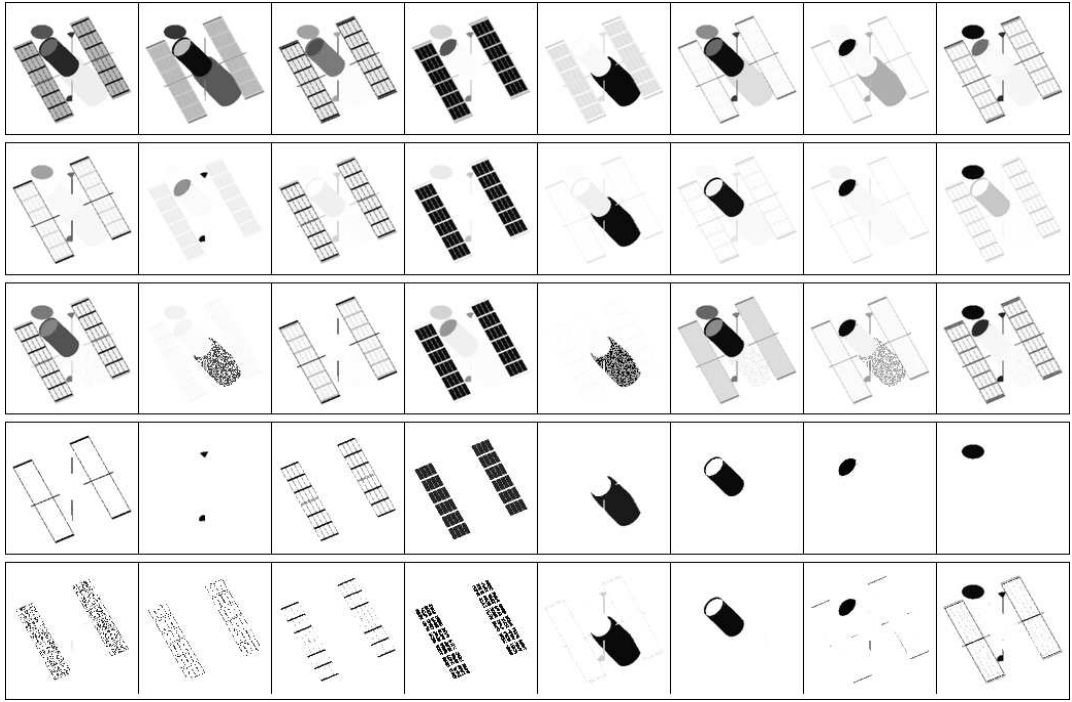}
\caption{From top to bottom: basis elements for the Hubble dataset obtained by NMF, Pre-NMF(0), sNMF(0), Pre-NMF(0.01) and sNMF(0.01).} 
\label{BasisH} 
\end{center}
\end{figure}



\section{Conclusion and Further Research} \label{concl}

In this paper, we introduced a completely new approach to make NMF problems more well posed  and have sparser solutions. It is based on the preprocessing of the  nonnegative data matrix $M$: given $M$, we compute an inverse positive matrix $Q$ such that the preprocessed matrix $\mathcal{P}(M) = MQ$ remains nonnegative and is sparse. The computation of $Q$ relies on the resolution of constrained linear least squares problems (CLLS). We proved that the preprocessing is well-defined, invariant to permutation and scaling of the columns of matrix $M$, and preserves the rank of $M$ (as long as the vertices of $\conv(\theta(M))$ are non repeated). 
 
Because $\mathcal{P}(M)$ is sparser than $M$, the corresponding NMF problem will be more well posed and have sparser solutions. In particular, we were able to show that 
\begin{itemize}
\item Under the separability assumption of Donoho and Stodden \cite{DS03}, the preprocessing is optimal as it identifies the vertices of the convex hull of the columns of $M$. 
\item Since any rank-two matrix satisfies the separability assumption, the preprocessing is optimal for any rank-two matrix. 
\item In the exact rank-three case (i.e., $M = UV$, $\rank(M) = \rank_+(M) = 3$), the preprocessing can be used to make the set of optimal solutions of the NMF problem finite. 
We conjecture that, generically, it makes it unique and that this result holds for higher rank matrices. 
\end{itemize}

We also proposed a more general preprocessing that relaxes the constraint that $\mathcal{P}(M)$ has to be nonnegative, which is able to deal better with noisy and sparse matrices. Moreover, it generates sparser preprocessed matrices hence sparser NMF solutions. We experimentally showed the effectiveness of this strategy on  facial and hyperspectral image datasets. In particular, it performed competitively as a state-of-the-art sparse NMF technique based on $\ell_1$-norm penalty functions. It is robust to high sparsity requirement and no parameters have to be tuned in the course of the optimization process. Only one parameter has to be chosen which will allow the user to generate more or less sparse preprocessed matrices. \\

The main drawback of the technique seems to be its computational cost: $n$ CLLS problems in $n$ variables and $m+n$ constraints have to be solved (where $n$ in the number of columns of $M$) for a total computational cost of the order of $\mathcal{O}(n^{4.5})$ (using \matlab \, on a standard laptop, it limits $n$ to be smaller than 1000 for a few hours of computation). 
It would then be particularly interesting to investigate strategies to speed up the preprocessing. Using faster solvers is one possible approach (probably in detriment of the accuracy), e.g., based on first-order methods\footnote{We are  currently developing an ADM algorithm, along with Ting Kei Pong, which allowed us to preprocess the CBCL dataset in about 10 hours with $10^{-3}$ relative accuracy; the code is available upon request and should be available soon.}. 
 Another possibility would be to use the following heuristic: since the preprocessing removes from each column of $M$ a linear combinations of the other columns, one could use only a subset of $k$ columns of $M$ to be subtracted from the other columns of $M$. This amounts to fixing variables to zero in the CLLS problems and would reduce the computational complexity to $\mathcal{O}(n k^{3.5})$.  This subset of columns could for example be selected such that its convex hull has a large volume, see, e.g., \cite{KCD09} for a possible heuristic; or such that they form the best possible basis for the remaining columns (i.e., use a column subset selection algorithm); see \cite{BMD09} and the references therein.  

Finally, a particularly challenging direction for research would be to design other data  preprocessing techniques for NMF. One approach would be to  characterizing the set of inverse positive matrices better: in this paper, we only worked with the subset of invertible M-matrices.  
For example, the matrix\footnote{We thank Mariya Ishteva for providing us with this example.}  
\[
M = \left( \begin{array}{ccc}
0 & 1 & 1 \\ 
1 & 0 & 1 \\
1 & 1 & 0 \end{array} \right)
\]
would not be modified by our preprocessing (because each column contains a zero entry corresponding to positive ones in all other columns) although its NMF is not unique (cf.\@ Section~\ref{nonu}). In fact, we have 
\[
MQ =  \left( \begin{array}{ccc}
0 & 1 & 1 \\ 
1 & 0 & 1 \\
1 & 1 & 0 \end{array} \right) 
\left( \begin{array}{ccc}
-1 & 1 & 1 \\ 
1 & -1 & 1 \\
1 & 1 & -1 \end{array} \right) = 2 
\left( \begin{array}{ccc}
1 & 0 & 0 \\ 
0 & 1 & 0 \\
0 & 0 & 1 \end{array} \right), 
\] 
where $Q$ is inverse positive with  $Q^{-1} = \frac{1}{2}M$, and the NMF of $MQ$ is unique. 
This examples shows that working with a larger set of inverse positive matrices would allow to obtain sparser preprocessed data matrices, hence more well-posed NMF problems with sparser solutions.  


\section*{Acknowledgment} 

The author acknowledges a discussion with Mariya Ishteva about uniqueness issues of NMF  which motivated the study of inverse-positive matrices in this context.   
The author would also like to thank K.C.\@ Sivakumar and F.-X. Orban de Xivry for helpful discussions on inverse positive matrices and on the problem of finding the closest stable matrix to a given one, respectively, and Stephen Vavasis for carefully reading and commenting a first draft of this manuscript.

\bibliographystyle{siam}
\bibliography{preprocNMF}

\appendix

\section{Proof for Theorem~\ref{finite3}} \label{Appa}

In this section, we prove that the function $f_k$ defined in Theorem~\ref{finite3}  is continuous and made up of pieces which are either constant or strictly convex (which we refer to as piecewise constant/strictly convex). The construction described below is the same as the one proposed by Aggarwal et al.\@~\cite{ABOS89} and we refer the reader to that paper for more details. The novelty of our proof is to use that construction to show that $f_k$ is piecewise constant/strictly convex (it was already shown to be continuous and nondecreasing in \cite{ABOS89}). 

\begin{proof} 
Let $x(t_1)$ be on the boundary of $P$ and define the sequence $x(t_2)$, $\dots$, $x(t_{k+1})$ as in Theorem~\ref{finite3} (clock-wise). 
As shown by Aggarwal et al.\@~\cite{ABOS89}, the function $f_k(t_1) = t_{k+1}$ only depends on 
\begin{enumerate} 
\item The sides of $P$ on which the points $x(t_i)$ $1 \leq i\leq k+1$ lie ; 
\item The intersections of the segments $[x(t_i),x(t_{i+1})]$ $1 \leq i\leq k$ with $Q$ ;  
\end{enumerate}
and, given that these sides and intersections do not change, $f_k$ is continuously differentiable and can be characterized in closed form (see below). These sides and intersections will change when either 
\begin{itemize} 
\item One of the points $x(t_i)$ switches from one side of the boundary of $P$ to another. These points correspond to the vertices of $P$ ($P$ has at most $m$ vertices since it is a polygon defined with $m$ inequalities); or, 

\item One of the intersections of the segments $[x(t_i),x(t_{i+1})]$ $1 \leq i\leq k$ with $Q$ changes. There is a one-to-one correspondence between these points and the sides of $Q$ ($Q$ has at most $n$ vertices hence at most $n$ sides). 
\end{itemize}
These points where the description of $f_k$ changes (and where $f_k$ is not continuously differentiable) are called the contact change points. Turning around the boundary of $P$, we might encounter more than $m+n$ such points. However, two contact change points corresponding to the same change are associated with the same sequence $x(t_i)$ $1 \leq i \leq k+1$ hence the same solution to the NPP. In fact, both sequences must share at least one point (either a vertex of $P$ or the intersections of a line containing a side of $Q$ with the boundary of $P$) which implies, by construction, that they are the same. 
Therefore, there are at most $m+n$ contact change points corresponding to different sequences $x(t_i)$ $1 \leq i \leq k+1$ on the boundary of $P$ \cite{ABOS89}. 

It remains to show that the pieces of $f_k$ between two contact change points are either constant or strictly convex. \\ 


Let us then construct the function $f_k$ between two contact change points. Without loss of generality, we may assume that the perimeter of the outer polygon $P$ is equal to one (otherwise scale the polygons $P$ and $Q$ accordingly), and that the parametrization $x$ of the boundary of $P$ has the following property: the distance traveled when following the boundary between $x(s)$ and $x(t)$ is equal to $|(s - \left\lfloor  s \right\rfloor) - (t - \left\lfloor  t \right\rfloor)|$. In particular, if $0 \leq s \leq t \leq 1$, then the distance traveled between $x(t)$ and $x(s)$ along the boundary of $P$ is $t-s$. We may also assume without loss of generality that $x(0) = (0,0)$ is the vertex on $P$ preceding $x(t_1)$ and that $x(t_1) = (0,t_1)$: this amounts to translating and rotating $P$ and $Q$. We also define (see Figure~\ref{scf} for an illustration) 
\begin{itemize}
\item $q = (q_1,q_2)$,  the tangent point on $Q$ between  $x(t_1)$ and $x(t_2)$. 
\item $\theta$, the angle between the sides of $P$ on which $x(t_1)$ and $x(t_2)$ are. 
\item $p$, the intersection between the sides on which $x(t_1)$ and $x(t_2)$ are (note that $p$ is on the boundary of $P$ if and only if there is one and only one vertex of $P$ between $x(t_1)$ and $x(t_2)$). 
\item $d$, the distance between $x(0)$ and $p$. 
\item $s$, the distance between $p$ and $x(t_2)$. 
\item $a$, the projection of $q$ on the line $[x(0),p]$. 
\item $b$, the projection of $x(t_2)$ on the line $[x(0),p]$. 
\end{itemize}
\begin{figure}[ht!]
\begin{center}
\includegraphics[width=12cm]{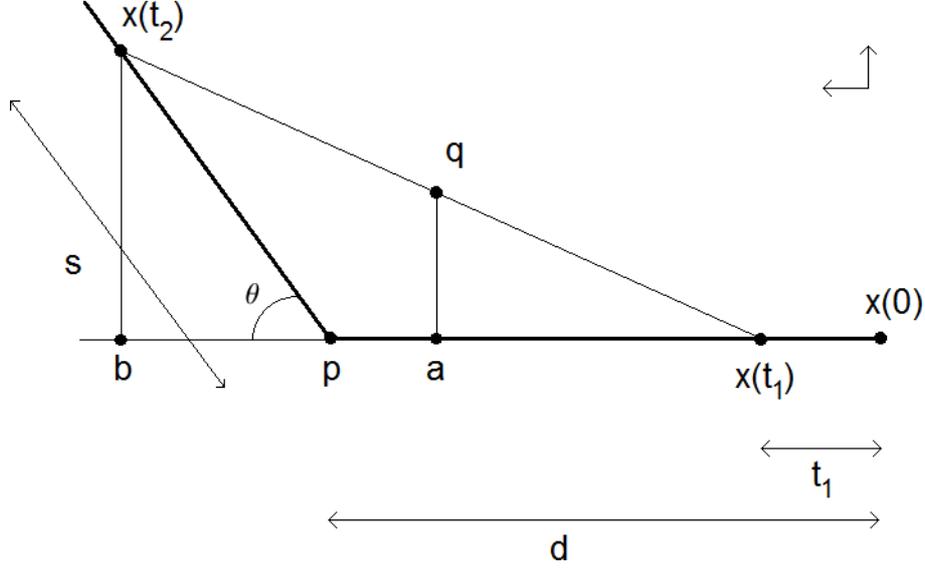}
\caption{Construction of the function $f_1$ between two contact change points (see Aggarwal et al.\@~\cite[Figure~3]{ABOS89} for a similar illustration).} 
\label{scf}
\end{center}
\end{figure}

\noindent \textbf{Case 1:} The point $q$ is on the same side as $x(t_1)$. This implies  that $x(t_2) = p$ for any $t_1 < q_1$ and no other points of the sequence is changed since $x(t_2)$ remains the same. 
Therefore, 
the function $t_{k+1} = f_k(t_1)$ is constant. (Notice that $x(q_1)$ is a contact change point since $x(t_2)$ will switch side when $t_1 = q_1$.)  \\

\noindent \textbf{Case 2:} The point $q$ is on the same side as $x(t_2)$. This implies that $x(t_2) = q$ for any $t_1 < d$. 
Therefore, 
the function $t_{k+1} = f_k(t_1)$ is constant. (Notice that the next contact change point will be the first vertex of $P$ that $x(t_1)$ crosses.)  \\

\noindent \textbf{Case 3:}  The point $q$ is not on the same side as $x(t_1)$ or $x(t_2)$ (i.e., it is in the interior of $P$). Using the similarity between the triangles $\Delta x(t_1) a q$ and $\Delta x(t_1) b x(t_2)$, we have that \cite[Equation~(1)]{ABOS89} 
\[
\frac{q_2}{q_1-t_1} = \frac{s \sin(\theta)}{d-t_1+ s \cos(\theta)}, 
\]
implying
\[
s = \frac{q_2}{\sin(\theta)} \frac{d - t_1}{ q_1 - q_2 \cot(\theta) - t_1} = g_1(t_1).  
\]
Let us show that $g_1(t_1)$ is strictly convex, i.e., $g_1''(t_1) > 0$.  
Since $q$ is not on the same side as $x(t_1)$ or $x(t_2)$, we have $q_2 > 0$ and $0 < \theta < \pi$ implying $\frac{q_2}{\sin(\theta)} > 0$. Hence it suffices to show that $h(t_1) = \frac{d-t_1}{l-t_1}$ is strictly convex, where $l = q_1 - q_2 \cot(\theta)$. 
Since $s > 0$ and $d > t_1$, we must have $l - t_1 > 0$. (Notice that $x(l)$ is a contact change point. In fact, for $t_1 = l$, the segments $[x(t_1),q]$ and $[p,x(t_2)]$ become parallel implying that the intersection of $Q$ with the segment $[x(t_1),x(t_2)]$ will change.) 

We then have 
\[
h'(t_1) = \frac{d-l}{(l-t_1)^2}. 
\] 
Since $h$ is a strictly increasing function of $t_1$ \cite{ABOS89}, $h'(t_1) > 0$ 
hence $d > l$ and  
\[
h''(t_1) = 2 \frac{d-l}{(l-t_1)^3} > 0, 
\]
so that $g_1(t)$ is strictly convex. Finally, we have 
\[
f_1(t_1) = t_2 = c_1 + s = c_1 + g_1(t_1),
\] 
where either
\begin{itemize}
\item $c_1 = 0$ and $g_1$ is a constant (cases 1.\@ and 2.). 
\item $c_1$ is an appropriate constant and $g_1$ is an increasing and strictly convex function (case 3.). 
\end{itemize} 
By construction, the same relationship will apply between $t_2$ and $t_3$ with  
\[
f_2(t_1) = t_3 = c_2 + g_2(s)  = c_2 + g_2(g_1(t_1)), 
\]
where $c_2$ is an appropriate constant and $g_2$ is either constant, or strictly convex and increasing. After $k+1$ steps, we have 
\[
f_k(t_1) = t_{k+1} = c_k + g_k(s)  = c_k + \left(g_k \circ g_{k-1} \circ \dots \circ g_1\right)(t_1),  
\]
where $c_k$ is an appropriate constant and the functions $g_i$ are either constant, or  strictly convex and increasing. 
If one of the functions $g_i$ $1 \leq i \leq k$ is constant, then $f_k$ is constant.  Otherwise the function $f_k(t_1) = c_k + (g_{k-1} \circ \dots \circ g_1)(t_1)$ is strictly convex since it is a constant plus the composition of strictly convex and \emph{increasing} functions. (In fact, the composition of one-dimensional increasing and strictly convex functions is increasing and strictly convex.)  
\end{proof}

\end{document}